\documentclass[times,review,10pt]{elsarticle}
\usepackage[a4paper, total={11.4cm, 21.1cm}]{geometry}
\usepackage{amssymb}
\usepackage{amsthm}
\usepackage{color}
\usepackage{amsmath}
\usepackage{enumitem}
\usepackage{graphicx}
\usepackage{caption}
\usepackage{booktabs}
\usepackage{subcaption}
\usepackage[T1]{fontenc}
\usepackage[utf8]{inputenc}
\usepackage{afterpage}
\usepackage[numbers]{natbib}
\usepackage{multirow}
\usepackage[hyphens,spaces,obeyspaces]{url}
\usepackage[hidelinks]{hyperref}
\hypersetup{colorlinks=true,linkcolor=blue,urlcolor=blue}
\usepackage{algorithm}
\usepackage{algpseudocode}
\usepackage{comment}

\theoremstyle{plain}
\newtheorem{theorem}{Theorem}

\newtheorem{definition}[theorem]{Definition}

\newtheorem{lemma}[theorem]{Lemma}

\newtheorem{remark}[theorem]{Remark}

\usepackage{amssymb}
 \usepackage[table,xcdraw]{xcolor}
\usepackage{pdflscape}

\makeatletter
\def\ps@pprintTitle{%
 \let\@oddhead\@empty
 \let\@evenhead\@empty
 \def\@oddfoot{\reset@font\hfil}%
 \let\@evenfoot\@oddfoot}
\makeatother

\begin{document}

\begin{frontmatter}

\title{On Generalized Naive Bayes with Continuous Features}

\author[inst1]{Ábrahám Papp}
\author[inst2]{Botond Szilágyi}
\author[inst1]{Edith Alice Kovács}

\address[inst1]{Department of Analysis and Operations Research, Institute of Mathematics, Budapest University of Technology and Economics}

\address[inst2]{Department of Chemical and Environmental Process Engineering, Faculty of Chemical Technology and Biotechnology, Budapest University of Technology and Economics}

\begin{abstract}
The  Generalized Naive Bayes (GNB) model was introduced for discrete and categorical random variables as an extension of classic Naive Bayes.  We now accommodate the GNB framework to continuous explanatory variables. A central result of the paper is that structure learning of the GNB  depends only on the pair copulas of the bi-variate marginals. 
We proved that the GNB structure can be assigned to the basis of a matroid, therefore we give greedy algorithms for finding the optimal GNB structure on the training data, in sense of minimizing Kullback-Leibler divergence. Three cases are considered: joint Gaussian distribution, then a more flexible model where we suppose the dependence structure to be described by a Gaussian copula with arbitrary marginals, and an even more flexible case where the joint continuous probability distribution is arbitrary, i.e. copula and marginal distributions are arbitrary. A method for model reduction, based on the newly introduced concept of GNB forest is given.
We close the paper by comparing the newly introduced GNB classification results to other classical "glass-box" algorithms on real datasets.
\end{abstract}

\begin{highlights}
\item The Generalized Naive Bayes (GNB) structure is extended to the case of continuous explanatory variables.
\item It is proven that finding the GNB structure, depends only on the joint copula of the feature pairs.
\item We discuss the combinatorial properties of the GNB structures.
\item Greedy algorithms for GNB structure discovering under three dependence scenarios are given.
\item Classification algorithms are given using the whole and reduced GNB structure.
\end{highlights}

\begin{keyword}
 Naive Bayes\sep  Gaussian GNB\sep General GNB\sep Gaussian copula GNB \sep  classification\sep structure learning 
\MSC 62C12 \sep 62C10 \sep 62-07
\end{keyword}

\end{frontmatter}

\section{Introduction}
Naive Bayes classifier is one of the most popular machine learning algorithms due to its simplicity, efficiency and easy interpretation, which is appealing to experts in various domains. Therefore, it is considered to be one of the top 10 data mining algorithms \cite{zhang2021attribute}, \cite{wu2008top}. However, most real-world domains involve continuous variables. A common practice to deal with continuous variables is to discretize them, with a subsequent loss of information. In this paper we are especially interested in cases when the explanatory variables are continuous.

To illustrate the ubiquitous interest in this algorithm we provide a glimpse of some recent applications of it.
Gaussian Naive  Bayes classification is applied for detector pulse discrimination, see \cite{petschke2019supervised}. In \cite{aguilar2025xnb} a new class-specific feature selection is proposed, instead of using the same features for all classes. For probability estimation authors use kernel density estimation. A new efficient classification weighted Naive Bayes classifier was introduced in \cite{ou2025novel}. The probabilities were weighted based on a mean of three weights mutual information, chi-square (independence test) and Gini impurity calculated on discretized data. We consider very important to mention here the study \cite{brandao2025optimization} where  an approach for building optimized machine learning models for the classification of opinions on social media posts is proposed. Their experiments shown that although RNN architectures are widely recognized for their ability to handle sequential data, classical algorithms such as Naive Bayes and SVM still offer comparable performance, especially when preprocessing techniques were used. They showed that Naive Bayes outperformed RNN on most datasets. 

An emerging problem today is the management of decentralized data sources while preserving data privacy. The paper \cite{torrijos2024federated} proposes a new federated approach for Naive Bayes (NB) classification, assuming discrete variables. Their approach federates a discriminative variant of NB, sharing meaningless parameters instead of conditional probability tables. Therefore, this process is more reliable against possible attacks. 
 
\vspace{3mm}
 
 The NB algorithm is based on the assumption that the explanatory variables are conditionally independent given the class variable (target variable). This is not generally true for real-life problems.
Many papers are concerned with the improvement of this remarkable algorithm. These improvements follow mainly two directions. Some research show that selecting some attributes beforehand might result in better classification accuracy and better generalization. Another improvement might be achieved by relaxing the conditional independence assumption.
Essential results in this direction, using continuous predictors without discretizations, are introduced in the following papers. In \cite{friedman1998bayesian} a former method, called Tree Augmented Naive Bayes (TAN) \cite{friedman1997bayesian} which was defined for discrete attributes, was extended to continuous attributes having Gaussian or mixture or Gaussian distributions. Furthermore, they proposed a hybrid model that can simultaneously accommodate both continuous and discrete variables. In \cite{perez2006supervised}, the application of conditional Gaussian distributions was further investigated and adapted to this framework. The study also introduced the $k$-Dependence Bayesian classifier, originally proposed for the discrete setting in \cite{sahami1996learning}. In Sahami's paper \cite{sahami1996learning}, the $k$-dependence classifier includes the structure of a Naive Bayes classifier and allows each feature to have at most $k$ parent features.
 In \cite{perez2006supervised} the authors improved the former method and called it filter $k$-dependence Bayes classifier.
In \cite{perez2009bayesian}, the authors examined the classification framework from the perspective of Bayesian networks, which represent conditional independence relationships among random variables using directed acyclic graphs. They proposed several flexible classifiers based on kernel density estimation, including the Flexible Naive Bayes, Flexible Tree-Augmented Naive Bayes, and Flexible $k$-Dependence Bayesian classifiers (FKDB). In the FKDB model, each predictor variable is allowed to have at most $k$ parent predictors in addition to the class variable, thereby providing greater modeling flexibility while controlling structural complexity. Conditional mutual information between the features with respect to the classifier is considered in their greedy algorithms.
A novel selective Naive Bayes algorithm was introduced in \cite{chen2020novel}, where after ordering the features based on the mutual information between the features and target variable, each new model is obtained from the previous one by adding one feature greedily. The "nest" model is chosen based on its predictability power. In \cite{blanquero2021variable} authors gave a new version of Naive Bayes classifier dealing with datasets with correlated patterns. The method involves reducing variables through clustering, based on their dependencies. The goal was to select a combination of features that are as independent as possible.

There are also some recent papers dealing with $k$-dependence classifiers. In \cite{wang2022semi} for each node up to $k$ parents are added in greedy way taking into account the conditional log-likelihood. Another paper concerning with the introduction of higher order dependencies into a Bayesian network classifier is \cite{wang2024learning}. Their idea is to start from a TAN structure and to add directed edges based on conditional entropy.  Although increasing the value of $k$ may raise the risk of  overfitting, fixing $k$ may constrain the model flexibility. To address these issues in \cite{meng2026k}, the authors propose $K$-free dependence Bayesian classifiers, which can learn an adaptive number of parent nodes for each attribute. To search its optimal structure, they sequentially evaluate the candidate submodels either by minimizing the mean squared error or maximizing the classification accuracy.

A very interesting new paper \cite{wang2025improving} improves the Gaussian Naive Bayes classifier on imbalanced datasets through coordinate-based minority feature mining, using coordinate transformation based on local relative density changes. In \cite{ou2025relaxed} a relaxed Naive Bayesian classifier was introduced based on the maximum dependent attribute groups, is proposed to alleviate the conditional independence assumption and then to obtain better prediction performance than traditional Naive Bayes and its most of variants.

Ensemble learning can also lead to performance improvement. In \cite{ren2022stochastic} random Bayes forest is proposed by introduction of randomness (via a probability distribution on the mutual information) in root selection, children selection and parent selection. The classification is delivered by voting.

A comparison of the three main methods for handling continuous variables: the normal method, the kernel method, and discretization is given in \cite{bouckaert2004naive}. They showed that no single method is universally better than the other. The authors present three model-selection methods based on cross-validation that can significantly improve overall performance of Naive Bayes classifiers, outperforming any of the three popular methods on their own. 

The present paper aims to relax the conditional independence assumption by introducing a structure that allows dependencies among the explanatory attributes. We create the theoretical background for the case of continuous attributes, then we extend the graphical structure allowing more complex dependencies. We discuss on the optimality of the greedy algorithms introduced. We also compare classification results to other "glass-box" methods like support vector machine (SVM) and logistic regression (LR).
We mention here the paper \cite{ng2001discriminative} of Ng et.al.  which compares Naive Bayes algorithm with logistic regression.  They claimed that there exist two distinct regimes of performance between the generative and discriminative classifiers with regard to the training-set size. Another paper \cite{xue2008comment}
suggests through simulation that
the existence of the two distinct regimes may not be so reliable. In addition, for real world datasets, so far there is no theoretically correct, general criterion for choosing between the
discriminative and the generative approaches for classification. The choice depends on the confidence we have in the correctness of the specification of either $p(y|\mathbf{x})$ or $p(\mathbf{x}, y)$.

\vspace{2mm}

The remainder of this paper is organized as follows. In Section~\ref{sec:preliminaries}, key concepts and main results from probability theory, information theory and graph theory needed are introduced.
Section~\ref{sec:gnb_weight} introduces the weights assigned to the approximation under different assumptions on the joint distribution of the explanatory variables. This section contains the proof that the structure of GNB depends only on the copula of the features, which gives an absolutely new perspective to this field.

In Section~\ref{sec:combinatorial background}, we discuss the combinatorial background of the GNB structures and give greedy algorithms which guarantee to find the optimal structure on the training data in the sense of minimizing Kullback-Leibler divergence under different dependence assumptions. Moreover we introduce the $k$-GNB model which allows even more complex dependence structure between the continuous explanatory variables.  In Section~\ref{Sec:Classification} two classification procedure are given. The first one uses all explanatory feature. We give a reduction method based on $f1$ score which uses a subset of features and based on this we introduce the second classification procedure.
In section \ref{Sec:Numerical results} numerical results and comparisons with other glass-box algorithms are presented. We close the paper with conclusions and future work \ref{sec:Conclusions}.

\section{Preliminaries}
\label{sec:preliminaries}
Naive Bayes is a fast and effective classification method with a clear theoretical foundation, although its primary drawback is the conditional independence assumption. 

In this paper, we will use the extension of the Naive Bayes structure to the Generalized Naive Bayes (GNB) structure by allowing dependencies between the explanatory variables. However, we want to exploit beside their dependencies the conditional independencies between them also. The GNB structure was introduced as a special case of the discrete cherry trees in a recent paper \cite{kovacs2025generalized}. Now, we approach the case when the explanatory variables are continuous and are not conditionally independent given the class variable. Previous papers concerned with allowing dependence between explanatory variables typically do only discuss on conditional independence when they relate the structure to Bayes networks, which are described by directed acyclic graphs \cite{perez2006supervised}, \cite{perez2009bayesian}, \cite{friedman1998bayesian},\cite{wang2022semi}.

The GNB model with continuous explanatory variables is a special probabilistic mixed graphical model that includes both a discrete target variable and continuous explanatory variables. Therefore the concepts we use are coming from different fields like probability theory, graph theory, information theory, and optimization. 

However, in general the structure learning of the probabilistic graphical models is challenging, due to the combinatorial search required across all potential structures is NP complete \cite{chickering1996learning}.
In our approach the dimension of the marginal distributions involved remains bounded, this way, the search space for finding the best-fitting structure to the real distribution described by the sample data, is also massively reduced.

The following subsections summarize key concepts and preliminary results essential for the new findings presented in the paper.

\subsection{The GNB structures}
The GNB graph structure was introduced in \cite{kovacs2025generalized} as a special case of the cherry-tree structure introduced in \cite{kovacs2010approximation} \cite{szantai2012hypergraphs}, see  \ref{Appendix:cherry tree} for the general definition of the cherry tree. Since we use the same graph structure as in \cite{kovacs2025generalized} we define it directly.

\begin{definition}
\label{def:GB_gstructure}
A  \textbf{Generalized Naive Bayes graph-structure} on $V=D\cup\left\{0\right\}=\\$ $ =\left\{ 0,1,\ldots ,d\right\}$ (vertex 0 is assigned to $Y$), is constructed as follows:
\begin{itemize}[itemsep=-2mm,topsep=-2mm]
\item Step 1. The smallest GNB structure on three vertices is given by the interconnected triplet $\left(
0,i_{1},i_{2}\right) $, where $i_{1},i_{2},\in D$. 
\item Step k. Let us suppose that the vertices $i_{1},i_{2},\ldots
,i_{k}\in D$ are already in the GNB structure. We add a new vertex $i_{k+1}\in D$ to the existing GNB structure by connecting it to vertex $0$ and an already connected
vertex $i_{m}\in \left\{ i_{1},i_{2},\ldots ,i_{k}\right\} $. The vertex $i_m$ is
called the  \emph{mother of vertex} $i_{k+1}, k>0$, denoted by $\mu (i_{k+1})$.
\end{itemize}
\end{definition}

\begin{figure}
    \centering
    \includegraphics[width=0.8\textwidth]{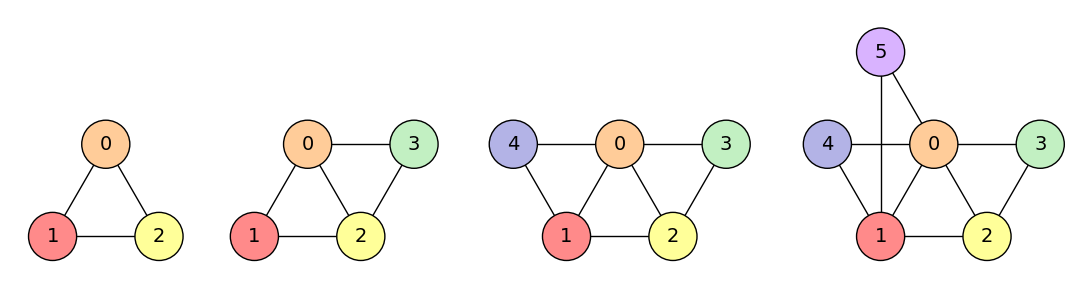}
    \vspace{-2mm}
    \caption{Construction of a GNB graph on 6 vertices. $0$ corresponds to the target variable while the other vertices to the explanatory variables}
    \label{fig:GNB_constr}
    \vspace{-3mm}
\end{figure}

We call \textbf{GNB-junction tree } the structure assigned to the GNB graph structure the following way:

\begin{enumerate}[itemsep=-2mm, topsep=-2mm] 
\item  The set of vertices in each $3$- element maximum clique (fully connected subgraph) is called a \emph{cluster} containing its $3$ vertices (a node in the junction tree graph).
\item Two $3$-element clusters are connected if the following two conditions are fulfilled: (a) the clusters share $2$ elements; (b) if a set of $2$ elements is contained by $m$ clusters, these clusters will be connected tree-like by $m-1$ edges, see second row of Figure \ref{fig:GNB_junction}.
\item The $2$-element set given by the intersection of two connected clusters is called a \emph{separator}. The number of clusters containing it is called the \emph{multiplicity of the separator}.
\end{enumerate}

For an illustration of a GNB-structure, corresponding to a GNB graph, see first row in Figure \ref{fig:GNB_junction}. The junction trees associated with a GNB graph structure in second row of Figure \ref{fig:GNB_junction} do not have a unique graph representation, but all can be represented in a compact form Figure \ref{fig:GNB_junction}, third row.

\begin{figure}[h]
    \centering
\includegraphics[scale=0.5]{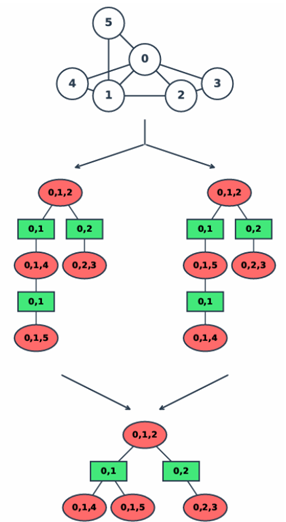}
    \vspace{-2mm}
    \caption{ The GNB structure represented as a chordal graph (first row), corresponding GNB junction trees (second row) and represented in a compact junction tree form (third row). The clusters are colored in red, the separators are colored in green.}
    \label{fig:GNB_junction}
\end{figure}

Let $\mathbf{X}=\left( X_{1},\ldots ,X_{d}\right)^{T}$ \ a random vector of the explanatory variables
and  $D$ $=\{1,...,d\}$ the set of indices. The GNB-junction tree over $V= D\cup 0$ given by the set of clusters $\mathcal{C}$, the set of separators $\mathcal{S}$ together with the separators multiplicity set $\mathcal{M}$ denoted by $(V,\mathcal{C},\mathcal{S},\mathcal{M})$.
For convenience we will use the following notations: the probability distribution of a random vector with its components $( X_{i_1},\ldots ,X_{i_k})$ is shortened as follows: $p(\mathbf{X}_{i_1,\ldots ,i_k})$ and $p(\mathbf{X}_A)$ where $A=\{i_1,\ldots ,i_k\}\subset V$.

The probability distribution corresponding to a GNB-junction tree $(V,\mathcal{C},\mathcal{S},\mathcal{M})$ is given by the following formula:
\begin{equation}
\label{eq:Cherry_tree}
p_{GNB}( \mathbf{x}) =\frac{\prod\limits_{C\in \mathcal{C}}p\left( \mathbf{x}_{C}\right) }{\prod\limits_{S\in \mathcal{S}}p\left( \mathbf{x}_{S}\right)
^{v_{S}-1}}=\frac{\prod\limits_{(l,m,0)\in \mathcal{C}}p\left( \mathbf{x}_{l,m,0}\right) }{\prod\limits_{(s,0)\in \mathcal{S}}p\left( \mathbf{x}_{s,0}\right)
^{v_{S}-1}}
\end{equation}
where $v_{S}$ is the number of clusters that are connected through the
separator $S=(s,0)$ (multiplicity of $S$); the formula stands for all realizations of $\mathbf{X}$.

%Without loss of generality, this defintion can be written also as:
%\begin{definition}
%\label{GNB_pd}
%The  \emph{GNB probability distribution} associated to the GNB graph structure and a $c$-ordering $1,\ldots,d$ is the following:
%\[
%P_{GNB}=\frac{\prod\limits_{i=2}^{d}P\left( Y,X_{\mu
%(i)},X_{i}\right) }{\prod\limits_{i=3}^{d-1}P\left( Y,X_{_{\mu
%(i)}}\right) }.
%\]
%\end{definition}

One can note that, regardless of which graphical representation of a junction tree we use (see, for example, Figure \ref{fig:GNB_junction}), the probability distribution associated has the same formula. In our example this is:
\[
P_{GNB}\left(\mathbf{x}\right) =\frac{p( \mathbf{x}_{0,1,2}) p( \mathbf{x}_{0,2,3}) p( \mathbf{x}_{0,1,4})p( \mathbf{x}_{0,1,5})}{(p\left( \mathbf{x}_{0,1}\right))
^{2}p\left( \mathbf{x}_{0,3}\right)}.
\]

\subsection{ Results related to copula theory}
In 1959 Abe Sklar introduced copula functions \cite{sklar1959fonctions}, which are capable of modeling the dependence between variables separate from the one-dimensional marginals.

Let $\boldsymbol{X} = (X_1,\dots,X_d) \in \mathbb{R}^d$ be an $d$-dimensional continuous random vector, with $p$ joint p.d.f. and $F$ joint c.d.f., meaning that
$$F(x_1,\dots,x_d) = \mathbb{P}(X_1 \le x_1, \dots, X_d \le x_d).$$

\begin{definition} A function $C: [0,1]^d \to [0,1]$ is an $d$-dimensional copula function, if it satisfies the following properties:
\begin{itemize}
\item $C(u_1,\dots,u_d)$ is strictly increasing in all $u_i$ components.
\item $C(u_1,\dots,u_{i-1},0,u_{i+1},\dots,u_d) = 0$ for all $u_k \in [0,1]$, $k \ne i$, $i \in \{1,\dots,d\}$.
\item $C(1,\dots,1,u_i,1,\dots,1) = u_i$ for all $u_i \in [0,1]$, $i \in \{1,\dots,d\}$.
\item $C$ is $d$-increasing, meaning that for all $(u_{1,1},\dots,u_{1,d})$ and $(u_{2,1},\dots,u_{2,d})$ in $[0,1]^n$, if for all $i$, $u_{1,i} < u_{2,i}$, then

$$\sum_{i_1 = 1}^2 \cdots \sum_{i_d = 1}^2 (-1)^{\sum_{j=1}^d i_j} C(u_{i_1,1},\dots,u_{i_d,d}) \ge 0$$
\end{itemize}

\end{definition}

In the theory of copulas, Sklar's theorem (1959) \cite{sklar1959fonctions} can be regarded as a central theorem:

\begin{theorem}
\label{sklar_theorem} Any multivariate c.d.f. $F$ can be written in the following way:

$$F(x_1,\dots,x_n) = C(F_1(x_1),\dots,F_n(x_n))$$

Moreover, if the one-dimensional marginals $F_1,\dots,F_d$ are continuous, then $C$ in unique.
\end{theorem}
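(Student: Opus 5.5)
We prove Sklar's theorem (Theorem~\ref{sklar_theorem}) in two parts: existence of $C$, and uniqueness of $C$ when the marginals $F_1,\dots,F_n$ are continuous. We do the continuous case first, since that is the setting of the paper, and then extend to the general case.

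\textbf{Continuous marginals.} Let $F_i^{-1}(u)=\inf\{x: F_i(x)\ge u\}$ denote the generalized inverse. When $F_i$ is continuous, the probability integral transform gives $U_i=F_i(X_i)\sim\mathrm{Unif}[0,1]$, and moreover $\{F_i(X_i)\le F_i(x_i)\}=\{X_i\le x_i\}$ almost surely. We define $C$ as the joint c.d.f.\ of $(U_1,\dots,U_n)$. Then
\[
C(F_1(x_1),\dots,F_n(x_n))=\mathbb{P}(U_1\le F_1(x_1),\dots,U_n\le F_n(x_n))=F(x_1,\dots,x_n),
\]
which gives existence.

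We then check that $C$ is a copula. Groundedness holds because $U_i\ge 0$ and $\mathbb{P}(U_i\le 0)=0$. The uniform margins property holds because each $U_i$ is uniform. The $n$-increasing inequality holds because the alternating sum equals the probability of a box, which is nonnegative. The monotonicity property also follows from the c.d.f.\ structure.

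The paper's definition asks for \emph{strictly} increasing in each $u_i$, which is stronger than what a general c.d.f.\ provides. I would establish only nondecreasing monotonicity, which follows from $n$-increasingness together with groundedness, and note that this is what is used. I expect this to be a definitional obstacle rather than a mathematical one.

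\textbf{Uniqueness.} For continuous $F_i$, the intermediate value theorem together with the limits $0$ and $1$ at $\mp\infty$ gives $\mathrm{Ran}\,F_i\supseteq(0,1)$. Hence for any $u\in(0,1)^n$ we can pick $x_i$ with $F_i(x_i)=u_i$, so $C(u)=F(x_1,\dots,x_n)$ is forced. On the boundary, groundedness and the uniform margins property determine $C$; alternatively, copulas are Lipschitz, since $|C(u)-C(v)|\le\sum_i|u_i-v_i|$ follows from $n$-increasingness and uniform margins, so $C$ is fixed on all of $[0,1]^n$ by continuity. Any other copula $C'$ satisfying the representation agrees with $C$ on the dense set $(0,1)^n$, and therefore everywhere.

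\textbf{General marginals.} Without continuity the statement claims only existence, and this is the main obstacle. I would use the distributional transform: let $V\sim\mathrm{Unif}[0,1]$ be independent of $\mathbf{X}$, and set $U_i=F_i(X_i^-)+V\bigl(F_i(X_i)-F_i(X_i^-)\bigr)$. A short computation shows that $U_i$ is uniform and that $X_i=F_i^{-1}(U_i)$ almost surely. Taking $C$ to be the joint c.d.f.\ of $(U_1,\dots,U_n)$ then gives $F(x)=C(F_1(x_1),\dots,F_n(x_n))$, because $\{F_i^{-1}(U_i)\le x_i\}=\{U_i\le F_i(x_i)\}$.

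The delicate step is verifying that $\{F_i^{-1}(U_i)\le x_i\}$ and $\{U_i\le F_i(x_i)\}$ coincide up to null sets. This relies on the Galois inequality $F_i^{-1}(u)\le x\iff u\le F_i(x)$, which holds by right-continuity of $F_i$.

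As a fallback for the general case, one can instead extend the subcopula defined on $\prod_i\overline{\mathrm{Ran}\,F_i}$ by multilinear interpolation. This extension is standard but tedious, so I would cite it rather than reprove it.
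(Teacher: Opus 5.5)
The paper does not prove this statement. It quotes Sklar's theorem with a pointer to Sklar (1959) and only uses the density identity that follows from it. So you are supplying a proof rather than reconstructing one, and yours is essentially correct.

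\textbf{Comparison of routes.} The classical argument (Sklar's, as in Nelsen's monograph) is analytic. It defines a subcopula on $\prod_i\mathrm{Ran}\,F_i$, extends it by continuity to the closure, and then extends to $[0,1]^n$ by multilinear interpolation. Your route is probabilistic: the law of $(F_1(X_1),\dots,F_n(X_n))$ in the continuous case, and Rüschendorf's distributional transform in general. That gives a short existence proof with an explicit copula and no extension lemma. The price is a lemma you assert rather than prove: that $U_i$ is uniform and $F_i^{-1}(U_i)=X_i$ almost surely.

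The step you call delicate is actually exact, since the Galois inequality gives $\{F_i^{-1}(U_i)\le x_i\}=\{U_i\le F_i(x_i)\}$ pointwise (with $\inf\emptyset=+\infty$). The real work is the almost-sure identity. From $F_i(X_i^-)\le U_i\le F_i(X_i)$ you get $F_i^{-1}(U_i)\le X_i$. Strict inequality forces $U_i=F_i(X_i^-)$ with $F_i$ constant on some $[y,X_i)$. That in turn requires either $V=0$ at an atom, or $X_i$ at the right end of a flat stretch of $F_i$, and both are null events.

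The analytic route, for its part, avoids randomization and almost-sure arguments. It also isolates where the freedom in $C$ lies for discontinuous marginals, namely off $\prod_i\overline{\mathrm{Ran}\,F_i}$.

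\textbf{Two small corrections.}

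First, the strict monotonicity in the paper's definition is not just stronger than what you can prove; it is impossible. It contradicts groundedness, since for $u_1=0$ the function $C$ is identically $0$ in the other variables. Even in the interior it fails: the unique copula $\max(u_1+u_2-1,0)$ of $(X,-X)$ is flat on $\{u_1+u_2\le 1\}$. So the existence claim is false under the literal definition, and reading the axiom as ``nondecreasing'' is necessary, not a convenience.

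Second, groundedness and uniform margins determine $C$ on the boundary of $[0,1]^n$ only when $n=2$. For $n\ge 3$, a point such as $(u_1,1,u_3)$ is not fixed by these axioms. It is your Lipschitz-plus-density argument that actually closes the uniqueness proof, so it should be the primary one rather than the alternative.
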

 The following formula connects the original density function with the copula density: 
\begin{equation}
\label{eq:joint_f}
p(x_1,\dots,x_d) = c(F_1(x_1),\dots,F_n(x_d)) \cdot p_1(x_1) \cdot \dots \cdot p_d(x_d)
\end{equation}
Where $c$ is the joint p.d.f. of $\boldsymbol{U} = (U_1,\dots,U_n)$ or the mixed partial derivative of $C$. 

An interesting overview is given in \cite{sklar1996random}
\subsection{Information theoretical concepts}
\label{subsec:inf_theor_concepts}
Since the best-fitting GNB probability distribution is achieved by minimizing the Kullback-Leibler divergence we recall the necessary information theoretical concepts. For more details, see the fundamental book \cite{cover2012elements}. 

We will use an equivalent formulation of \textbf{conditional entropy} that quantifies the amount of information needed to describe the outcome of a random variable $Y$  given a random variable $X$: $H(Y|X)=H(X,Y)-H(X)$.

We will use the following formula for the information content of a bi-variate mixed random variable consisting of $X$ continuous and $Y$ discrete random variables:
\begin{equation}
\label{information- entropy}
    I(\mathbf{X}) = H(X) + H(Y) - H(X, Y)
\end{equation}
where: $H(Y)$ stands for the entropy of a discrete random variable $Y$: $H(Y) = -\sum_{y \in \mathcal{Y}} p(y) \log p(y)$; $H(X, Y)$ is the joint entropy of a continuous variable $X$ and a discrete variable $Y$: $H(X, Y) = -\sum_{y \in \mathcal{Y}} \int_{\mathcal{X}} p(x, y) \log p(x, y) \,dx$, and $\chi$ is the domain of the random variable $X$

When a $p(\mathbf{x})$ probability density is approximated by $p_{app}(\mathbf{x})$ the quantification of the information loss is given by the \textbf{Kullback-Leibler divergence} (KL-divergence) \cite{kullback1951information}:
\begin{equation}
 \mathbf{KL}(p(\mathbf{x}),p_{app}(\mathbf{x}))=\int_{\chi}p\left( \mathbf{x}\right) \log_2 \frac{p\left( \mathbf{x}\right) }{p_{app}(\mathbf{x})}d\mathbf{x}.   
\end{equation}

KL divergence is a positive number unless $P_{app}\equiv P$, when $\mathrm{KL}=0$. The smaller the value of the KL-divergence is the better the approximation is.

The \textbf{mutual information} can be seen as a special type of KL divergence between a given $p(X_{i_1},X_{i_2})$ and an approximation given by independence: $p_{app}=p(X_{i_1})p(X_{i_2})$.

The \textbf{information content} of a random vector $\mathbf{X}=(X_{i_{1}},...,X_{i_{k}})$ can be given straightforward as a generalizations of the mutual information:
\begin{equation}
    \label{Inf_Content_density}
    I(X_{i_{1}},...,X_{i_{k}})=\int_{\mathbf{\chi}}p\left( x_{i_{1}},\dots,x_{i_{k}}\right) \log_2 \frac{p\left( x_{i_{1}}\mathbf,\dots,x_{i_{k}}\right) }{p(x_{i_{1}})\cdot \ldots \cdot p(x_{i_{k}})}d\mathbf{x}.
\end{equation}
or more general by the following formula
\begin{equation}
\label{Inf_Content}
    I(\mathbf{X})=\sum\limits_{t=i_1}^{i_k}H(X_{t})-H(\mathbf{X})
\end{equation}
We note that the information content defined above and used in the papers \cite{kovacs2010approximation} and \cite{szantai2012hypergraphs} quantifies the strength of the dependence between the random variables; it was also called by other authors as multiinformation or total correlation.

\vspace{2mm}

A fundamental theorem we use in this paper is an adaptation of the general result of the KL divergence between the real probability distribution and an approximation given by a general cherry tree \cite{szantai2012hypergraphs}. We are now interested in the case when the approximation is given by a GNB junction tree. The expression of the KL divergence between a real probability distribution and a GNB junction tree probability distribution involves marginals of order two and three corresponding to the separators and clusters, see \cite{kovacs2025generalized}: 

\begin{theorem}
The Kullback-Leibler divergence between the real distribution $P(\mathbf{X})$ and the GNB junction tree approximation (\ref{eq:Cherry_tree}) is:
\begin{align}
\label{eq:KL_general}
KL(P\left(  \mathbf{X}Y\right)  ,P_{GNB}\left(  \mathbf{X}Y\right)  ) &
=\sum\limits_{i=1}^{d}H(X_{i})+H(Y)-H(\mathbf{X,}Y)\notag \\
&  -\left(  \sum\limits_{C\in\mathcal{C}}I\left(\mathbf{X_C}\right)
-\sum_{S\in\mathcal{S}}(\nu_s-1)I\left(  \mathbf{X_S}\right)
\right),
\end{align}
where the clusters $C$ and the separators $S$ contain the index 0 corresponding to the target variable.
\end{theorem}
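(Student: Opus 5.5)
We expand the KL divergence directly from its definition and then reorganize the resulting entropy terms into information contents. By definition,
\[
KL(P,P_{GNB})=\sum_{y}\int p(\mathbf{x},y)\log p(\mathbf{x},y)\,d\mathbf{x}-\sum_{y}\int p(\mathbf{x},y)\log p_{GNB}(\mathbf{x},y)\,d\mathbf{x}.
\]
The first term is simply $-H(\mathbf{X},Y)$, with the mixed entropy as defined in Subsection~\ref{subsec:inf_theor_concepts}.

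For the second term, we substitute formula (\ref{eq:Cherry_tree}). Its logarithm splits into $\sum_{C\in\mathcal{C}}\log p(\mathbf{x}_C)-\sum_{S\in\mathcal{S}}(\nu_S-1)\log p(\mathbf{x}_S)$. Each summand depends only on the coordinates in $C$ or $S$, so integrating (and summing over $y$) against the full joint density reduces to integrating against the marginal $p(\mathbf{x}_C)$ or $p(\mathbf{x}_S)$. This marginalization step needs Fubini/Tonelli together with finiteness of the entropies involved; we assume these are finite, as is implicit in the statement. It yields
\[
-\sum_{y}\int p\log p_{GNB}=\sum_{C}H(\mathbf{X}_C)-\sum_{S}(\nu_S-1)H(\mathbf{X}_S).
\]

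Next we convert the marginal entropies to information contents via (\ref{Inf_Content}): $H(\mathbf{X}_C)=\sum_{t\in C}H(X_t)-I(\mathbf{X}_C)$, and similarly for $S$, with $H(Y)$ playing the role of $H(X_0)$ as the discrete component, as in (\ref{information- entropy}). This gives
\[
KL=-H(\mathbf{X},Y)+\Big[\sum_{C}\sum_{t\in C}H(X_t)-\sum_{S}(\nu_S-1)\sum_{t\in S}H(X_t)\Big]-\Big(\sum_C I(\mathbf{X}_C)-\sum_S(\nu_S-1)I(\mathbf{X}_S)\Big).
\]

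The main obstacle is a combinatorial counting step: showing that the bracket equals exactly $\sum_{i=1}^d H(X_i)+H(Y)$. That is, each vertex $t\in V$ must satisfy
\[
\#\{C\ni t\}-\sum_{S\ni t}(\nu_S-1)=1.
\]
We prove this by induction along the construction in Definition~\ref{def:GB_gstructure}.

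For the base case, Step 1 has the single cluster $(0,i_1,i_2)$ and no separators, so every vertex is counted once. At each later step, adding $i_{k+1}$ with mother $i_m$ creates the new cluster $\{0,i_m,i_{k+1}\}$. This cluster is glued to an existing cluster containing $\{0,i_m\}$, so the multiplicity of the separator $\{0,i_m\}$ increases by one; if $\{0,i_m\}$ was not yet a separator, it now appears with multiplicity $2$. We then check the count vertex by vertex:
\begin{itemize}
\item for vertices $0$ and $i_m$, the cluster count and the separator correction each increase by one, so the difference stays $1$;
\item for the new vertex $i_{k+1}$, it lies in exactly one cluster and no separator, so its count is $1$;
\item for all other vertices, nothing changes.
\end{itemize}

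The same argument is the running-intersection property of the junction tree: the clusters containing a given vertex form a subtree, and a tree has one more node than it has edges. Substituting the identity into the bracket gives (\ref{eq:KL_general}).
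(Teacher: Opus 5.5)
Your proof is correct and follows the standard route. The paper does not prove this statement itself; it imports it from the general cherry-tree KL formula it cites, whose proof is exactly your expansion into $-H(\mathbf{X},Y)+\sum_{C}H(\mathbf{X}_C)-\sum_{S}(\nu_S-1)H(\mathbf{X}_S)$ followed by rewriting these entropies as information contents. Your counting identity $\#\{C\ni t\}-\sum_{S\ni t}(\nu_S-1)=1$, checked by induction along Definition~\ref{def:GB_gstructure}, is the same ``each vertex lies in one more cluster than separator'' fact the paper invokes in the proofs of Theorems~\ref{W_GNB_Gauss} and~\ref{weight_general}.
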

Since KL depends on the structure of the GNB only through the amount in the parenthesis, we introduce the following definition.
\begin{definition}
    We call the weight of the approximation the following sum:
    \begin{equation}
    \label{weight of cherry}
        W=\sum\limits_{C\in \mathcal{C}}I(\mathbf{X}_C)-\sum%
\limits_{S\in \mathcal{S}}\left( \nu _{s}-1\right) I(\mathbf{X}_S)
    \end{equation}
\end{definition}
Since the weight is positive or zero, a larger weight results in a better approximation. Therefore, we conclude that it is worthwhile to find the structure with the largest weight.

In \cite{kovacs2025generalized} it was proved that the GNB approximation provides an approximation to the training data that is at least as good as that of the NB approximation

In the present paper we deal with continuous features, which do not alter the formula and our objective because all information contents are positive or $0$. (Differential entropy for example does not preserve this property in the continuous case.)
We aim to maximize the weight; a problem which occurs is that information contents depend on a discrete random variable (the classifier) and continuous random variables at the same time. Therefore we have to introduce the information content of a mixed random vector.

\vspace{2mm}

\textbf{The information contents of a mixed random vector}

We introduce the following notations which are used throughout the paper:
$p(\mathbf{x}, y)$ is the joint probability density/mass function of $\mathbf{X}$ and $Y$, $p_i(x_i)$ denotes the uni-variate marginal probability density function of $X_i$, whereas $p(\mathbf{x})$ denotes the multivariate probability density function of the random vector $\mathbf{X}=(X_1,...,X_d)$, $p(y)$ is the probability distribution of the discrete random variable (classifier)  \(Y\).
%In this section we derive the formulas, we need for our next sections.
In our present setting $\mathbf{X}$ the random vector containing continuous attributes and the discrete, target random variable $Y$ that takes on values in the set $\mathcal{Y}=\{y_1,...,y_m\}$.

\begin{theorem}
The formula for the information content of a random vector consisting of $d$ continuous explanatory variables $\mathbf{X}=(X_1,..., X_d)$ and a discrete variable $Y$ is the following:
\begin{equation}
\label{information content general}
I(\mathbf{X},Y) =\sum\limits_{i=1}^{d}H(X_{i})-\sum\limits_{y_{k}\in \mathcal{Y}}p(y_{k})H(\mathbf{X}|y_{k}).
\end{equation}
\end{theorem}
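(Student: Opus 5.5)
The plan is to start from the definition of the information content in (\ref{Inf_Content}), treating the mixed vector $(\mathbf{X},Y)$ as having $d+1$ components:
\[
I(\mathbf{X},Y)=\sum_{i=1}^{d}H(X_i)+H(Y)-H(\mathbf{X},Y),
\]
where $H(Y)$ is the discrete entropy and $H(\mathbf{X},Y)$ is the mixed joint entropy
\[
H(\mathbf{X},Y)=-\sum_{y_k\in\mathcal{Y}}\int_{\chi}p(\mathbf{x},y_k)\log p(\mathbf{x},y_k)\,d\mathbf{x},
\]
as in the bivariate case (\ref{information- entropy}). The whole argument then reduces to showing the chain rule
\[
H(\mathbf{X},Y)=H(Y)+\sum_{y_k\in\mathcal{Y}}p(y_k)H(\mathbf{X}|y_k)
\]
for mixed random vectors. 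Substituting this into the expression above cancels $H(Y)$ and gives (\ref{information content general}) directly.

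To prove the chain rule, I would factor the mixed density as $p(\mathbf{x},y_k)=p(y_k)\,p(\mathbf{x}|y_k)$ for every $y_k$ with $p(y_k)>0$. Classes with $p(y_k)=0$ contribute nothing under the convention $0\log 0=0$. Taking logarithms splits $\log p(\mathbf{x},y_k)$ into $\log p(y_k)+\log p(\mathbf{x}|y_k)$, so the joint entropy separates into two terms:
\[
H(\mathbf{X},Y)=-\sum_k \log p(y_k)\int_\chi p(\mathbf{x},y_k)\,d\mathbf{x}\;-\;\sum_k p(y_k)\int_\chi p(\mathbf{x}|y_k)\log p(\mathbf{x}|y_k)\,d\mathbf{x}.
\]
In the first term, $\int_\chi p(\mathbf{x},y_k)\,d\mathbf{x}=p(y_k)$, so it equals $H(Y)$. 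The second term is, by definition, $\sum_k p(y_k)H(\mathbf{X}|y_k)$, where $H(\mathbf{X}|y_k)$ is the differential entropy of the conditional density of $\mathbf{X}$ given $Y=y_k$.

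I expect the only real obstacle to be interpretation and integrability rather than algebra. Differential entropies can be infinite or negative, so I would assume that each $H(X_i)$ and each $H(\mathbf{X}|y_k)$ is finite; this is needed to justify splitting the integral of the logarithm into a sum. I would also stress that $I(\mathbf{X},Y)$ remains a KL divergence, between $p(\mathbf{x},y)$ and $p(y)\prod_i p_i(x_i)$, and is therefore nonnegative, even though the individual terms on the right-hand side may be negative.

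As a consistency check, I would rederive the formula directly from the KL form in (\ref{Inf_Content_density}):
\[
I(\mathbf{X},Y)=\sum_k\int_\chi p(\mathbf{x},y_k)\log\frac{p(\mathbf{x}|y_k)}{\prod_i p_i(x_i)}\,d\mathbf{x}.
\]
Splitting the logarithm gives $-\sum_k p(y_k)H(\mathbf{X}|y_k)$ from the numerator. From the denominator it gives $\sum_i H(X_i)$, after marginalizing over $y$ and over all coordinates of $\mathbf{x}$ except $x_i$.
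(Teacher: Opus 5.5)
Your proof is correct and follows essentially the same route as the paper's: both factor $p(\mathbf{x},y_k)=p(y_k)\,p(\mathbf{x}|y_k)$ inside the mixed joint entropy to get the chain rule $H(\mathbf{X},Y)=H(Y)+\sum_k p(y_k)H(\mathbf{X}|y_k)$, then substitute it into $I(\mathbf{X},Y)=\sum_i H(X_i)+H(Y)-H(\mathbf{X},Y)$ so that $H(Y)$ cancels. Your finiteness assumptions on the differential entropies and your cross-check from the KL form are sound additions that the paper leaves out.
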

For the proof see \ref{Appendix:Information content}

\section{The weight of a GNB approximation under different assumptions on the joint distribution of the explanatory variables}
\label{sec:gnb_weight}
In the Subsection \ref{subsec:inf_theor_concepts}, we pointed out that the weight of the approximation of the approximation connects the GNB structure to the goodness of fit. In this chapter we examine three cases. In the first case, we assume that the explanatory variables follow a multivariate Gaussian distribution then we discuss the general case when the joint probability distribution is expressed by the means of copulas, with arbitrary marginal distributions.
In the last case we consider Gaussian copula to model the dependence structure, and arbitrary marginals to model the features.

\subsection{Features following joint Gaussian distribution}
\label{Subsec: joint Gauss assumption}
In this subsection, we suppose the target to be a discrete variable and the explanatory variables to have a joint Gaussian distribution with mean vector $\mu_\mathbf{X}$ and covariance matrix $\mathbf{K}_\mathbf{X}$. This seems to be quite restrictive, but there are many cases when this works very well. 
In order to calculate the weight of the approximation, the mixed information content of orders two and three have to be calculated. We also will use that all marginals of a Gaussian distribution are Gaussian.

\vspace{2mm}
\textbf{The weight of the Gauss-GNB structure}

%ide jön az egyváltozós entrópia
    
First, we derive the formula for the information content of a bi-variate random vector consisting of a continuous Gaussian random variable $X$ and a discrete random variable $Y$. For convenience, we use the natural logarithm. The uni-variate Gaussian entropy is given by formula: $H(X) =\frac{1}{2}\log (2\pi e\sigma ^{2})$.

The mutual information between a continuous variable $X$ with Gauss distribution and a discrete variable $Y$ is given by the following formula. (The proof is given in \ref{Appendix: I(X,Y)_Gauss_proof})
\begin{eqnarray}
\label{I(X,Y)_Gauss}
I(X,Y) =\frac{1}{2}\left(\log \sigma^2_{X} -\sum\limits_{y_{k}\in \mathcal{Y}%
}p(y_{k})\log \sigma ^2
_{X|y_{k}}\right)
\end{eqnarray}

Now, we deduce a general formula for the information content of a continuous multivariate Gaussian distribution over $\mathbf{X}$ with covariance matrix $\mathbf{K}$ and a discrete random variable $Y$. 
We start from formula \ref{information content general} and use besides the formula of the entropy of a uni-variate Gaussian distribution the entropy of the $d$ dimensional Gaussian distribution:
\begin{eqnarray*}
H(\mathbf{X}) &=&\frac{1}{2}\left[ \log (2\pi e)^{d}\left\vert K_\mathbf{X}
\right\vert \right]
\end{eqnarray*}
\begin{eqnarray}
\label{I(bfX,Y)_Gauss}
I(\mathbf{X},Y) &=&\sum\limits_{i=1}^{d}H(X_{i})-\sum\limits_{y_{k}\in 
\mathcal{Y}}p(y_{k})H(\mathbf{X}|y_{k})= \notag \\
&=&\sum\limits_{i=1}^{d}\frac{1}{2}\log (2\pi e\sigma
_{X_i}^{2})-\sum\limits_{y_{k}\in \mathcal{Y}}p(y_{k})\left( \frac{1}{2}\log
\left( 2\pi \right) ^{d}+\frac{1}{2}\log \left\vert K_{\mathbf{X}|y_{k}}\right\vert +%
\frac{d}{2}\right) = \notag \\
&=&\frac{1}{2}\left[ \sum\limits_{i=1}^{d}\left( \log 2\pi +1+\log \sigma_{X_i}^{2}\right) -\left( d\log \left( 2\pi \right) +d\right)\sum\limits_{y_{k}\in \mathcal{Y}}p(y_{k})\right] - \notag \\
&&-\frac{1}{2}\sum\limits_{y_{k}\in \mathcal{Y}%
}p(y_{k})\log \left\vert K_{\mathbf{X}|y_{k}}\right\vert=  \notag\\
&=& \frac{1}{2}\left[ \sum\limits_{i=1}^{d}\log \sigma
_{X_i}^{2}-\sum\limits_{y_{k}\in \mathcal{Y}}p(y_{k})\log \left\vert
K_{\mathbf{X}|y_{k}}\right\vert \right] 
\end{eqnarray}

As we saw in the preliminaries, KL divergence depends on the GNB structure only through its weight, which depends on the information content of the clusters and the separators in the GNB structure \ref{weight of cherry}.

As a particular case for formula \ref{I(bfX,Y)_Gauss} we express  $I(X_{l},X_{m},Y)$ as 
\begin{eqnarray}
\label{I(X,Y,Z)_Gauss}
I(X_{l},X_{m},Y) =\frac{1}{2}\left[ \log \sigma _{l}^{2}+\log \sigma
_{m}^{2}-\sum\limits_{y_{i}\in \mathcal{Y}}p(y_{i})\log
|K_{X_{l}X_{m}|y_{i}}|\right] 
\end{eqnarray}

Based on \ref{I(X,Y)_Gauss} and \ref{I(X,Y,Z)_Gauss} we will deduce the formula of weight.
    
\begin{theorem}
\label{W_GNB_Gauss}
The weight of a GNB distribution corresponding to the $(V,\mathcal{C},\mathcal{S},\mathcal{M})$ GNB structure with joint Gaussian explanatory variables is:
\begin{eqnarray}
\label{weight_Gauss}
W&=&\frac{1}{2}\log \prod\limits_{m\in D}\sigma _{m}^{2}-\frac{1}{2}\underset%
{y_{i}\in \mathcal{Y}}{\sum }p(y_{i})\log \left( \prod\limits_{m\in
D}\sigma _{m|yi}^{2}\right) + \notag \\
&-&\frac{1}{2}\underset{y_{i}\in \mathcal{Y}}{%
\sum }p(y_{i})\sum\limits_{(l,m,0)\in \mathcal{C}}\log (1-\rho^2 _{lm|y_{i}})
\end{eqnarray}
\end{theorem}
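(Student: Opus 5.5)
We plan to compute the weight $W=\sum_{C\in\mathcal{C}}I(\mathbf{X}_C)-\sum_{S\in\mathcal{S}}(\nu_S-1)I(\mathbf{X}_S)$ directly, substituting the Gaussian expressions (\ref{I(X,Y)_Gauss}) for the separators $(s,0)$ and (\ref{I(X,Y,Z)_Gauss}) for the clusters $(l,m,0)$. First we rewrite the conditional determinant in each cluster term as $|K_{X_lX_m|y_i}|=\sigma^2_{l|y_i}\sigma^2_{m|y_i}(1-\rho^2_{lm|y_i})$. A cluster term then splits into a part depending only on the two single variables,
\[
\tfrac12\bigl[\log\sigma_l^2-\textstyle\sum_i p(y_i)\log\sigma^2_{l|y_i}\bigr]+\tfrac12\bigl[\log\sigma_m^2-\textstyle\sum_i p(y_i)\log\sigma^2_{m|y_i}\bigr]=I(X_l,Y)+I(X_m,Y),
\]
and a correlation part $-\frac12\sum_i p(y_i)\log(1-\rho^2_{lm|y_i})$.

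Next comes the combinatorial step. We must show that, after subtracting $\sum_S(\nu_S-1)I(X_s,Y)$, each $m\in D$ contributes $I(X_m,Y)$ exactly once. For a vertex $m$, let $c(m)$ be the number of clusters containing $m$. Since every cluster through $m$ contains the pair $\{m,0\}$, we have $\nu_{(m,0)}=c(m)$ whenever $(m,0)$ is a separator. When $(m,0)$ is not a separator, $c(m)=1$, and we treat it as a separator with $\nu=1$, which contributes nothing. In both cases the net coefficient of $I(X_m,Y)$ is $c(m)-(c(m)-1)=1$. One can also check this by induction along the construction of Definition~\ref{def:GB_gstructure}. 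Adding vertex $i_{k+1}$ with mother $i_m$ adds one cluster, which contributes $I(X_{i_{k+1}},Y)+I(X_{i_m},Y)$, and raises the multiplicity of the separator $(i_m,0)$ by one (or creates it with multiplicity $2$), which subtracts $I(X_{i_m},Y)$. The net effect is exactly $I(X_{i_{k+1}},Y)$, and the base cluster $(0,i_1,i_2)$ gives $I(X_{i_1},Y)+I(X_{i_2},Y)$.

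Summing over $m\in D$ then gives
\[
\tfrac12\log\prod_{m\in D}\sigma^2_m-\tfrac12\sum_i p(y_i)\log\prod_{m\in D}\sigma^2_{m|y_i},
\]
using $\sum_i p(y_i)=1$ to pull the logarithms out. The correlation parts of the clusters remain untouched and form the last term of (\ref{weight_Gauss}).

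The main obstacle is the bookkeeping of separator multiplicities. In particular, we must handle correctly the vertices that are never mothers, whose pair with $0$ appears in a single cluster. The induction on the construction steps settles this cleanly, so we will present that version.
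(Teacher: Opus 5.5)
Your proposal is correct and follows the paper's route: substitute the Gaussian expressions for the cluster and separator information contents, factor $|K_{X_lX_m|y_i}|=\sigma^2_{l|y_i}\sigma^2_{m|y_i}(1-\rho^2_{lm|y_i})$, and use that each feature lies in exactly one more cluster than separators (separators counted with weight $\nu_S-1$). The differences are cosmetic improvements. You regroup each cluster term as $I(X_l,Y)+I(X_m,Y)$ plus a correlation part, so the counting is done once rather than separately for the unconditional and conditional variances, and you justify the counting fact (via $\nu_{(m,0)}=c(m)$, or by induction on the construction), which the paper only asserts.
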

where $\rho^2 _{lm|y_{i}}$ stands for the square of the Pearson correlation between $X_l$ and $X_m$ filtered for the elements where the target $Y$ takes on the value $y_i$.
\begin{proof}

We write the formula of weight \ref{weight of cherry} in a particular form:

\begin{eqnarray}
W &=&\sum\limits_{(l,m,0)\in \mathcal{C}}I(X_{l},X_{m},Y)-\sum%
\limits_{(l,0)\in \mathcal{S}}\left( v_{\left( l,0\right) }-1\right)
I(X_{l},Y)= \notag \\
&=&\frac{1}{2}\sum\limits_{(l,m,0)\in \mathcal{C}}\left[ \log \sigma
_{l}^{2}+\log \sigma _{m}^{2}-\underset{y_{i}\in \mathcal{Y}}{\sum }%
p(y_{i})\log |K_{X_{l}X_{m}|y_{i}}|\right] - \notag \\
&&-\frac{1}{2}\sum\limits_{(s,0)\in \mathcal{S}}\left( v_s
-1\right) \left[ \log \sigma _{s}^{2}-\underset{y_{i}\in \mathcal{Y}}{\sum 
}p(y_{i})\log \sigma^{2}_{s|y_{i}}\right) 
\end{eqnarray}

Due to the junction tree structure each vertex appears once more in the clusters than in the separators this is why each term $\sigma_{i}, i=1\dots d$ appears a single time in the next formula. 
\begin{eqnarray}
\label{weight}
W &=&\frac{1}{2}\sum\limits_{m\in D}\log \sigma _{m}^{2}-\frac{1}{2}%
\sum\limits_{(l,m,0)\in \mathcal{C}}\underset{y_{i}\in \mathcal{Y}}{\sum }%
p(y_{i})\log |K_{X_{l}X_{m}|y_{i}}|+ \notag \\
&&+\frac{1}{2}\sum\limits_{(s,0)\in \mathcal{S}}\left( v_s-1\right) \underset{y_{i}\in \mathcal{Y}}{\sum }p(y_{i})\log \sigma_{s}^{2}|y_{i} =\frac{1}{2}\sum\limits_{m\in D}\log \sigma _{m}^{2}-\notag \\
&&-\frac{1}{2}\left[ 
\underset{y_{i}\in \mathcal{Y}}{\sum }p(y_{i})\left(
\sum\limits_{(l,m,0)\in \mathcal{C}}\log
|K_{X_{l}X_{m}|y_{i}}|-\sum\limits_{(l,0)\in \mathcal{S}} (v_
s-1) \log \sigma^{2} _{s|y_{i}}\right) \right] 
\end{eqnarray}

We calculate the determinant of the covariance matrix for a given $y_{i}$:
\begin{equation*}
|K_{X_{l}X_{m}|y_i}|=\det \left( 
\begin{array}{cc}
\sigma _{l|y_{i}}^{2} & \rho _{lm|y_{i}}\sigma _{l|y_{i}}\sigma _{m|y_{i}}
\\ 
\rho _{lm|y_{i}}\sigma _{m|y_{i}}\sigma _{l|y_{i}} & \sigma _{m|y_{i}}^{2}%
\end{array}%
\right) =\sigma _{l|y_{i}}^{2}\sigma _{m|y_{i}}^{2}(1-\rho^2 _{lm|y_{i}})
\end{equation*}

Applying the logarithm we obtain:
\begin{eqnarray*}
\log |K_{X_{l}X_{m}|y_{i}}| &=&\log \left[ \sigma _{l|y_{i}}^{2}\sigma
_{m|y_{i}}^{2}(1-\rho^2 _{lm|y_{i}})\right]  \\
&=&\log \sigma _{l|y_{i}}^{2}+\log \sigma _{m|y_{i}}^{2}+\log (1-\rho^2_{lm|y_{i}})
\end{eqnarray*}

The inner parenthesis of formula \ref{weight} becomes:
\begin{eqnarray*}
&&\sum\limits_{(l,m,0)\in \mathcal{C}}\log
|K_{X_{l}X_{m}|y_{i}}|-\sum\limits_{(l,0)\in \mathcal{S}}\left( v_{\left(
l,0\right) }-1\right) \log \sigma ^{2}_{l|y_{i}} \\
&=&\sum\limits_{(l,m,0)\in \mathcal{C}}\left( \log \sigma
_{l|y_{i}}^{2}+\log \sigma _{m|y_{i}}^{2}+\log (1-\rho _{lm|y_{i}})\right)
-\sum\limits_{(s,0)\in \mathcal{S}}( v_s -1)
\log \sigma ^{2}_{l|y_{i}}
\end{eqnarray*}

Now we apply the same idea as before exploiting the fact that terms of the form $\sigma_{l|y_{i}}^{2}$ appear in the clusters exactly once more than in the separators.

It follows that the amount of the inner parenthesis of \ref{weight} equals to:
\begin{equation*}
\sum\limits_{m\in D}\log \sigma _{m|yi}^{2}+\sum\limits_{(l,m,0)\in 
\mathcal{C}}\log (1-\rho _{lm|y_{i}}).
\end{equation*}%
Returning to the formula of the weight \ref{weight} we obtain:
\begin{eqnarray*}
W &=&\frac{1}{2}\sum\limits_{m\in D}\log \sigma _{m}^{2}-\frac{1}{2}\left[ 
\underset{y_{i}\in \mathcal{Y}}{\sum }p(y_{i})\left( \sum\limits_{m\in
D}\log \sigma _{m|yi}^{2}+\sum\limits_{(l,m,0)\in \mathcal{C}}\log (1-\rho^2
_{lm|y_{i}})\right) \right] = \\
&=&\frac{1}{2}\log \prod\limits_{m\in D}\sigma _{m}^{2}-\frac{1}{2}\underset%
{y_{i}\in \mathcal{Y}}{\sum }p(y_{i})\log \left( \prod\limits_{m\in
D}\sigma _{m|yi}^{2}\right) + \\
&-&\frac{1}{2}\underset{y_{i}\in \mathcal{Y}}{%
\sum }p(y_{i})\sum\limits_{(l,m,0)\in \mathcal{C}}\log (1-\rho^2 _{lm|y_{i}})
\end{eqnarray*}
\end{proof}
\begin{remark}
    One can see that the weight is connected to the GNB graph structure only through:
\begin{equation}
    \label{rho_Gauss}
\sum\limits_{(l,m,0)\in \mathcal{C}}\underset{y_{i}\in \mathcal{Y}}{\sum }%
p(y_{i})\log (1-\rho _{lm|y_{i}}) 
\end{equation}
which has to be minimized in order to maximize \ref{weight_Gauss}.  Therefore, we have to find the structure which minimizes \ref{rho_Gauss}
\end{remark}
Interesting to note here that this formula was also obtained in the paper \cite{perez2009bayesian}, by minimizing the description length.

\subsection{GNB with a general nonparametric joint continuous distribution between the explanatory variables} 
\label{sec:gnb_general}
Recall that the KL divergence between the joint distribution
$P\left(  \mathbf{X}Y\right)  $ and the approximation $P_{GNB}\left(\mathbf{X}Y\right) $ given by the formula \ref{eq:KL_general}, we rewrite it by using the notation $\mu(i)$ to denote the mother vertex of vertex $i$ (it follows from the construction):
\begin{align*}
KL(P\left(  \mathbf{X}Y\right)  ,P_{GNB}\left(  \mathbf{X}Y\right)  )  &
=\sum\limits_{i=1}^{d}H(X_{i})+H(Y)-\\
&  -\left(  \sum\limits_{(i,\mu\left(  i\right)  ,0)\in\mathcal{C}}I\left(
X_{i},X_{\mu\left(  i\right)  },Y\right)  -\sum_{(\mu\left(  i\right)
,0)\in\mathcal{S}}I\left(  X_{\mu\left(  i\right)  },Y\right)  \right) .
\end{align*}

One can observe, that the value of $KL(P\left(  \mathbf{X}%
Y\right)  ,P_{GNB}\left(  \mathbf{X}Y\right)  )$ depends on the GNB junction tree structure only by the amount contained in the parentheses called the weight of the GNB-structure:
\begin{equation}
\label{*}
W_{GNB}(\mathbf{X,}Y\mathbf{)=}\sum\limits_{(i,\mu\left(  i\right)
,0)\in\mathcal{C}}I\left(  X_{i},X_{\mu\left(  i\right)  },Y\right)
-\sum_{(\mu\left(  i\right)  ,0)\in\mathcal{S}}I\left(  X_{\mu\left(
i\right)  },Y\right) .
\end{equation}
There is no need to use the multiplicity of the separator since terms of the form $X_{\mu_(i)}$ may denote multiple times to the same variables.

We discuss the most general case, making no assumption on the copula or the marginals.
We prove the following essential lemma.

\begin{lemma}
\label{lemma}
The information content of a random vector  $\mathbf{X}$ with the probability density  $p(\mathbf{x})=c_\mathbf{X} ({F_1(x_1),\dots,F_d(x_d)})$ is equal with the information content of the vector $ \mathbf{U}$ described $c_\mathbf{X}(\mathbf{u})$
\begin{equation}
\label{copula information}
 I\left( \mathbf{X}\right) =I_{c_{\mathbf{x}}} (\mathbf{U})
\end{equation}
\end{lemma}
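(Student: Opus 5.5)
We work directly from the integral definition of the information content, formula (\ref{Inf_Content_density}), and substitute the copula factorization (\ref{eq:joint_f}). The statement should be read as: $\mathbf{X}$ has continuous marginals $F_1,\dots,F_d$ with densities $p_1,\dots,p_d$. By Sklar's theorem (Theorem \ref{sklar_theorem}) its joint density is $p(\mathbf{x})=c_{\mathbf{X}}(F_1(x_1),\dots,F_d(x_d))\prod_{i}p_i(x_i)$. The random vector $\mathbf{U}=(F_1(X_1),\dots,F_d(X_d))$ then has density $c_{\mathbf{X}}$ on $[0,1]^d$ and uniform marginals.

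\textbf{Step 1: the integrand depends only on the copula.} Inserting the factorization into (\ref{Inf_Content_density}), the univariate densities in the numerator cancel with the product $p_1(x_1)\cdots p_d(x_d)$ in the denominator. This gives
\begin{equation*}
I(\mathbf{X})=\int_{\chi} c_{\mathbf{X}}(F_1(x_1),\dots,F_d(x_d))\prod_{i=1}^d p_i(x_i)\,\log c_{\mathbf{X}}(F_1(x_1),\dots,F_d(x_d))\,d\mathbf{x}.
\end{equation*}

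\textbf{Step 2: change of variables.} Substitute $u_i=F_i(x_i)$ coordinatewise. Then $du_i=p_i(x_i)\,dx_i$, so the factor $\prod_i p_i(x_i)\,d\mathbf{x}$ becomes $d\mathbf{u}$, and the domain $\chi$ maps onto $[0,1]^d$ up to a null set. The integral becomes $\int_{[0,1]^d}c_{\mathbf{X}}(\mathbf{u})\log c_{\mathbf{X}}(\mathbf{u})\,d\mathbf{u}$.

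\textbf{Step 3: identify the result as $I_{c_{\mathbf{X}}}(\mathbf{U})$.} Each $U_i$ is uniform on $[0,1]$, so its marginal density is $1$. Hence the integral from Step 2 is exactly (\ref{Inf_Content_density}) applied to $\mathbf{U}$:
\begin{equation*}
I_{c_{\mathbf{X}}}(\mathbf{U})=\int c_{\mathbf{X}}(\mathbf{u})\log\frac{c_{\mathbf{X}}(\mathbf{u})}{1\cdots 1}\,d\mathbf{u}.
\end{equation*}
Equivalently, via (\ref{Inf_Content}), the uniform marginals have entropy $H(U_i)=0$, so $I(\mathbf{U})=-H(\mathbf{U})$. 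This equals the integral above.

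\textbf{Main obstacle.} The delicate point is justifying the substitution in Step 2 when some $F_i$ is not strictly increasing, for example when a density vanishes on an interval of the support. In that case $F_i$ is not a bijection onto $[0,1]$. I would handle this with the general change-of-variables formula for pushforward measures: the integrand is a measurable function of $(F_1(x_1),\dots,F_d(x_d))$, so
\begin{equation*}
\mathbb{E}\bigl[g(F_1(X_1),\dots,F_d(X_d))\bigr]=\mathbb{E}\bigl[g(\mathbf{U})\bigr],\qquad g=\log c_{\mathbf{X}}.
\end{equation*}
This holds by the definition of $\mathbf{U}$'s law, whose density is $c_{\mathbf{X}}$ by (\ref{eq:joint_f}). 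This formulation avoids invertibility altogether; the only remaining concern is integrability, which holds whenever $I(\mathbf{X})$ is finite.
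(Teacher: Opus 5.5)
Your proposal is correct and follows essentially the same route as the paper. You substitute the Sklar factorization into the integral, cancel the marginal densities, change variables via $u_i=F_i(x_i)$, and use that the uniform marginals of $\mathbf{U}$ have density $1$ (equivalently zero entropy); the paper phrases this as $I(\mathbf{X})=-H_{c_{\mathbf{X}}}(\mathbf{U})=I_{c_{\mathbf{X}}}(\mathbf{U})$ via the Ma--Sun copula-entropy identity. Your pushforward formulation, $I(\mathbf{X})=\mathbb{E}[\log c_{\mathbf{X}}(F_1(X_1),\dots,F_d(X_d))]=\mathbb{E}[\log c_{\mathbf{X}}(\mathbf{U})]$, usefully tightens the paper's informal ``$p_i(x_i)\,dx_i=du_i$'' step, which tacitly assumes each $F_i$ is invertible.
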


\begin{proof}
An important result published by Ma and Sun in their 2008/2011
paper \cite{ma2011mutual} is that the "Mutual Information is equal to copula entropy".

To unify the notations we now revise it as follows:
Let us denote by $c_{\mathbf{X}}(%
\mathbf{u})$ the copula density assigned to the random
vector $\mathbf{X}$, by defining vector  $\mathbf{U}^{\prime }$s coordinate in the following way: $U_{i}=F_{i}(x)$. 

Copula entropy is defined similar to differential entropy of a continuous random vector as follows:
\[
H_{c_{\mathbf{X}}}\left( \mathbf{U}\right) =\underset{(0,1)^{%
%TCIMACRO{\U{b4}}%
%BeginExpansion
{\acute{}}%
%EndExpansion
d}}{-\int }c_{\mathbf{X}}(\mathbf{u})\log c_{\mathbf{X}}(%
\mathbf{u})d\mathbf{u}.
\]

The formula of information content for a random vector with continuous component \ref{Inf_Content_density} is:
\begin{equation}
\label{I}
I\left( \mathbf{X}\right) =\underset{\chi }{\int }p(\mathbf{x})\log \frac{p(%
\mathbf{x})}{\underset{i=1}{\overset{d}{\prod }}p(x_{i})}d\mathbf{x}
\end{equation}

We use the following equation which follows directly from \ref{eq:joint_f}
\[
p(\mathbf{x})=c_{\mathbf{X}}(\mathbf{u})\underset{i=1}{\overset{%
d}{\prod }}p(x_{i})
\]%
where 
\[
\mathbf{u}=\left( F_{1}(x_{1}),\ldots ,F_{d}(x_{d})\right) 
\]
and substitute in \ref{I} :
\begin{eqnarray*}
I\left( \mathbf{X}\right)  &=&\underset{\chi }{\int }c_{\mathbf{X}}(\mathbf{u}) \underset{i=1}{\overset{d}{\prod }}%
p(x_{i})\log \frac{c_{\mathbf{X}}(\mathbf{u}) \underset{i=1}{\overset{d}{\prod }}p(x_{i})}{\underset{%
i=1}{\overset{d}{\prod }}p(x_{i})}d\mathbf{x} \\
&=&\underset{\chi }{\int }c_{\mathbf{X}}(\mathbf{u})\underset{i=1}{\overset{d}{\prod }}p(x_{i})\log c_{\mathbf{X}}(\mathbf{u}) d\mathbf{x.}
\end{eqnarray*}

Since $F_{i}(x_{i})=u_{i}$  this implies $p_{i}(x)dx_{i}=du_{i}$ it follows directly that: 

\begin{equation}
\label{copula entropy}
I(\mathbf{X})=-H_{c_{\mathbf{X}}}\left( \mathbf{U}\right) ,
\end{equation}
The result in formula \ref{copula entropy} was published in \cite{ma2011mutual}.

We now make a further essential step. We express the information content of the random
vector $\mathbf{U}$ corresponding to $\mathbf{X}$.

\[
I{c_{\mathbf{X}}}\left( \mathbf{U}\right)=\sum_{i=1}^{d}H(U_{i})-H_{c_{\mathbf{X}}}(\mathbf{%
U})
\]

Since $U_{i}$ are uniform distributions on $(0,1)$ their entropy are equal to 0, so:%
\begin{equation}
\label{copula info}
I_{c_{\mathbf{X}}}\left( \mathbf{U}\right)=-H_{c_{\mathbf{X}}}(\mathbf{U})
\end{equation}

Formula \ref{copula entropy} and \ref{copula info} implies the result of the lemma.
\end{proof}

\begin{remark}
    Information content of a continuous random vector $\mathbf{X}$ depends only on the copula which describes the joint dependence, not on the marginals. Therefore it can be estimated as the KL divergence between the transformed data $\mathbf{U}$ and the independence between the components, which in this case means the product of uniform marginals.
\end{remark}

In the next theorem we give the formula of the weight of the GNB approximation without any restriction on copula or marginals.

\begin{theorem}
\label{weight_general}
The weight of a fitted GNB probability distribution  is given by the formula  
\begin{align}
\label{9}
W_{GNB}&=\sum\limits_{i=1}^{d}H(X_{i})-\sum\limits_{y_{k}\in\mathcal{Y}%
}p(y_{k})\sum\limits_{i=1}^{d}H(X_{i}|y_{k})+\notag \\+&\sum\limits_{(i,\mu\left(
i\right)  ,0)\in\mathcal{C}}\sum\limits_{y_{k}\in\mathcal{Y}}p(y_{k}%
)I_{c_{X_{i},X_{\mu\left(  i\right)  }|y_{k})}}(U_i,U_{\mu(i)}).
\end{align}
\end{theorem}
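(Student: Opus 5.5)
The plan mirrors the proof of Theorem~\ref{W_GNB_Gauss}, with the Gaussian entropies replaced by general differential entropies and the correlation term replaced by copula information contents via Lemma~\ref{lemma}. I start from the weight in the form (\ref{*}) and expand every mixed information content with formula (\ref{information content general}). For a cluster $(i,\mu(i),0)$ this gives
\[
I(X_i,X_{\mu(i)},Y)=H(X_i)+H(X_{\mu(i)})-\sum_{y_k\in\mathcal{Y}}p(y_k)H(X_i,X_{\mu(i)}|y_k),
\]
and for a separator $(\mu(i),0)$ it gives $I(X_{\mu(i)},Y)=H(X_{\mu(i)})-\sum_{y_k}p(y_k)H(X_{\mu(i)}|y_k)$.

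Next, for each fixed class value $y_k$, I decompose the conditional bivariate entropy as
\[
H(X_i,X_{\mu(i)}|y_k)=H(X_i|y_k)+H(X_{\mu(i)}|y_k)-I(X_i,X_{\mu(i)}|y_k).
\]
This is just the definition (\ref{Inf_Content}) applied to the conditional density $p(x_i,x_{\mu(i)}|y_k)$.

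The combinatorial step is the same counting argument used in the Gaussian case: in a GNB junction tree each vertex $m\in D$ occurs in the clusters exactly once more than in the separators (counted with multiplicity, i.e.\ once for each term in (\ref{*})). This is verified by induction on the construction in Definition~\ref{def:GB_gstructure}.
\begin{itemize}
\item Step 1 produces one cluster containing $i_1,i_2$ and no separator.
\item Each later step adds the cluster $(i_{k+1},\mu(i_{k+1}),0)$ and the separator $(\mu(i_{k+1}),0)$. The excess count of $\mu(i_{k+1})$ is unchanged, and the new vertex gets excess one.
\end{itemize}
Consequently the unconditional terms collapse to $\sum_{i=1}^d H(X_i)$, and the conditional univariate terms collapse to $-\sum_{y_k}p(y_k)\sum_{i=1}^d H(X_i|y_k)$. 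What remains is $+\sum_{\mathcal{C}}\sum_{y_k}p(y_k)I(X_i,X_{\mu(i)}|y_k)$.

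Finally, I apply Lemma~\ref{lemma} to the conditional distribution of $(X_i,X_{\mu(i)})$ given $Y=y_k$. By Sklar's theorem (Theorem~\ref{sklar_theorem}), and formula (\ref{eq:joint_f}) applied conditionally, this distribution has a copula density $c_{X_i,X_{\mu(i)}|y_k}$ built from the conditional marginals. Hence $I(X_i,X_{\mu(i)}|y_k)=I_{c_{X_i,X_{\mu(i)}|y_k}}(U_i,U_{\mu(i)})$, which yields (\ref{9}).

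The main obstacle I expect is making the conditional use of the lemma precise. The $U$'s must be defined through the class-conditional c.d.f.'s $F_{i|y_k}$, not the unconditional ones, and I must check that the conditional marginals are continuous so that the copula is unique. A secondary point is that all differential entropies involved have to be finite, so that the splitting into separate sums is legitimate. I would state this as a standing integrability assumption.
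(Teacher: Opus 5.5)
Your proposal is correct and follows essentially the same route as the paper. Both arguments expand each cluster and separator term with the mixed information-content formula, split the class-conditional bivariate entropy, collapse the univariate terms with the ``each vertex appears once more in clusters than in separators'' count, and then invoke Lemma~\ref{lemma} class by class. The rest is only added rigor: applying the lemma at the end instead of before substitution, your inductive check of the counting, and your remarks on class-conditional c.d.f.'s and finite entropies.
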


\begin{proof}
To prove this, we use the expressions of $I\left(  X_{i}%
,X_{\mu\left(  i\right)  },Y\right)  $ and $I\left(  X_{\mu\left(  i\right)
},Y\right)  .$

We apply \ref{information content general} for the particular case:
\begin{equation}
\label{cluster info}
I\left(  X_{i},X_{\mu\left(  i\right)  },Y\right)  =H(X_{i})+H\left(
X_{\mu\left(  i\right)  }\right) -\sum\limits_{y_{k}\in\mathcal{Y}%
}p(y_{k})H(X_{i},X_{\mu\left(  i\right)  }|y_{k}).
\end{equation}

and%
\begin{equation}
\label{5}
    I\left(  X_{\mu\left(  i\right)  },Y\right)  =H(X_{\mu\left(  i\right)
})-\sum\limits_{y_{k}\in\mathcal{Y}}p(y_{k})H(X_{\mu\left(  i\right)
}|y_{k})
\end{equation}

 As consequence of \ref{information- entropy} for a given $y_{k}\in\mathcal{Y}$ we have the the following expression of $H(X_{i},X_{\mu\left(
i\right)  }|y_{k}):$%
\begin{equation}
\label{cond-entropy}
H(X_{i},X_{\mu\left(  i\right)  }|y_{k})=H(X_{i}|y_{k})+H(X_{\mu\left(
i\right)  }|y_{k})-I(X_{i},X_{\mu\left(  i\right)  }|y_{k}).
\end{equation}

By Lemma \ref{lemma} for each $y_{k}\in\mathcal{Y}$ each information content of the form $I(X_{i},X_{\mu\left(
i\right)}|y_{k})$ can be expressed using the copula information formula \ref{copula information}:%
\begin{equation}
\label{2}
I(X_{i},X_{\mu\left(  i\right)  }|y_{k})=I_{c_{X_{i},X_{\mu\left(  i\right)  }|y_{k}}}(U_i,U_{\mu(i)})
\end{equation}

By substituting \ref{2} in \ref{cond-entropy} we get:%
\begin{equation}
\label{3}
H(X_{i},X_{\mu\left(  i\right)  }|y_{k})=H(X_{i}|y_{k})+H(X_{\mu\left(
i\right)  }|y_{k})-I_{c_{X_{i},X_{\mu\left(  i\right)  }|y_{k}}}(U_i,U_{\mu(i)})
\end{equation}

Now, we substitute \ref{3} in \ref{cluster info}:%
\begin{align}
\label{6}
I\left(  X_{i},X_{\mu\left(  i\right)  },Y\right)   &  =H(X_{i})+H\left(
X_{\mu\left(  i\right)  }\right) - \notag \\
&  \sum\limits_{y_{k}\in\mathcal{Y}}p(y_{k})\left[  H(X_{i}|y_{k}%
)+H(X_{\mu\left(  i\right)  }|y_{k})-I_{c_{X_{i},X_{\mu\left(  i\right)  }|y_{k}}}(U_i,U_{\mu(i)})\right] .
\end{align}

By substituting \ref{6} and \ref{5} in the formula of the weight \ref{*}:%
\begin{align}
\label{7}
W_{GNB} &  =\sum\limits_{(i,\mu\left(  i\right)  ,0)\in\mathcal{C}}%
H(X_{i})+H\left(  X_{\mu\left(  i\right)  }\right)  - \notag \\
&  -\sum\limits_{y_{k}\in\mathcal{Y}}p(y_{k})\left[  H(X_{i}|y_{k}%
)+H(X_{\mu\left(  i\right)  }|y_{k})-I_{c_{X_{i},X_{\mu\left(  i\right)  }|y_{k}}}(U_i,U_{\mu(i)})\right] \notag \\
&  -\sum_{(\mu\left(  i\right)  ,0)\in\mathcal{S}}\left[  H(X_{\mu\left(
i\right)  })-\sum\limits_{y_{k}\in\mathcal{Y}}p(y_{k})H(X_{\mu(
i)|y_{k}})\right] .
\end{align}

Let us simplify Formula \ref{7}; We now have a GNB junction tree with clusters of the form
$\left(  X_{i},X_{\mu\left(  i\right)  },Y\right) $ and separators of the form $\left( X_{\mu\left(  i\right)  },Y\right) $  .
Since these indices are vertices are in a junction tree, the
number of clusters which contain a given index $i$ is exactly larger by $1$
than the number of separators which contain it. Therefore by taking the
difference we will obtain in the weight only one time each $H(X_{i})$. This way the weight $W_{GNB}$ reduces to the following new formula:%
\begin{align}
\label{8}
W_{GNB} &  =\sum\limits_{i=1}^{d}H(X_{i}))-\notag \\
&  -\left\{  \sum\limits_{(i,\mu\left(  i\right)  ,0)\in\mathcal{C}}%
\sum\limits_{y_{k}\in\mathcal{Y}}p(y_{k})\left[  H(X_{i}|y_{k})+H(X_{\mu
\left(  i\right)  }|y_{k})-I_{c_{X_{i},X_{\mu\left(  i\right)  }|y_{k}}}(U_i,U_{\mu(i)})\right]  \right.  - \notag \\
&  -\left.  \sum_{(\mu\left(  i\right)  ,0)\in\mathcal{S}}\left[
\sum\limits_{y_{k}\in\mathcal{Y}}p(y_{k})H(X_{\mu\left(  i\right)  }%
|y_{k})\right]  \right\}  .
\end{align}

We transform the formula \ref{8} by grouping the terms containing $H(X_{i}|y_{k})$
and $H(X_{\mu\left(  i\right)  }|y_{k})$ together:%
\begin{align*}
W_{GNB} &  =\sum\limits_{i=1}^{d}H(X_{i})-\\
&  \left\{  \sum\limits_{y_{k}\in\mathcal{Y}}p(y_{k})\left(  \sum
\limits_{(i,\mu\left(  i\right)  ,0)\in\mathcal{C}}\left[  H(X_{i}%
|y_{k})+H(X_{\mu\left(  i\right)  }|y_{k})\right]  -\sum_{(\mu\left(
i\right)  ,0)\in\mathcal{S}}H(X_{\mu\left(  i\right)  }|y_{k})\right)
\right.  -\\
&  -\left.  \sum\limits_{(i,\mu\left(  i\right)  ,0)\in\mathcal{C}}%
\sum\limits_{y_{k}\in\mathcal{Y}}p(y_{k})I_{c_{X_{i},X_{\mu\left(  i\right)  }|y_{k}}}(U_i,U_{\mu(i)})\right\}  .
\end{align*}
Let us simplify the sum:%
\[
\sum\limits_{y_{k}\in\mathcal{Y}}p(y_{k})\left(  \sum\limits_{(i,\mu\left(
i\right)  ,0)\in\mathcal{C}}\left[  H(X_{i}|y_{k})+H(X_{\mu\left(  i\right)
}|y_{k})\right]  -\sum_{(\mu\left(  i\right)  ,0)\in\mathcal{S}}%
H(X_{\mu\left(  i\right)  }|y_{k})\right)  .
\]
For each realization of $Y=y_{k}$, we have
\[
\sum\limits_{(i,\mu\left(  i\right)  ,0)\in\mathcal{C}}\left[  H(X_{i}%
|y_{k})+H(X_{\mu\left(  i\right)  }|y_{k})\right]  -\sum_{(\mu\left(
i\right)  ,0)\in\mathcal{S}}H(X_{\mu\left(  i\right)  }|y_{k}).
\]
We now use a similar logic as before:  All the vertices are present in the
clusters and in the separators. Since this is a junction tree each of the vertices will be contained by one more cluster than
separators. Summarizing these observations Formula we have:%
\[
\sum\limits_{(i,\mu\left(  i\right)  ,0)\in\mathcal{C}}\left[  H(X_{i}%
|y_{k})+H(X_{\mu\left(  i\right)  }|y_{k})\right]  -\sum_{(\mu\left(
i\right)  ,0)\in\mathcal{S}}H(X_{\mu\left(  i\right)  }|y_{k})=\sum
\limits_{i=1}^{d}H(X_{i}|y_{k})
\]
and formula of the weight \ref{8} can be expressed as:
\begin{align*}
W_{GNB}&=\sum\limits_{i=1}^{d}H(X_{i})-\sum\limits_{y_{k}\in\mathcal{Y}%
}[p(y_{k})\sum\limits_{i=1}^{d}H(X_{i}|y_{k})]+ \\
&+\sum\limits_{(i,\mu\left(
i\right)  ,0)\in\mathcal{C}}\sum\limits_{y_{k}\in\mathcal{Y}}p(y_{k}%
)I_{c(X_{i},X_{\mu\left(  i\right)  }|y_{k})}(U_i,U_{\mu (i)})
\end{align*}

\end{proof}
\begin{remark}
    As a consequence of the Theorem \ref{weight_general} one can observe that the weight is connected to the GNB structure only through the term:
\begin{equation}
 \sum\limits_{(i,\mu\left(
i\right)  ,0)\in\mathcal{C}}\sum\limits_{y_{k}\in\mathcal{Y}}p(y_{k}%
)I_{c_{X_{i},X_{\mu\left(  i\right) }|y_{k}}}(U_i,U_{\mu(i)})
\end{equation}
\end{remark}
We emphasize here that we have to estimate information contents  of pair copulas of vectors $(U_i,U_j)\in (0,1)^2$, which do not depend on marginals.

\begin{remark}
    Theorem \ref{weight_general} shows that GNB structure learning in the continuous case depends only on the information of the bi-variate copulas associated to the bi-variate marginals of the features. This shifts structure learning to an entirely new perspective.
\end{remark}

\subsection{Joint dependence structure characterized by Gauss copula}
\label{subsec:GNB gauss copula}
In subsection \ref{Subsec: joint Gauss assumption} we discussed the case when features are supposed to have a joint Gaussian distribution. This subsection explores a more flexible scenario when the explanatory variables are continuous and may belong to different families and the dependence structure is described by a joint Gaussian copula. For the concept of Gaussian copula, see for example \cite{nelsen2006introduction}.

By $p_i(x_i)$ respectively $F_i(x_i)$ are denoted the marginal probability density function (p.d.f) respectively the marginal cumulative distribution function (c.d.f.) of the variable $X_i$. If we have only sample data we have to approximate them. Here we use kernel density estimations for approximate them. Of course it is possible to fit the data uni-variate marginals using parametric families also.

The aim is to find the best fitting GNB structure by fitting a GNB-Gauss copula to the transformed data. This requires the following transformations.
First, we fit to all uni-variate marginals a kernel p.d.f respectively a kernel c.d.f.. We denote these by $\tilde{p_i}$ respectively $\tilde{F_i}$
We transform the sample data as follows $u^j_i=\tilde{F}_i(x^j_i)$, where $x^j_i$ is the value taken by the $i$-th explanatory variable $X_i$ in the $j$-th sample. 
As a result we obtain a transformed data where the new random vector is $\mathbf{U}=(U_1,\dots,U_d)$, each of the component $U_i$ is a uniform distribution on range $(0,1)$. Now, by applying a new transformation, namely by the inverse of the standard normal c.d.f.:
$z^j_i=\phi^{-1}(u^j_i) $ we obtain the random vector $\mathbf{Z}=(Z_1,\dots,Z_d)$. 

The next step is to fit to the transformed data corresponding to the random vector $\mathbf{Z}$ a Gaussian GNB distribution. We apply the methodology used in the previous subsection to discover the optimal Gaussian GNB structure in the sense of maximizing its weight over all Gaussian GNB structures. 
\begin{theorem}
\label{W_GNB_Gauss_copula}
The weight of a GNB approximation, when the dependence between features are described by a Gaussian copula is given by:

\begin{eqnarray*}
W_{GNB}^{_{^{c_{G}}}}
&=&\sum\limits_{i=1}^{d}H(X_{i})-\sum\limits_{y_{k}\in \mathcal{Y}%
}p(y_{k})\sum\limits_{i=1}^{d}H(X_{i}|y_{k})+ \\
&&-\sum\limits_{(i,j,0)\in \mathcal{C}}\sum\limits_{y_{k}\in \mathcal{Y}%
}p(y_{k})\log (1-(\rho _{z_i,z_j|k\text{ }}^{c_{G}})^2)
\end{eqnarray*}
where $\rho _{i,j|k\text{ }}^{c_{G}}$ stands for correlation coefficient of the bi-variate Gauss copula corresponding to the transformed data $z_i$ and $z_j$ after filtering for a given $y_k$ class.
\end{theorem}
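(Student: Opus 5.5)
The plan is to obtain the statement as a specialization of Theorem~\ref{weight_general}. That theorem already isolates the structure-dependent part of the weight as
\[
\sum_{(i,\mu(i),0)\in\mathcal{C}}\sum_{y_k\in\mathcal{Y}}p(y_k)\,I_{c_{X_i,X_{\mu(i)}|y_k}}(U_i,U_{\mu(i)}),
\]
while the two leading entropy sums are independent of the structure. So it suffices to compute the information content of a bivariate Gaussian pair copula and substitute it.

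\textbf{Step 1.} Fix a class $y_k$ and a cluster $(i,j,0)$. By the Gaussian copula assumption, the conditional pair copula of $(X_i,X_j)$ given $y_k$ is the bivariate Gaussian copula with parameter $\rho^{c_G}_{z_i,z_j|k}$. By Lemma~\ref{lemma}, $I_{c}(U_i,U_j)=I(X_i,X_j|y_k)$, and the information content is invariant under any strictly increasing marginal transformation. This is because it equals minus the copula entropy, and the copula is unchanged by such maps.

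Applying $z=\phi^{-1}(u)$ coordinatewise gives
\[
I_{c}(U_i,U_j)=I(Z_i,Z_j|y_k),
\]
where $(Z_i,Z_j)$ given $y_k$ is a standard bivariate normal vector with correlation $\rho^{c_G}_{z_i,z_j|k}$.

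\textbf{Step 2.} Compute the Gaussian mutual information directly from entropies:
\[
I(Z_i,Z_j)=H(Z_i)+H(Z_j)-H(Z_i,Z_j)=\log(2\pi e)-\tfrac12\log\big((2\pi e)^2(1-\rho^2)\big)=-\tfrac12\log(1-\rho^2).
\]
This is the same determinant calculation as in the proof of Theorem~\ref{W_GNB_Gauss}, with unit variances. Substituting into the remaining term of Theorem~\ref{weight_general} and summing over clusters and classes yields the stated formula.

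\textbf{Main obstacle.} The factor $\tfrac12$ does not match the displayed statement. I would either adopt a logarithm convention that absorbs it or note that the displayed expression must be read as $-\tfrac12\sum\log(1-\rho^2)$, consistent with Theorem~\ref{W_GNB_Gauss}. A second point to address is the justification that a Gaussian copula for the joint conditional law restricts to Gaussian bivariate pair copulas with the pairwise correlations. This holds because every bivariate margin of a Gaussian copula is again a Gaussian copula, which I would state explicitly. When the copula and marginals are estimated (kernel $\tilde F_i$), the identity is then applied to the fitted model.
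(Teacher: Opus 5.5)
Your argument follows the paper's own route: specialize Theorem~\ref{weight_general} and evaluate each pair-copula term through the Gaussian entropies of $Z_i=\Phi^{-1}(U_i)$, and your copula-invariance and bivariate-margin remarks make explicit what the paper leaves implicit. Your factor-$\tfrac12$ objection is also correct, and it affects the statement itself. The paper reaches $-\log(1-\rho^2)$ only because it expands $\tfrac12\log\bigl((2\pi e)^2|K_{Z_iZ_j}|\bigr)$ as $\log(2\pi e)+\log(1-\rho^2)$ instead of $\log(2\pi e)+\tfrac12\log(1-\rho^2)$. So the last term of the displayed weight should be $-\tfrac12\sum_{(i,j,0)\in\mathcal{C}}\sum_{y_k}p(y_k)\log\bigl(1-(\rho^{c_G}_{z_i,z_j|k})^2\bigr)$, which is only a positive rescaling of the structure-dependent term and so leaves structure learning unaffected. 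Drop the ``logarithm convention'' alternative, though. A change of base rescales the entropy terms and the pair-copula term by the same constant, so the $\tfrac12$ is base-independent and your second reading is the only consistent one.
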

\begin{proof}
We will uses formula \ref{9}, for which we need to calculate $%
I_{c_{X_{i}X_{j}|y_{k}}}(U_{i|k},U_{j|k})$ in the case when we suppose to describe the data by Gauss copula given each class.

For each $y_{k}$ we fit to the data $x_{i}^{s}|y_{k},i=1,\ldots d,s=1,\ldots
,n$ (lower index stands for the feature, upper index, for the sample vector) the kernel c.d.f. estimation $\widehat{F}%
_{i}(x_{i}|y_{k})$. We denote $\widehat{F}_{i}(x_{i}^{s}|y_{k})=u_{i|k}^{s}%
\in \left( 0,1\right) $. Now we apply the inverse standard cumulative normal
distribution and obtain $\Phi ^{-1}\left( u_{i|k}^{s}\right) :=z_{i|k}^{s}$.
We highlight here that in this way for all $i=1,\ldots d$ the obtained $%
z_{i|k}$ have a standard normal distribution. With this transformations the
initial sample vector $\left( x_{1|k}^{s},\ldots ,x_{d|k}^{s}\right) $ is
transformed into $\left( z_{1|k}^{s},\ldots ,z_{d|k}^{s}\right) ,s=1,\ldots
,n$, will have standard normal marginals and the joint distribution is given
by their correlation matrix. Let us denote by $\rho _{i,j|k\text{ }}^{c_{G}}$%
the correlation between $Z_{i|k}$ and $Z_{j|k}$ under the Gaussian copula
assumption.

We know that the joint entropy of a bivariate Gaussian distribution is given
by the formula%
\begin{equation}
\label{entropy_zizj}
H(Z_{i},Z_{j})=\frac{1}{2}\log ((2\pi e)^{2}|K_{Z_{i}Z_{j}}|)=\log (2\pi
e)+\log \left[ 1-\left( \rho _{z_i,z_j|y_k\text{ }}^{c_{G}}\right) ^{2}\right] 
\end{equation}

The entropy of the standard normal marginals are 
\begin{equation}
\label{entropy_zi}
H(Z_{i})=\frac{1}{2}\log (2\pi e)
\end{equation}

By substituting for a fixed class $y_{k}$ the \ref{entropy_zizj} and \ref{entropy_zi} in the formula of information content \ref{information- entropy} we obtain:
\begin{equation}
\label{inf Gauss cop}
I_{c_{X_{i}X_{j}|y_{k}}}(Z_{i|k},Z_{j|k})=H(Z_{i})+H(Z_{i})-H(Z_{i},Z_{j})=-%
\log (1-(\rho _{z_i,z_j|y_k}^{c_{G}})^2).
\end{equation}
Having the formula of \ref{inf Gauss cop} and formula of the weight \ref{9}  it follows straight forward the formula given by the theorem.
\end{proof}

\begin{remark}
    The weight of a Gaussian GNB copula distribution depends on the graph structure $(V,\mathcal{C},\mathcal{S},\mathcal{M})$
only through the term 
\begin{equation*}
\sum\limits_{(i,j,0)\in \mathcal{C}}\underset{y_{k}\in \mathcal{Y}}{\sum }%
p(y_{k})\log (1-(\rho _{z_i,z_j|y_k}^{c_{G}})^2)
\end{equation*}
where the term $\rho _{z_i,z_j|y_k}^{c_{G}}$ stands for the correlation between the transformed variables $Z_l$ and $Z_m$ when the data is filtered such that the class variable takes on the value $y_k$.
\end{remark}

\section{Combinatorial background of the GNB and greedy structure learning algorithms}
\label{sec:combinatorial background}
Structure learning algorithm aims to find the best fitting GNB probability distribution to the training data. In the following we examine this problem from combinatorial point of view.
\subsection{Mathematical structure}
In general, finding the best fitting $k$ order cherry tree p.d. approximation (for the general definition see \ref{Appendix:cherry tree}) to a given probability distribution is NP complete, when $k>2$ \cite{chickering1996learning} problem. The GNB probability distribution is a special case of it since GNB is a cherry tree with $k=3$ that fulfills the restriction that all of its clusters contain a common vertex, which is the index of the classification variable besides two indices corresponding to two features (in this case continuous). We will show how this problem can be related to a matroid.

Let us recall that matroids are an abstraction of several combinatorial objects, as a generalization of vectors linear independence. The word matroid was introduced by Whitney in 1935 in his landmark paper "On the abstract properties of linear dependence" \cite{whitney1992abstract}.
\begin{definition}
\label{matroid}
    The pair $(E,\mathcal{I})$ is called matroid if  $\mathcal{I}\subseteq2^{E}$  the collection of subsets of $E$, called the independent sets satisfies following axioms: 
\begin{enumerate}
    \renewcommand{\labelenumi}{(\roman{enumi})}
\item $\mathcal{I}\neq \phi $
\item If $A\in \mathcal{I}$, and $B\subset A$ then $B\in \mathcal{I}$
\item If $A,B\in \mathcal{I}$ and $|A|<|B|\Rightarrow \exists e\in
B\backslash A$ such that $A\cup \{e\}\in \mathcal{I}$
\end{enumerate}
\end{definition} 

\begin{theorem}
\label{Matroid}
The set $E$ of edges of the form $(i,j), i,j\in D$  together, with the independence set defined by the subsets of $E$ present in a forest has a matroid structure. 
\end{theorem}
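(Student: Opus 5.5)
We treat the statement as the classical fact that the forests of the complete graph $K_D$ on vertex set $D$ form the independent sets of the \emph{graphic matroid}. We check the three axioms of Definition~\ref{matroid} in turn. The first two are immediate. The exchange axiom (iii) carries the real content.

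Axiom (i): the empty edge set is a forest, so $\emptyset\in\mathcal{I}$ and $\mathcal{I}\neq\emptyset$. Axiom (ii): if $A\subseteq E$ is acyclic and $B\subset A$, then any cycle in $B$ would also be a cycle in $A$. Hence $B$ is acyclic and $B\in\mathcal{I}$.

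For axiom (iii) we use the standard counting fact that a forest $F$ on the vertex set $D$, with $|D|=d$, has exactly $d-|F|$ connected components, where isolated vertices count as components. This follows by induction on $|F|$: adding an edge that joins two different components lowers the number of components by one, and an edge inside a component would close a cycle.

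Now let $A,B\in\mathcal{I}$ with $|A|<|B|$. Then $A$ has $d-|A|$ components and $B$ has $d-|B|<d-|A|$ components. Suppose, for contradiction, that every edge $e\in B\setminus A$ creates a cycle when added to $A$. Then both endpoints of every edge of $B$ lie in the same component of $A$. Consequently every component of the graph $(D,B)$ is contained in a single component of $(D,A)$. The partition of $D$ into components of $B$ therefore refines the partition into components of $A$. This forces $d-|B|\ge d-|A|$, i.e.\ $|B|\le|A|$, a contradiction.

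Hence some $e\in B\setminus A$ joins two distinct components of $A$. For such an edge, $A\cup\{e\}$ is still acyclic: any cycle through $e$ would give a second path between its endpoints inside $A$, and no path exists since they lie in different components. So $A\cup\{e\}\in\mathcal{I}$, which proves (iii).

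The main obstacle is this exchange step, specifically setting up the component-counting identity and the refinement argument cleanly; everything else is routine. We would add a closing remark linking the result to the GNB structure. A GNB structure on $D\cup\{0\}$ corresponds bijectively to a spanning tree on $D$, namely the edges $(i,\mu(i))$ together with $(i_1,i_2)$, and the spanning trees are exactly the bases (maximal forests) of this matroid. By Theorem~\ref{weight_general}, or Theorems~\ref{W_GNB_Gauss} and \ref{W_GNB_Gauss_copula}, the structure-dependent part of the weight is additive over these edges. This is what will later justify the greedy (Kruskal-type) algorithm.
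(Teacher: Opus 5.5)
Your proof is correct and follows the same route as the paper: axioms (i) and (ii) are immediate, and augmentation comes from the identity $|F|=|D|-c(F)$ together with the observation that if no edge of $B\setminus A$ joins two components of $A$, then every component of $B$ lies inside a component of $A$, forcing $c(B)\ge c(A)$, which contradicts $|B|>|A|$. Your write-up is somewhat more careful than the paper's, since you prove the component-counting identity by induction and spell out the refinement step explicitly.
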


\begin{proof}
We consider set $D$ of vertices that correspond to the indices of the explanatory variables. We have to prove that the set of all edges in $E$ with indices in $D$, together with the  set $\mathcal{I}$ :
    \begin{equation}
         \mathcal{I}=\left\{ F\subseteq E\ |\ F\text{ defines a forest on }D\right\}
    \end{equation}
  has a matroid structure.

First, it is easy to see, the $\mathcal{I}$ contains the empty set, i.e. there is no edge between any of the vertices in $D$ (when the vertices are not connected). 
(In the probabilistic setting this happens when the feature are conditionally independent with respect to $Y$ wich is the case of NB). 

Also it is easy to verify that if $A\in \mathcal{I}$, and $B\subset A$ then $B\in \mathcal{I}$, because a subset of a forest is also a forest.

The last axiom, called augmentation axiom is not so trivial to prove.

We use the important identity that $\left\vert F\right\vert =\left\vert
D\right\vert -c(F)$, where $F$ denotes a forest on all vertices in D and $%
c(F)$ denotes the components of $F$, which are trees or isolated vertices.

If $\left\vert A\right\vert <|B|\Rightarrow \left\vert D\right\vert
-c(A)<\left\vert D\right\vert -c(B)$ which implies that $c(A)>c(B)$ that is
the number of components of $A$ is greater than the number of components of $%
B$.

Consider adding edges from $B\backslash A$ to $A$. Let us denote an edge $%
e=(i,j),e\in B\backslash A$. At this point two possibilities may occur:

\begin{itemize}
\item There exist an edge $e\in B\backslash A$ which connects two components
of $A$ (to trees in A, a tree with an isolated vertex, or two isolated
vertices). In this case it remains a forest, which completes the proof.

\item There is no  $e\in B\backslash A$ edge between two different components of A (case 1). If so,  then all edges of B are within the vertices contained by the components of A (in the trees of A). This implies that $c(B)\geqslant c(A)$ this is contradiction with $c(A)>c(B).$
\end{itemize}
This means that only the first case is valid which, together with the verification of the first two axioms completes the proof.
\end{proof}

In \cite{kovacs2025generalized} it was proved that any GNB graph structure has the property that the subgraph determined by the explanatory variables is a tree. (see for example in Figure \ref{fig:GNB_constr}. 
So each GNB structure is characterized uniquely by the spanning tree between its explanatory variables.

\begin{theorem}
    The spanning tree structure which defines a GNB structure is a basis of the matroid defined.
\end{theorem}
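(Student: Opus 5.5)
The plan is to identify the feature-edge set of a GNB structure with a spanning tree on $D$, and then to show that spanning trees are exactly the maximal independent sets of the graphic matroid $(E,\mathcal{I})$ of Theorem \ref{Matroid}. A basis of a matroid is an inclusion-maximal member of $\mathcal{I}$. By the augmentation axiom (iii), all bases have the same cardinality, so it suffices to show two things: the edge set $T$ of the GNB restricted to $D$ lies in $\mathcal{I}$, and $T$ cannot be enlarged inside $\mathcal{I}$.

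First, I would read off $T$ from Definition \ref{def:GB_gstructure}. Step 1 contributes the edge $(i_1,i_2)$. Each Step $k$ adds exactly one feature edge $(i_{k+1},\mu(i_{k+1}))$; the other new edge, $(i_{k+1},0)$, is not in $E$. An induction on $k$ then shows that after Step $k$ the edges among $\{i_1,\dots,i_{k+1}\}$ form a tree. The base case is a single edge. In the inductive step we attach a new leaf $i_{k+1}$ to an existing vertex, and this keeps the graph connected and acyclic. This recovers the cited fact from \cite{kovacs2025generalized} that $T$ is a spanning tree on $D$ with $|T|=d-1$. In particular $T$ is a forest, so $T\in\mathcal{I}$.

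Next comes maximality. Take any edge $e=(i,j)\in E\setminus T$. Since $T$ is spanning and connected, $T$ already contains a path from $i$ to $j$, so $T\cup\{e\}$ contains a cycle and is not in $\mathcal{I}$. Hence $T$ is a basis. Equivalently, I can use the identity $|F|=|D|-c(F)$ from the proof of Theorem \ref{Matroid}. Every forest has $c(F)\ge 1$, hence $|F|\le d-1$, and this bound is attained exactly when $c(F)=1$, that is, for spanning trees.

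For completeness I would also note the converse, which gives the one-to-one correspondence mentioned before the statement. Every spanning tree $T'$ on $D$ arises from some GNB construction. To see this, pick an edge $(i_1,i_2)\in T'$ and add the remaining vertices in breadth-first order from it, taking as mother of each vertex its already-placed neighbour in $T'$. So the bases of the matroid are exactly the GNB structures.

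The only real obstacle is the bookkeeping in the induction: we must check that the edges to vertex $0$ are excluded from $E$, and that the mother is always an already-connected vertex, so that no cycle can be created. Everything else follows from standard graphic-matroid facts.
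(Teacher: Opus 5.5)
Your proof is correct and follows the same route as the paper. The paper's proof is a one-line appeal to the fact that the bases of the matroid in Theorem~\ref{Matroid} are exactly the spanning trees of the complete graph on $D$, combined with the previously cited result that the feature edges of a GNB structure form a tree; you prove both ingredients directly (the leaf-attachment induction on Definition~\ref{def:GB_gstructure} and the cycle, or $|F|=|D|-c(F)$, maximality argument), and your breadth-first converse also justifies the one-to-one correspondence with GNB structures that the paper only asserts.
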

\begin{proof}
The matroid defined in \ref{Matroid} has as its basis (i.e.the maximal independent set) the spanning trees of the complete graph.
\end{proof}
Based on this we have the following guarantee.
\begin{remark}
\label{matroid spanning tree}
    Finding a maximum weighted independent set (here a spanning tree) in a weighted matroid can be solved by a greedy algorithm.
\end{remark}

In our case we want to find the best fitting GNB structure, therefore a crucial step is how we define the weights assigned to the edges of the complete graph defined on the vertex set $D$ in order to find the spanning tree with the property that the GNB assigned has the maximal weight (best fitting in sense of KL minimization). 

We will discuss this for different assumptions on the joint probability distribution of the explanatory variables in the next subsection \ref{subsec:Greedy algorithms}.

\subsection{Greedy algorithms for structure learning}
\label{subsec:Greedy algorithms}
In the Subsections \ref{Subsec: joint Gauss assumption},\ref{sec:gnb_general}, \ref{subsec:GNB gauss copula} three cases of GNB structures under different assumptions were discussed. As mentioned earlier due to the matroid structure we have the guarantee of finding the basis with the largest weight, although the assignment of weights to the edges of a complete graph defined on ${1,\dots,d}$ is a crucial step. Now we put together the results we proved. For choosing the weights we rely on the results of the Theorems \ref{W_GNB_Gauss}, \ref{weight_general}  and \ref{W_GNB_Gauss_copula}.
\begin{enumerate}
    \item The case when the explanatory variables are supposed to have a \textit{joint Gaussian distribution} we assign to each edge $(l,m)$ the weight: 
\begin{equation}
\label{w_Gauss}
  \omega(l,m) =\underset{y_{i}\in \mathcal{Y}}{%
\sum }p(y_{i})\log (1-\rho^2 _{lm|y_{i}}),
\end{equation}
  where $\rho_{lm|y_i}$ denotes the correlation between the explanatory features $X_l$ and $X_m$ having the class value $Y=y_i$.  
    \item The case when the dependence structure is described by a \textit{Gauss copula}, we assign to each edge $(l,m)$ the weight:
    \begin{equation}
    \label{w_Gauss-copula}
       w_G(l,m)=\underset{y_{i}\in \mathcal{Y}}{\sum }%
p(y_{i})\log (1-{\rho_G}^2 _{lm|y_{i}}),
    \end{equation}
    where $\rho_G{_{lm|y_i}}$ denotes the parameter of the pair-Gauss copula which describes the dependence between $X_l$ and $X_m$ having the class value $Y=y_i$.
    \item The case when we have no restriction on the marginal distributions or on the joint copula called \textit{general distribution}, we assign to each edge $(l,m)$ the weight:
    \begin{equation}
    \label{w_general}
w(l,m)=\sum\limits_{y_{i}\in\mathcal{Y}}p(y_{i})I_{c_\mathbf{X}(U_{l},U_{m}|y_{i})}.
    \end{equation}
\end{enumerate}
We have the guarantee \ref{matroid spanning tree} of finding the best fitting structure to the training data (under a the given assumption of the dependence between the features) by applying Prim or Kruskal algorithms  to the complete graph with edges weighted by \ref{w_Gauss}, \ref{w_Gauss-copula}, or \ref{w_general}.

%Algoritmus-----------------------------------

\begin{algorithm}[H]
\caption{GNB-cont structure learning}\label{alg:gnb-cont}
\begin{algorithmic}
\Procedure{GNB-cont learning}{$D$, Method}
\State \textbf{Input:} Training samples $\mathbf{D}$ with $(X_1, \dots, X_d)$  and target $Y$, the Method \{GNBGauss, GNBGauss-cop, GNB-KDE\}
\State \textbf{Procedure:} 
%A sequence of vertex pairs together with their weights.

\State For all pairs $(l,m),l,m\in D, l\neq m$
\State Calculate the weights given by  formula \ref{w_Gauss}, 
 or \ref{w_Gauss-copula}, or \ref{w_general} as a function 
 \State of the method \hspace{10em}    $\blacktriangleright$ see Theorems \ref{W_GNB_Gauss}, \ref{weight_general}  and \ref{W_GNB_Gauss_copula}.
\State Apply Kruskal's algorithm: Find the maximum spanning tree of the \State complete graph on $D$
\State \textbf{Return}: Set of ordered clusters of the form $(l,m,0)$ based on the selection 
\State  order of pairs (l,m) connected in each step to the forest under construction.
\EndProcedure
\end{algorithmic}
\end{algorithm}

\subsection{Discussion on higher order dependencies between the attributes}
\label{subsec:higher order dependence}
The idea of allowing more than two explanatory variables to be interdependent was earlier discussed in paper like \cite{sahami1996learning}, \cite{perez2006supervised}, \cite{wang2022semi} respectively in a more recent paper \cite{ghofrani2018new}. All papers regarding the continuous explanatory feature space discuss this problem from the Bayesian network perspective. Bayesian networks are described by directed acyclic graphs. In this setting, increasing the degree of dependence among explanatory variables is achieved by relaxing the structural constraint that each variable may have only one parent besides the class variable. Ghofrani's paper \cite{ghofrani2018new} discusses this problem from undirected probabilistic graphical model perspective. However their approach is restricted to discrete features only.

Now we introduce the theoretical background of GNB-kind structures which allow higher order dependencies between the explanatory variables. We introduce the $k$-Generalized Naive Bayes ($k$-GNB), 
%and then we give the formulas of the weight of the approximation, under different assumptions.

\begin{definition}
\label{def:k-GB_gstructure}
A  \textbf{$k$-Generalized Naive Bayes graph-structure} on $D\cup\left\{0\right\}=\\$ $ =\left\{ 0,1,\ldots ,d\right\}$ (vertex 0 is assigned to $Y$), is constructed as follows:
\begin{itemize}[itemsep=-2mm,topsep=-2mm]
\item Step 1. The smallest $k$-GNB structure on $k$ vertices is given by a clique $\left(0,i_{1},\dots,i_{k-1}\right) $, where $i_{1},\dots,i_{k-1},\in D$. 
\item Step n. Let us suppose that the vertices $i_{1},i_{2},\ldots
,i_{n-k}\in D$ are already in the $k$-GNB structure. We add a new vertex $i_{n+1}\in D$ to the existing $k$-GNB structure by connecting it to a $k-1$ element clique which contain the vertex $0$. 
\end{itemize}
\end{definition}
We assign to a $k$-GNB graph structure a $k$-th order GNB junction tree with the property that each cluster contains $k-1$-vertices besides the vertex $0$.

The probability distribution assigned to the $k$-GNB junction tree $(V, \mathcal{C}, \mathcal{S}, \mathcal{M})$ is given by the same Formula \ref{eq:Cherry_tree}.The marginal probability distributions involved are now of size $k$ respectively $k-1$.

As an illustration, we discuss now only from theoretical point of view the case of joint Gaussian distribution.
First let we determine the formula for the weight of an approximation given by the \textit{Gaussian $k$-GNB probability distribution}.
\begin{theorem}
    The weight of a $k$-order GNB structure is given  as follows:
\begin{eqnarray}
\label{k-GNB weight}
W &=&\frac{1}{2}\sum\limits_{i=1}^{d}\log \sigma _{X_{i}}^{2}-\frac{1}{2}\left[ \sum\limits_{C\in 
\mathcal{C}}\sum\limits_{y_{k}\in \mathcal{Y}}p\left( y_{k}\right) \log
\left\vert K_{X_{C}}\right\vert \right. \notag \\
&&\left. -\sum\limits_{S\in \mathcal{S}}\left( \nu _{S}-1\right)
\sum\limits_{y_{k}\in \mathcal{Y}}p\left( y_{k}\right) \log \left\vert
K_{X_{S|y_{k}}}\right\vert \right] 
\end{eqnarray}
\end{theorem}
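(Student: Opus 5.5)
We follow the proof of Theorem \ref{W_GNB_Gauss}, replacing the $3$-element clusters and $2$-element separators by clusters $C\cup\{0\}$ with $|C|=k-1$ explanatory indices and separators $S\cup\{0\}$ with $|S|=k-2$. Here $K_{X_C}$ in the statement is read as the class-conditional covariance $K_{X_C|y_k}$. The plan is to write the weight (\ref{weight of cherry}) as
\[
W=\sum_{C\in\mathcal{C}} I(\mathbf{X}_C,Y)-\sum_{S\in\mathcal{S}}(\nu_S-1)\, I(\mathbf{X}_S,Y),
\]
and to substitute for each term the Gaussian mixed information content (\ref{I(bfX,Y)_Gauss}). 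This gives, for any index set $A\subseteq D$,
\[
I(\mathbf{X}_A,Y)=\frac12\Big[\sum_{i\in A}\log\sigma^2_{X_i}-\sum_{y_k\in\mathcal{Y}}p(y_k)\log|K_{X_A|y_k}|\Big].
\]
This formula needs only that every marginal of a Gaussian vector is Gaussian, conditionally on each class. That is the same assumption the paper uses for $d$-dimensional $\mathbf{X}$.

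After substitution the weight splits into two parts.

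\emph{First part:} the unconditional variance terms,
\[
\frac12\Big(\sum_{C}\sum_{i\in C}\log\sigma^2_{X_i}-\sum_{S}(\nu_S-1)\sum_{i\in S}\log\sigma^2_{X_i}\Big).
\]

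\emph{Second part:} the conditional log-determinant terms,
\[
-\frac12\Big[\sum_C\sum_{y_k}p(y_k)\log|K_{X_C|y_k}|-\sum_S(\nu_S-1)\sum_{y_k}p(y_k)\log|K_{X_S|y_k}|\Big].
\]
The second part is already the bracket in (\ref{k-GNB weight}), so no further manipulation is needed there. Unlike the $k=3$ case, we do not try to factor the determinants into variances and correlations.

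The key step is the first part. For each $i\in D$ we need the identity
\[
\#\{C\in\mathcal{C}: i\in C\}-\sum_{S\ni i}(\nu_S-1)=1,
\]
after which the first part collapses to $\frac12\sum_{i=1}^d\log\sigma^2_{X_i}$. This identity is the main obstacle, because for $k>3$ a vertex can lie in several distinct separators, not just one.

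To prove it we use the running intersection property of the junction tree. The clusters containing $i$ induce a connected subtree. Writing each separator of multiplicity $\nu_S$ as $\nu_S-1$ junction-tree edges, as in the construction of the GNB junction tree, the edges of that subtree are exactly the edges whose separator contains $i$. There are $\sum_{S\ni i}(\nu_S-1)$ of them, and a tree has one fewer edge than nodes, which gives the identity.

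Alternatively, the identity follows by induction along the construction in Definition \ref{def:k-GB_gstructure}. Each step adds one new cluster consisting of the new vertex together with an existing $(k-1)$-clique containing $0$, and that clique is the separator. The new vertex therefore enters one cluster and no separator, while each old vertex of the separator gains one cluster and one unit of separator multiplicity. The difference is preserved, starting from $1$ in the initial clique.

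Combining the two parts yields (\ref{k-GNB weight}).
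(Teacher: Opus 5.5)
Your proposal is correct and follows the paper's proof: substitute the Gaussian mixed information content for each cluster and separator in the weight, keep the class-conditional log-determinant terms as the bracket (reading $K_{X_C}$ as $K_{X_C\mid y_k}$, as in the paper's intermediate line), and collapse the variance terms to $\frac12\sum_{i=1}^d\log\sigma^2_{X_i}$. The one addition is that you prove the counting identity $\#\{C\ni i\}-\sum_{S\ni i}(\nu_S-1)=1$, whether by running intersection or by induction along the construction. The paper uses this identity here without comment, and you rightly note that for $k>3$ a vertex can lie in several distinct separators, so the paper's one-line justification from the $k=3$ case does not carry over as stated.
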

    
\begin{proof}
We express $I(\mathbf{X_C},Y)$ and $I(\mathbf{X_S},Y)$ by using Formula \ref{I(bfX,Y)_Gauss}:
\begin{equation}
\label{IC_Gauss}
I(\mathbf{X_C},Y) =\frac{1}{2}\left[ \sum\limits_{X_i\in\mathcal{C}}\log \sigma
_{X_i}^{2}-\sum\limits_{y_{k}\in \mathcal{Y}}p(y_{k})\log \left\vert
K_{\mathbf{X_C}|y_{k}}\right\vert \right] 
\end{equation}
\begin{equation}
\label{IS_Gauss}
I(\mathbf{X_S},Y) =\frac{1}{2}\left[ \sum\limits_{X_i\in S}\log \sigma
_{X_i}^{2}-\sum\limits_{y_{k}\in \mathcal{Y}}p(y_{k})\log \left\vert
K_{\mathbf{X_S}|y_{k}}\right\vert \right] 
\end{equation}
Now we substitute these in \ref{weight} and we get the weight:
\begin{eqnarray*}
W &=&\frac{1}{2}\sum\limits_{C\in \mathcal{C}}\left[ \sum\limits_{i\in C}\log \sigma
_{X_{i}}^{2}-\sum\limits_{y_{k}\in \mathcal{Y}}p\left( y_{k}\right) \log
\left\vert K_{X_{C}|y_{k}}\right\vert \right] - \\
&&-\frac{1}{2}\sum\limits_{S\in \mathcal{S}}\left( \nu _{S}-1\right) \left[
\sum\limits_{i\in S}\log \sigma _{X_{i}}^{2}-\sum\limits_{y_{k}\in 
\mathcal{Y}}p\left( y_{k}\right) \log \left\vert K_{X_{S|y_{k}}}\right\vert %
\right]  \\
&=&\frac{1}{2}\sum\limits_{i=1}^{D}\log \sigma _{X_{i}}^{2}-\frac{1}{2}\left[ \sum\limits_{C\in 
\mathcal{C}}\sum\limits_{y_{k}\in \mathcal{Y}}p\left( y_{k}\right) \log
\left\vert K_{X_{C}}\right\vert \right.  \\
&&\left. -\sum\limits_{S\in \mathcal{S}}\left( \nu _{S}-1\right)
\sum\limits_{y_{k}\in \mathcal{Y}}p\left( y_{k}\right) \log \left\vert
K_{X_{S|y_{k}}}\right\vert \right] 
\end{eqnarray*}
\end{proof}

\begin{comment}
\item Now we discuss the more flexible case where we have no restriction on the distributions of the explanatory variables. We suppose \textit{the joint dependence structure to be described by a Gauss copula}. In this case we also make the transformation steps described in subsection \ref{subsec:GNB gauss copula}.
We can formulate the following theorem:
\begin{theorem}
    The weight of a $k$-order GNB structure is given  as follows:
\begin{eqnarray*}
W &=&-\sum\limits_{C\in 
\mathcal{C}}\sum\limits_{y_{k}\in \mathcal{Y}}p\left( y_{k}\right) \log
\left\vert K_{Z_{C}}\right\vert  +\sum\limits_{S\in \mathcal{S}}\left( \nu _{S}-1\right)
\sum\limits_{y_{k}\in \mathcal{Y}}p\left( y_{k}\right) \log \left\vert
K_{Z_{S|y_{k}}}\right\vert 
\end{eqnarray*}
\end{theorem}
\begin{proof}
We apply the same methodology to the transformed data $\mathbf{Z}$ as in the case of the joint Gaussian distribution and obtain:
\begin{eqnarray*}
    W &=&\sum\limits_{i=1}^{d}\log \sigma _{Z_{i}}^{2}-\left[ \sum\limits_{C\in 
\mathcal{C}}\sum\limits_{y_{k}\in \mathcal{Y}}p\left( y_{k}\right) \log
\left\vert K_{Z_{C}}\right\vert \right.  \\
&&\left. -\sum\limits_{S\in \mathcal{S}}\left( \nu _{S}-1\right)
\sum\limits_{y_{k}\in \mathcal{Y}}p\left( y_{k}\right) \log \left\vert
K_{Z_{S|y_{k}}}\right\vert \right] 
\end{eqnarray*}
$\sum\limits_{i=1}^{d}\log \sigma _{Z_{i}}^{2}=0$ since all $Z_i$ are standard normal distributions with unit variance. It directly follows the formula for calculating weight.
\end{proof}
\end{comment}

\textbf{Discussion on finding $k$-GNB structures $k\geq3$}
We approach this problem from two aspects. 
Can we prove a similar result as in Section \ref{sec:combinatorial background} related to the matroid theory?

The answer is no. In this case, the indices of the features do not form a spanning tree; The independence set would contain a forest of $k$-th order cherry trees, $k\geq 3$. Consequently, the augmentation property \ref{matroid} no longer holds. This can be shown straightforwardly by constructing a counterexample. Let us consider two 3 order cherry trees.
\begin{eqnarray*}
&&(1,2,3)-[1,2]-(1,2,4)-[1,2]-(1,2,5) \\
&&(1,4,5)-[1,4]-(1,3,4)-[3,4]-(2,3,4)-[3,4]-(3,4,6)
\end{eqnarray*}
The edges consist now of 3 elements. The second cherry tree is contains more edges than the first one. Neither of the clusters of the second one van be added to the first one such that we have a valid cherry tree. Any of the clusters which contains indices already present in the first cherry tree are pointless, the triplet $(3,4,6)$ contains the index $6$, which is not in the first tree, however this can not be augmented to the first cherry tree because $[3,4]$ is not a potential separator (there is no cluster in the first tree which contains them).

The second aspect concerns the weight of the approximation. 
In Formula \ref{weight of cherry}, it can be observed that the weight of the approximation depends on the marginal probability distributions with their indices belonging to the clusters and separators.
 All information contents of subsets $k$ elements of the form $(Y,X_ {i_1},\dots,X_{i_{k-1}})$ and of subsets of $k-1$ elements of the form $(Y,X_{i_1},\dots,X_{i_{k-2}})$, with $i_1,\dots,i_{k-1} \in D$, will be calculated based on Formulas \ref{IC_Gauss} and \ref{IS_Gauss} then we aim to discover the best fitting $k$-GNB structure.
\begin{comment}
These information contents can be calculated. 
The algorithm starts from the element $(Y,X_{i_1},\dots,X_{i_{k-1}})$ with the highest information and continues by adding a new cluster greedily.
\end{comment}
There can be introduced greedy algorithms for doing this, but for cases $k>3$ there is no guarantee, for finding it in polynomial time. 

\section{The classification procedure and model reduction}
\label{Sec:Classification}
The first subsection presents the classification method using all available features. The second subsection then introduces a model reduction approach and presents the corresponding classification results based on the reduced model.
\subsection{All features classification procedure }
\label{Subsec:classification-all features}
We discuss first the classification in the case when all the features are considered. Based on the training data, we find a GNB structure by applying Algorithm \ref{alg:gnb-cont}. As a result, we obtain a set of clusters and a set of separators that define the GNB structure. With these components as inputs, we describe the main steps involved in the classification process, see Figure \ref{fig:classification_procedure}.
Based on the approximation Formula \ref{eq:Cherry_tree} the log-likelihood of a new point $\mathbf{x^\star}$ in a class $y\in\{y_1,\dots,y_m\}$ can be calculated.

\begin{description} 

\item[Input:] The set of clusters and the set of separators which contain the index 0 corresponding the $Y$ target variable:
$(0,{i_{k}},{j_{k}) \in \mathcal{C}}$ and $( 0,{i_{k}}) \in \mathcal{S}$ and the corresponding marginal probability distributions: $( Y,X_{i_{k}},X_{j_{k}})$ and $(Y,X_{i_{k}})$; A new datapoint $\mathbf{x}=(x^*_1,\dots, x^*_d)$
\item[Log-likelihood evaluation] For each $y_k\in\{y_1,\dots,y_m\}$ the log-likelihood of a new datapoints $\mathbf{x^*}$ is calculated  by using GNB approximation ${p_{GNB}}( y,\mathbf{x^*})$ :
\begin{align*}
    \log \frac{\prod\limits_{(0,i,j)\in \mathcal{C}}p\left( y_k,x^*_{i},x^*_{j}\right) }{\prod\limits_{(0,s)\in \mathcal{S}}p\left(
y_k,x^*_s\right)
^{v_{S}-1}}=\sum\limits_{\left( 0,{i},j\right) \in \mathcal{C}}{\log p\left( y_k,x^*_{i},x^*_{j}\right) }- \\-{\sum\limits_{\left( 0,s\right) \in \mathcal{S}} {(v_{s}-1)}\log p\left(
y_k,x^*_s\right)}
\end{align*}
\item[Output:] The class  $y_k$ with the maximal log-likelihood; Diagnostics: Graph representations on the good and bad classified points projected to the feature pairs selected.
\end{description}
For the Diagnostics see an illustration on the Iris data in \ref{Apendix_diagnostics based on pairs}: In Figures \ref{Gauss_Iris}, \ref{Gauss copula Iris}, \ref{KDE Iris}, one can observe respectively the pairs in the Gaussian case, Gauss copula with KDE marginals, and the most general case where the copula and the marginals are modeled by KDEs.  

\begin{figure}[h]
    \centering
    \includegraphics[scale=0.50]{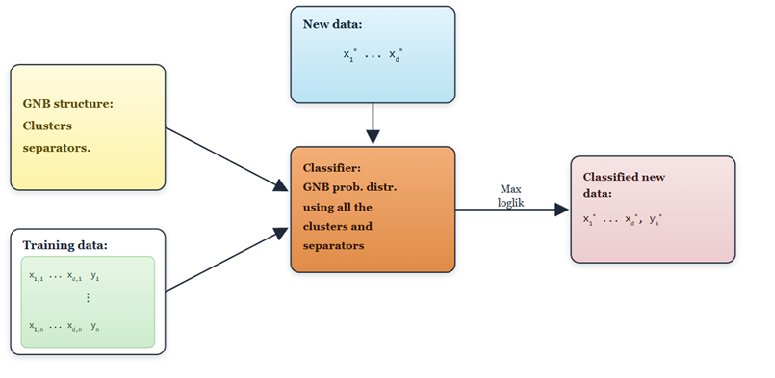}
    \caption{The classification procedure has as input: the structure, the training data. The log-likelihood uses the restricted marginal p.d. to each of the classes separately ; the classifier chooses the class with maximal log-likelihood. }
   \label{fig:classification_procedure} 
\end{figure}

We emphasize here that we used the default parameters for kernel density estimation, for a general fitting. Although for practitioners interested in a given problem it is worth to finetuning these parameters. Such setting can improve essential the fitting to the training data, see the example in Appendix \ref{KDE tuned Iris}. However this may lead to overfitting on the test data, therefore for finetuning a good idea is to use a validation set. 

\subsection{Model reduction and classification based on the reduced model}
\label{subsec:model reduction}
When the number of explanatory attributes is very high, most of the classification methods use feature selection beforehand or regularization in cases like LR and SVM to reduce the number of explanatory variables. 

We propose a method that selects one reduced model from a sequence of reduced models. These reduced models are evaluated on the training data. It would also be possible to apply this approach to a validation dataset; however, we chose to rely exclusively on the training data here to ensure comparability with LR and SVM, for which only training data was used.

The reduction procedure has as input the ordered set of clusters obtained from the Structure Learning Algorithm \ref{alg:gnb-cont}.

The smallest reduced model has one cluster. Each new reduced model is obtained from the former one by adding to it a new cluster from the ordered sequence. It is important to point out that by adding greedily new clusters we may obtain a forest as an intermediate structure which is a set of GNB trees, therefore we need to introduce the following concept. The workflow of the method can be seen in Figure \label{fig:reduced model learning}.

\begin{definition}
\label{def:GNB_forest}
We call \textbf{GNB forest} on a set $D'\subset D$ a structure which has the property that all of the indices in $D'$ are linked to $0$ (which stands for the classification variable $Y$) and a forest is defined on the set of indices in $D'$.
\end{definition}
Let us denote by $\mathcal{T}_{D'}$ the set of disjunct trees defined on $D'\subset D$. 

\begin{theorem}
    The probability distribution corresponding to an intermediate structure constructed from the sequence given by Algorithm \ref{alg:gnb-cont} is a GNB forest, and the probability distribution assigned is given by
    \begin{equation}
    \label{eq: prob reduced model}
p(Y,\mathbf{X}_{D^{\prime }})=\prod\limits_{T_{k}\in \mathcal{T}}\frac{%
\prod\limits_{(0,i_{T_{k}},j_{T_{k}})\in \mathcal{C}%
_{T_{k}}}p(Y,X_{i_{T_{k}}},X_{j_{T_{k}}})}{\prod\limits_{(0,s_{T_{k}})\in 
\mathcal{S}_{T_{k}}}p(Y,X_{s_{T_{k}}})^{\left( v_{s_{T_{k}}}-1\right) }}%
\cdot \frac{1}{p(Y)^{{\left\vert \mathcal{T}\right\vert }-1}}
\end{equation}
where $\mathcal{T}_{D'}$ denotes the forest constructed by Kruskal's algorithm on the intermediate set of vertices $D'$
\end{theorem}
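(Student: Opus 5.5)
The proof has two parts. First we show that any intermediate structure produced by Kruskal's algorithm (Algorithm \ref{alg:gnb-cont}) is a GNB forest in the sense of Definition \ref{def:GNB_forest}. Then we show that the associated distribution factorizes as in (\ref{eq: prob reduced model}).

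\textbf{Part 1: the intermediate structure is a GNB forest.} Take the first $t$ pairs $(l,m)$ accepted by Kruskal's algorithm, and let $D'$ be the set of indices they touch. By the independence axiom of the matroid in Theorem \ref{Matroid}, these edges form a forest on $D'$; every accepted edge is independent because Kruskal only accepts edges that keep the set acyclic. Each accepted pair $(l,m)$ is turned into the cluster $(0,l,m)$, so every index in $D'$ is linked to $0$. This is exactly a GNB forest. Let $\mathcal{T}=\{T_1,\dots,T_r\}$ be the connected components, each a tree with at least one edge.

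\textbf{Part 2a: the factorization inside one component.} Fix a tree $T_k$ on vertex set $D_k$. Order its edges so that every edge after the first shares exactly one vertex with the earlier ones; this is possible since $T_k$ is a tree. Now $T_k$ together with vertex $0$ is a GNB graph structure on $D_k\cup\{0\}$ in the sense of Definition \ref{def:GB_gstructure}, with mother function $\mu$ read off from this order. Formula (\ref{eq:Cherry_tree}) applied on $D_k\cup\{0\}$ then gives the $k$-th factor of (\ref{eq: prob reduced model}), i.e.\ the distribution of $(Y,\mathbf{X}_{D_k})$.

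\textbf{Part 2b: gluing the components.} The $r$ trees are combined into one junction tree on $D'\cup\{0\}$. Pick one cluster from each $T_k$ and connect these chosen clusters in a chain through the separator $\{0\}$, i.e.\ through the variable $Y$. The result is still a junction tree:
\begin{itemize}
\item any two clusters from different components intersect only in $\{0\}$;
\item so the running intersection property holds, because the only common element, $0$, lies on every cluster along the connecting path.
\end{itemize}
The general cherry-tree/junction-tree formula (Appendix \ref{Appendix:cherry tree}, same form as (\ref{eq:Cherry_tree})) then divides the product of cluster marginals by the separator marginals. The separator $\{0\}$ has multiplicity $r$, which yields the factor $p(Y)^{-(r-1)}$. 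The within-tree separators keep exactly the multiplicities $v_{s_{T_k}}$ they have in each $T_k$, because each 2-element separator $(0,s)$ lives inside a single component. Collecting the factors component by component gives (\ref{eq: prob reduced model}), with $|\mathcal{T}|=r$.

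Equivalently, this is the GNB-forest model asserting that the blocks $\mathbf{X}_{D_1},\dots,\mathbf{X}_{D_r}$ are conditionally independent given $Y$. That gives $p(Y,\mathbf{X}_{D'})=p(Y)\prod_k p(\mathbf{X}_{D_k}\mid Y)=\prod_k p(Y,\mathbf{X}_{D_k})\,/\,p(Y)^{r-1}$, which confirms the exponent.

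\textbf{Main obstacle.} The step needing care is Part 2b: verifying that attaching components through the one-element separator $\{0\}$ gives a valid junction tree. In particular we must check that the multiplicity bookkeeping does not disturb the 2-element separators $(0,s)$. I would settle this by counting, for each vertex $i\in D'\cup\{0\}$, the clusters containing it minus the separators (with multiplicity) containing it, and checking that this difference is exactly $1$:
\begin{itemize}
\item for $i=0$: it lies in $\sum_k|\mathcal{C}_{T_k}|$ clusters and in $\sum_k\sum_{s}(v_{s_{T_k}}-1)+(r-1)$ separator factors;
\item these counts balance because each $T_k$ already satisfies the count within its own GNB-junction tree, as used in the proof of Theorem \ref{weight_general}.
\end{itemize}
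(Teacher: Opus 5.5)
Your proposal is correct and follows essentially the same route as the paper. Both arguments split the intermediate forest into its trees, read each tree together with vertex $0$ as a sub-GNB structure with its own cluster/separator factor, and glue the pieces along the one-element separator $Y$, which occurs $|\mathcal{T}|-1$ times. Your version is somewhat more careful in three ways: you check the running-intersection property explicitly, you note the conditional-independence reading $p(Y)\prod_k p(\mathbf{X}_{D_k}\mid Y)$, and you argue directly from the acyclicity of the edges Kruskal accepts. That last point also covers an edge that merges two existing trees, a case the paper's case split leaves out.
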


\begin{proof}
  In each step of Algorithm \ref{alg:gnb-cont} Kruskal's Algorithm adds a new edge. This can be achieved in two ways: by connecting a new vertex to the existing forest or by adding two new vertices connected by a new edge. In this way in each step a new cluster is added to the GNB forest (Definition \ref{def:GNB_forest}).

 Two clusters (triplets of the form $(0,i,j)$) are connected either by only the vertex $0$ or by two vertices one of them is $0$ the other one contained by $D'$, see Figure \ref{fig:GNB_forest}.

When a subset of vertices in $D'$ form a tree, the clusters are connected  to each other by two vertices (all separators contain two vertices), so this defines sub GNB structure (corresponding to each $T_k \in \mathcal{T}$ tree); The probability distribution assigned to a sub-GNB structure corresponding to a $T_k \in \mathcal{T}$ tree is 
    \begin{equation}
    \label{GNB_Tk}
P(\mathbf{X}_{T_k})=\frac{\prod\limits_{(0,i_{T_{k}},j_{T_{k}})\in \mathcal{C}%
_{T_{k}}}p(Y,X_{i_{T_{k}}},X_{j_{T_{k}}})}{{\prod\limits_{(0,s_{T_{k}})\in 
\mathcal{S}_{T_{k}}}p(Y,X_{s_{T_{k}}})^{\left( v_{s_{T_{k}}}-1\right) }}} 
    \end{equation}
where $\mathcal{C}_{T_{k}}$ respectively $\mathcal{S}_{T_{k}}$ denotes the set of cluster and separators of the sub-GNB structure.

Let us consider the case when between vertices in $D'$ their is no path, i.e. the indices belong to different trees. In this case the GNB forest with the indices of the explanatory variables in $D'$ may contain multiple sub-GNB structures, having in common only $Y$, see Figure \ref{fig:GNB_forest}. The probability distribution assigned is given by the GNB structures involved.
The number of the trees is denoted by $ \left\vert \mathcal{T}\right\vert $ , where  $\left\vert \mathcal{T}\right\vert \geq1 $. 
A GNB structure corresponding to different trees is illustrated in Figure \ref{fig:GNB_forest}. It is easy to see that the intersection between the sub GNB structures is only $Y$ which is an one-element separator. The number of one-element separators of this type are exactly $\left\vert \mathcal{T}\right\vert-1$.  

From this observation and Formula \ref{GNB_Tk} it follows immediately the formula of the probability distribution \ref{eq: prob reduced model} given in the theorem.
\end{proof}
Let us note that if we have exactly one tree then $\frac{1}{P(Y)^{\left\vert \mathcal{T}\right\vert -1}}$ equals to 1.
\begin{figure}[h]
    \centering
    \includegraphics[scale=0.20]{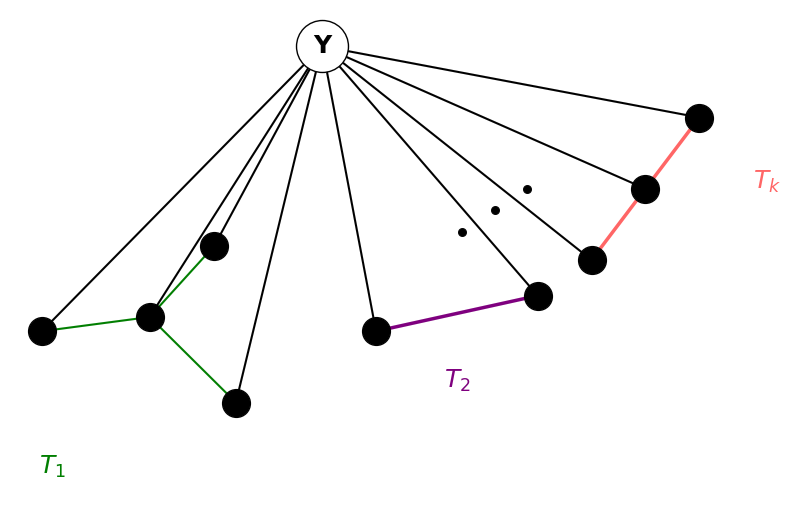}
    \caption{This illustrates an intermediate step of \ref{alg:gnb-cont} presenting a possible reduced model, when not all the edges between the indices (vertices) are added so far. $T_1, T_2,..., T_k$ are trees.}
    \label{fig:GNB_forest}
\end{figure}

The log-likelihood of a sample vector $(\mathbf{x^*},y_k)$ is calculated based on the reduced model \ref{eq: prob reduced model} restricted to the GNB forest containing the attributes with indices in $D^{\prime }\subset D$ by the following formula:
\begin{equation}
\label{eq:loglik_red}
\begin{array}{c}
\log p(\mathbf{x^*}_{D^{\prime }},y_k)=\sum\limits_{T_{k}\in \mathcal{T}}\left[
\sum\limits_{(0,i_{T_{k}},j_{T_{k}})\in \mathcal{C}_{T_{k}}}\log
p(y_k,x^*_{i_{T_{k}}},x^*_{j_{T_{k}}})\right.  \\ 
\left. -\sum\limits_{(0,s_{T_{k}})\in \mathcal{S}_{T_{k}}}\left(
v_{s_{T_{k}}}-1\right) \log p(y_k,x^*_{s_{T_{k}}})\right] -\left\vert \mathcal{T}%
\right\vert \log p(y_k)%
\end{array}%
\end{equation}

We assign to the number of clusters/features used by all the reduced models the $f1$-score calculated on the training data, see the graphs in \ref{Appendix:model_reduction_graphs}. One can chose automatically the number of clusters having the highest average $f1$-scores over a number of splits (in our experiments this is chosen to be 10).  The user can also decide based of the graphics weather a slight improvement is an enough reason to use more clusters. Obviously this function is not a monotonic one.

The selection of the reduce model is illustrated in Figure \ref{fig:reduced_model_learning}.
\begin{figure}[h] 
    \centering
    \includegraphics[scale=0.30]{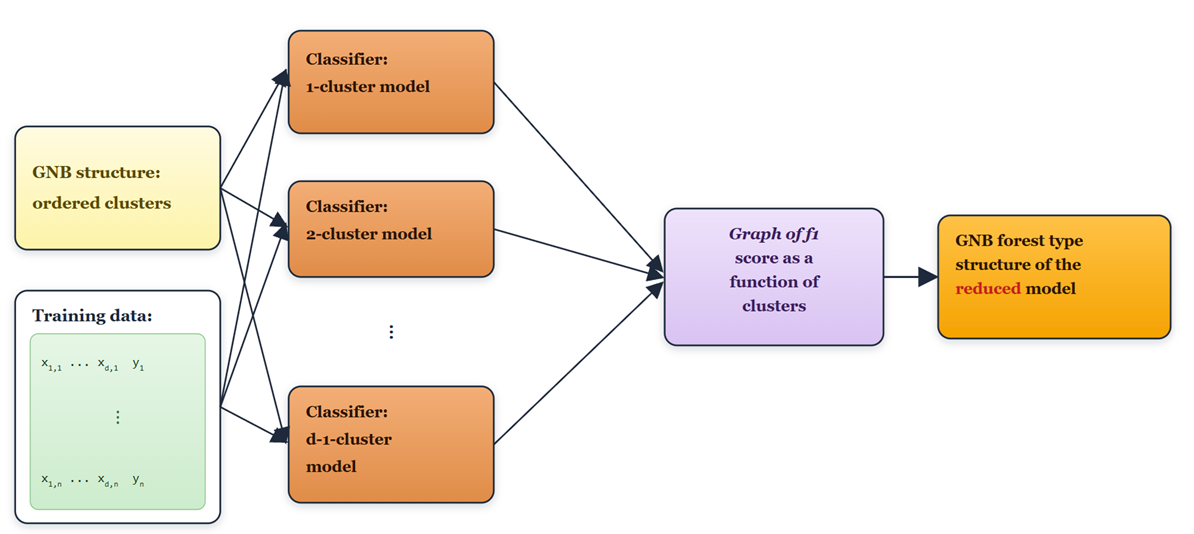}
    \caption{Based on the output of Algorithm \ref{alg:gnb-cont} a sequence of ordered cluster is obtained (yellow box). Using the data the marginals corresponding to the clusters and separators are determined. Based on this we obtain a 1-cluster classifier, 2-cluster classifier and so on. These are tested on the training data (in our experiments). The output is a graph of $f1$ score as function of the number of clusters. The optimal reduced model is chosen to have the smallest number of clusters where $f1$ reaches the maximum.}  
    \label{fig:reduced_model_learning}
\end{figure}

The classification procedure based on the reduced model is described as follows:
\begin{description} 
\item[Input:] The set of indices $D'$; the GNB forest on $D'$: the set of clusters, set of separators  with their multiplicity; the corresponding marginal probability distributions: $( Y,X_{i_{k}},X_{j_{k}})$ and $(Y,X_{i_{k}})$; 
A new datapoint $\mathbf{x}=(x^*_1,\dots, x^*_d)$
\item[Log-likelihood evaluation] 
For each value of $y=y_1,\dots,y_m$ the log-likelihood of a new datapoint $\mathbf{x^*}$ is calculated :
$\log p(\mathbf{x^*}_{D^{\prime }},y_k)$ \hspace{3em} $\blacktriangleright$Theorem:  \ref{eq:loglik_red}
\item[Output:] The class  $y_k\in \left\{y_1,\ldots ,y_m\right\} $ with the maximal log-likelihood.
\end{description}

\section{Numerical Results}
\label{Sec:Numerical results}
The introduced algorithms are designed for the classification task when attributes are continuous, without using discretizations. We will use evaluation metrics specific to this problem: accuracy, precision, recall, $f1$-score, see \ref{Appendix:Accuracy}. In the case of more than two classes we used weighted scores.
Our algorithms (GNB-Gauss, GNB G-cop, GNB KDE) respectively the reduced models (GNBGauss*, GNB G-cop*, GNB KDE*) were implemented in Python, and will be publicly available for testing.

\subsection{Data preparation methods and experiment description}
\label{dataprep} 
The GNB algorithms introduced in this paper are developed for data characterized by continuous features, without using discretizations. However all data was scaled by min-max scaler, and when two variables correlate more in absolute value than 0.99 we delete one of them.
 To have a general setting and to be able to apply the algorithms in the same conditions, we choose for all the compared algorithms the same random $15\%$ test set selection, having the same random ten splits (seeds: form 42 to 52). 

Relative to our methods we conducted two kinds of experiments. One containing all of the explanatory variables, the second one by using "the best" reduced model. The reduction of the model is made automatically as described in Subsection \ref{subsec:model reduction} characterizing each of the reduced models by the mean $f1$ score on the training data over the $10$ runs. 

In order to compare our algorithms to other glass-box algorithms we run the logistic regression (LR) and Support Vector Machine (SVM) algorithms under different regularization parameter ($C=1000$, $C=1$, $ C=0.01$) where $C$ stands for regularizing parameter. 

\subsection{Data description}

We give in the following a short description of the datasets in the table \ref{Description of Datasets}.
\bgroup
\def\arraystretch{0.75}
\begin{table}
\centering
\begin{tabular}{ |c|c|c|c|c| }
 \hline
 & Num. of  & Num. of & Num. of \\ 
 Dataset & attributes  & classes & entries  \\ 
  & (columns) & attributes& (rows) \\ 
 \hline
 Abalone \cite{abalone_1} & 8 & 3 & 4180 \\
Body fat \cite{penrose1985generalized} & 13 & 3 & 252 \\
 WBCD \cite{wdbc_link} & 30 & 2 & 569 \\
 Connect \cite{connectionist_bench}  & 60 & 2 & 208 \\
 Iris \cite{iris_53} & 4 & 3 & 150\\  
 Wine \cite{wine_109} & 14 & 3 & 178 \\
 Wine quality \cite{wine_quality_186} & 12  & 2 & 4898 \\
 \hline
\end{tabular}
\caption{Short description of the datasets we used in this paper. The usage of the datasets is described in \label{Description of Datasets}}
\end{table}
\egroup

In the \textit{Diagnostic Wisconsin Breast Cancer Database (WBCD)} \cite{wdbc_link} the aim is to classify a tumor as benign or malignant. The data is slightly unbalanced 0.63 are benign (B) and 0.37 are malign (M).
The \textit{Abalone} data \cite{abalone_1} the problem is predicting the sex of abalone shell from physical measurements.
The \textit{Body fat} dataset \cite{penrose1985generalized} lists estimates of the percentage of body fat determined by underwater
weighing and various body circumference measurements for 252 men. The target variable has three classes. The \textit{Iris} dataset \cite{iris_53} is a well known dataset with 50 samples from each of the three species. The task is to classify the flowers based on four features. The \textit{Connectionist data} \cite{connectionist_bench} contains 111 patterns obtained by bouncing sonar signals off a metal cylinder at various angles and under various conditions. Each pattern is a set of 60 numbers in the range 0.0 to 1.0.  Each number represents the energy within a particular frequency band, integrated over a certain period of time.The label associated with each record contains the letter "R" if the object is a rock and "M" if it is a mine (metal cylinder).

The \textit{Parkinson's Disease (Parkinson)}  data set \cite{parkinson_link} contains biomedical voice measurements.
The aim is to discriminate healthy people from those with PD.
There are two wine related datasets: \textit{Wine data} \cite{wine_quality_186} which is related to red variants of the Portuguese "Vinho Verde" wine. The dataset describes the amount of various chemicals present in wine and their effect on it's quality. The datasets can be viewed as classification task. The classes are ordered and not balanced (e.g. there are much more normal wines than excellent or poor ones). The original task was to predict the quality wine score (0-10) using the given data. This problem was reformulated and the predict good quality wines with score greater or equal than $7$ and average quality wines with their score smaller than $7$.
The second one is \textit{Wine quality} data \cite{wine_109} which contains results of a chemical analysis of wines grown in the same region in Italy but derived from three different cultivars. The analysis determined the quantities of 13 constituents found in each of the three types of wines. 

\subsection{Results and discussion}
The results are presented in the Tables: \ref{tab:Abalone_Body fat}, \ref{tab:WBCD_Connect}, \ref{tab:Iris_Parkinson} and \ref{tab:Wine}. We use for each dataset accuracy, precision, recall and $f1$ scores. When there were more than two classes we used their weighted average, implemented in Python. This way the problems which occur because of unbalanced data can also be observed and compared. 

For each of the datasets the tables are presented in two rows. The classification Cases where all features were utilized in the classification are highlighted in yellow. For Logistic regression and SVM we used high value for the inverse of the regularization term $C=1000$ in order to avoid regularization and use all the features.
In the white columns, besides Gaussian Naive Bayes (GaussNB), Kernel Naive Bayes (KernelNB) we apply Logistic Regression (LR) and Support Vector Machine (SVM) with different regularization coefficients. 

Our newly introduced methods: Gaussian GNB (GNBGauss), Gauss copula with kernel estimated marginals (GNB G-copula) and the most general method where either the joint probability density either the copula an the uni-variate marginals are estimated by kernel densities(GNB KDE). Moreover, we introduced the model reduction possibility in subsection \ref{subsec:model reduction}. The automatically reduced models were denoted by appending a star to their names.

\vspace{3mm}
In Table \ref{tab:Abalone_Body fat}, on \textit{Abalone} dataset: The "all features" winner based on $f1$score is LR (C=1000) which is also the overall winner; Among the new GNB methods, the highest $f1$ score was achieved by GNB KDE ($96\%$ of the winner $f1$score).  On \textit{Body fat} dataset "all feature" winner and overall winner is  GNB G-Cop*.  Default LR(C=1) and SVM(C=1) are dominated by their version with C=1000 (no regularization). SVM (C=1000) was the closest to the winner ( $99\%$ of the winner $f1$ score).

In Table \ref{tab:WBCD_Connect}, on \textit{WBCD} dataset SVM (C=1000) is the "all features" winner, whereas the reduced model winner is LR (C=1); Among our methods The GNB G-cop was the best ($97.5\%$ of the winner $f1$ score).
On \textit{Connect} dataset GNB KDE model is the "all feature" winner while the reduced  GNB KDE* model is the overall best. 

In Table \ref{tab:Iris_Parkinson}, on \textit{Iris} dataset GNB G-cop was "all features" winner, while the reduced model GNB G-cop* also the overall winner; on \textit{Parkinson} dataset SVM(C=1) is the the overall winner. Among our methods the winner is GNB KDE which achieves ($95.8\%$ of the winner $f1$ score)

In Table \ref{tab:Wine}, \textit{Wine} dataset the "all features" and overall winner are the GNBGauss and GNB G-cop methods, among the other methods GaussNB performed also very well. In \textit{Wine quality} dataset the "all features" and overall winner are the GNB KDE  method, among the reduced methods GNB G-cop was the best one. Interestingly to note that LR and SVM methods classified very poorly. On the other hand GaussNB and K.NB performed better than LR and SVM with different parameter setting. 

For a good overview,  we calculate for each of the datasets the relative $f1$ scores as a fraction obtained by dividing the $f1$ score obtained by each of the methods with the highest $f1$ score achieved on the given dataset (mean over 10 random sampling). Then we calculate the average of these relative $f1$ scores for each of the methods over all of the datasets. The results are presented in Figure \ref{összehasonlítás}.
\begin{figure}[h]
    \centering
    \includegraphics[scale=0.32]{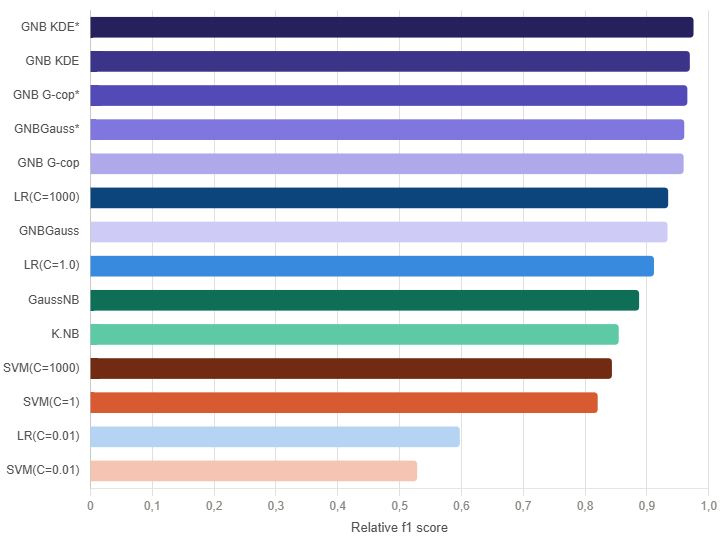}
    \caption{Average of the relative $f1$ scores for each of the methods over all datasets}   
    \label{összehasonlítás}
\end{figure}
We mention here that in some of the cases the automatic reduced model chores algorithm also decided to use all the explanatory variable. Also it is worth to draw the attention that we use here Gaussian kernels without tuning their parameters, therefore we are sure that for a given problem of interest it is worth to finetuning the parameters in order to get better results. We illustrate this on the Iris data, see Figure \ref{KDE Iris}. 

In conclusion, the newly introduced methods are competitive with established glass-box classification methods, such as logistic regression and support vector machines. In many cases, they even outperformed these traditional glass-box approaches. Also important to see that this methods are capable to use as many features as possible by avoiding redundancy since our models exploit the conditional independencies. 
We highlight here that our reduction method \ref{Appendix:model_reduction_graphs} outputs graphs based on which human decision maker may intervene in the process. These graphs represents how $f1$ score (calculated on the training data) depends on the number of clusters involved on each train test split and the graph obtained by averaging over 10 random splits. Based on these graphs the user may decide on how many clusters to use in the classification task. However in all our experiments this was done automatically, by choosing the number of clusters which achieved the highest $f1$ score on the average of the 10 runs.

\begin{landscape}
    % Please add the following required packages to your document preamble:
% \usepackage[table,xcdraw]{xcolor}
% Beamer presentation requires \usepackage{colortbl} instead of \usepackage[table,xcdraw]{xcolor}
% Please add the following required packages to your document preamble:
% \usepackage[table,xcdraw]{xcolor}
% Beamer presentation requires \usepackage{colortbl} instead of \usepackage[table,xcdraw]{xcolor}
% Please add the following required packages to your document preamble:
% \usepackage[table,xcdraw]{xcolor}
% Beamer presentation requires \usepackage{colortbl} instead of \usepackage[table,xcdraw]{xcolor}
% Please add the following required packages to your document preamble:
% \usepackage[table,xcdraw]{xcolor}
% Beamer presentation requires \usepackage{colortbl} instead of \usepackage[table,xcdraw]{xcolor}

% Please add the following required packages to your document preamble:
% \usepackage[table,xcdraw]{xcolor}
% Beamer presentation requires \usepackage{colortbl} instead of \usepackage[table,xcdraw]{xcolor}
\begin{table}[]
\caption{The table contains the mean accuracy scores (rows) for different methods (columns) over 10 random splits on the Abalone and Body fat test data. We highlighted in yellow when all features were used, and by star our reduced models.}
\label{tab:Abalone_Body fat}
\begin{tabular}{cc
>{\columncolor[HTML]{FFFFCC}}c 
>{\columncolor[HTML]{FFFFCC}}c c
>{\columncolor[HTML]{FFFFFF}}c c
>{\columncolor[HTML]{FFFFFF}}c }
\cellcolor[HTML]{DAF2D0}\textbf{Abalone} & \cellcolor[HTML]{F7C7AC}GaussNB & \cellcolor[HTML]{F7C7AC}K.NB & \cellcolor[HTML]{F7C7AC}LR(C=1000) & \cellcolor[HTML]{F7C7AC}LR(C=1.0) & \cellcolor[HTML]{F7C7AC}LR(C=0,01) & \cellcolor[HTML]{F7C7AC}SVM(C=1000) & \cellcolor[HTML]{F7C7AC}SVM(C=1) \\
accuracy & \cellcolor[HTML]{FFFFCC}0.5270 & 0.5118 & 0.5593 & \cellcolor[HTML]{FFFFFF}0.5577 & 0.5308 & \cellcolor[HTML]{FFFFCC}0.5560 & 0.5451 \\
precision & \cellcolor[HTML]{FFFFCC}0.5127 & 0.3319 & 0.5448 & \cellcolor[HTML]{FFFFFF}0.5443 & 0.5421 & \cellcolor[HTML]{FFFFCC}0.5439 & 0.6938 \\
recall & \cellcolor[HTML]{FFFFCC}0.5270 & 0.5118 & 0.5593 & \cellcolor[HTML]{FFFFFF}0.5577 & 0.5308 & \cellcolor[HTML]{FFFFCC}0.5560 & 0.5451 \\
f1-score & \cellcolor[HTML]{FFFFCC}0.4783 & 0.4012 & \textbf{0.5459} & \cellcolor[HTML]{FFFFFF}0.5420 & 0.4408 & \cellcolor[HTML]{FFFFCC}0.5235 & 0.4475 \\
 & \cellcolor[HTML]{F7C7AC}SVM(C=0.01) & \cellcolor[HTML]{F7C7AC}GNBGauss & \cellcolor[HTML]{F7C7AC}GNB G-cop & \cellcolor[HTML]{F7C7AC}GNB KDE & \cellcolor[HTML]{F7C7AC}GNBGauss*(4/6) & \cellcolor[HTML]{F7C7AC}GNB G-cop* (4/6)) & \cellcolor[HTML]{F7C7AC}GNB KDE*(6/6) \\
accuracy & \cellcolor[HTML]{FFFFFF}0.5278 & 0.5271 & 0.5204 & \cellcolor[HTML]{FFFFCC}0.5376 & 0.5432 & \cellcolor[HTML]{FFFFFF}0.5313 & 0.5376 \\
precision & \cellcolor[HTML]{FFFFFF}0.3751 & 0.5113 & 0.5015 & \cellcolor[HTML]{FFFFCC}0.5207 & 0.5346 & \cellcolor[HTML]{FFFFFF}0.5133 & 0.5207 \\
recall & \cellcolor[HTML]{FFFFFF}0.5278 & 0.5271 & 0.5204 & \cellcolor[HTML]{FFFFCC}0.5376 & 0.5432 & \cellcolor[HTML]{FFFFFF}0.5313 & 0.5376 \\
f1-score & \cellcolor[HTML]{FFFFFF}0.4344 & 0.5002 & 0.4982 & \cellcolor[HTML]{FFFFCC}0.5238 & 0.5165 & \cellcolor[HTML]{FFFFFF}0.5077 & 0.5238 \\
\cellcolor[HTML]{DAF2D0}\textbf{Body fat} & \cellcolor[HTML]{F7C7AC}GaussNB & \cellcolor[HTML]{F7C7AC}K,NB & \cellcolor[HTML]{F7C7AC}LR(C=1000) & \cellcolor[HTML]{F7C7AC}LR(C=1,0) & \cellcolor[HTML]{F7C7AC}LR(C=0,01) & \cellcolor[HTML]{F7C7AC}SVM(C=1000) & \cellcolor[HTML]{F7C7AC}SVM(C=1) \\
accuracy & \cellcolor[HTML]{FFFFCC}0.6553 & 0.6289 & 0.7947 & \cellcolor[HTML]{FFFFFF}0.7500 & 0.5789 & \cellcolor[HTML]{FFFFCC}0.8211 & 0.7500 \\
precision & \cellcolor[HTML]{FFFFCC}0.6811 & 0.6601 & 0.7925 & \cellcolor[HTML]{FFFFFF}0.7683 & 0.3352 & \cellcolor[HTML]{FFFFCC}0.8194 & 0.7697 \\
recall & \cellcolor[HTML]{FFFFCC}0.6553 & 0.6289 & 0.7947 & \cellcolor[HTML]{FFFFFF}0.7500 & 0.5789 & \cellcolor[HTML]{FFFFCC}0.8211 & 0.7500 \\
f1-score & \cellcolor[HTML]{FFFFCC}0.6576 & 0.6290 & 0.7935 & \cellcolor[HTML]{FFFFFF}0.7273 & 0.4246 & \cellcolor[HTML]{FFFFCC}0.8202 & 0.7240 \\
 & \cellcolor[HTML]{F7C7AC}SVM(C=0,01) & \cellcolor[HTML]{F7C7AC}GNBGauss & \cellcolor[HTML]{F7C7AC}GNB G-cop & \cellcolor[HTML]{F7C7AC}GNB KDE & \cellcolor[HTML]{F7C7AC}GNBGauss*(2/12) & \cellcolor[HTML]{F7C7AC}GNB G-cop* (10/12) & \cellcolor[HTML]{F7C7AC}GNB KDE* (9/12) \\
accuracy & \cellcolor[HTML]{FFFFFF}0.5789 & 0.7579 & 0.8105 & \cellcolor[HTML]{FFFFCC}0.7579 & 0.8132 & \cellcolor[HTML]{FFFFFF}0.8316 & 0.7868 \\
precision & \cellcolor[HTML]{FFFFFF}0.3352 & 0.7534 & 0.8096 & \cellcolor[HTML]{FFFFCC}0.7506 & 0.8100 & \cellcolor[HTML]{FFFFFF}0.8302 & 0.7818 \\
recall & \cellcolor[HTML]{FFFFFF}0.5789 & 0.7579 & 0.8105 & \cellcolor[HTML]{FFFFCC}0.7579 & 0.8132 & \cellcolor[HTML]{FFFFFF}0.8316 & 0.7868 \\
f1-score & \cellcolor[HTML]{FFFFFF}0.4246 & 0.7503 & 0.8068 & \cellcolor[HTML]{FFFFCC}0.7515 & 0.8107 & \cellcolor[HTML]{FFFFFF}\textbf{0.8280} & 0.7813
\end{tabular}
\end{table}
\end{landscape}
\newpage

\begin{landscape}
   % Please add the following required packages to your document preamble:
% \usepackage[table,xcdraw]{xcolor}
% Beamer presentation requires \usepackage{colortbl} instead of \usepackage[table,xcdraw]{xcolor}
% Please add the following required packages to your document preamble:
% \usepackage[table,xcdraw]{xcolor}
% Beamer presentation requires \usepackage{colortbl} instead of \usepackage[table,xcdraw]{xcolor}

    % Please add the following required packages to your document preamble:
% \usepackage[table,xcdraw]{xcolor}
% Beamer presentation requires \usepackage{colortbl} instead of \usepackage[table,xcdraw]{xcolor}
% Please add the following required packages to your document preamble:
% \usepackage[table,xcdraw]{xcolor}
% Beamer presentation requires \usepackage{colortbl} instead of \usepackage[table,xcdraw]{xcolor}
% Please add the following required packages to your document preamble:
% \usepackage[table,xcdraw]{xcolor}
% Beamer presentation requires \usepackage{colortbl} instead of \usepackage[table,xcdraw]{xcolor}
\begin{table}[]
\caption{The table contains the mean accuracy scores (rows) for different methods (columns) over 10 random splits on the WBCD and Connectionist fat test data. We highlighted in yellow when all features were used, and by star our reduced models.}
\label{tab:WBCD_Connect}
\begin{tabular}{cc
>{\columncolor[HTML]{FFFFCC}}c 
>{\columncolor[HTML]{FFFFCC}}c c
>{\columncolor[HTML]{FFFFFF}}c c
>{\columncolor[HTML]{FFFFFF}}c }
\cellcolor[HTML]{DAF2D0}\textbf{WBCD} & \cellcolor[HTML]{F7C7AC}GaussNB & \cellcolor[HTML]{F7C7AC}K.NB & \cellcolor[HTML]{F7C7AC}LR(C=1000) & \cellcolor[HTML]{F7C7AC}LR(C=1.0) & \cellcolor[HTML]{F7C7AC}LR(C=0,01) & \cellcolor[HTML]{F7C7AC}SVM(C=1000) & \cellcolor[HTML]{F7C7AC}SVM(C=1) \\
accuracy & \cellcolor[HTML]{FFFFCC}0.9360 & 0.9151 & 0.9663 & \cellcolor[HTML]{FFFFFF}0.9698 & 0.7907 & \cellcolor[HTML]{FFFFCC}0.9651 & 0.9779 \\
precision & \cellcolor[HTML]{FFFFCC}0.9316 & 0.9881 & 0.9709 & \cellcolor[HTML]{FFFFFF}0.9933 & 1.0000 & \cellcolor[HTML]{FFFFCC}0.9708 & 0.9902 \\
recall & \cellcolor[HTML]{FFFFCC}0.8938 & 0.7813 & 0.9375 & \cellcolor[HTML]{FFFFFF}0.9250 & 0.4375 & \cellcolor[HTML]{FFFFCC}0.9344 & 0.950000 \\
f1-score & \cellcolor[HTML]{FFFFCC}0.9123 & 0.8726 & \textbf{0.9539} & \cellcolor[HTML]{FFFFFF}0.9579 & 0.6087 & \cellcolor[HTML]{FFFFCC}0.9522 & \textbf{0.9697} \\
 & \cellcolor[HTML]{F7C7AC}SVM(C=0.01) & \cellcolor[HTML]{F7C7AC}GNBGauss & \cellcolor[HTML]{F7C7AC}GNB G-cop & \cellcolor[HTML]{F7C7AC}GNB KDE & \cellcolor[HTML]{F7C7AC}GNBGauss* (29/29) & \cellcolor[HTML]{F7C7AC}GNB G-cop* (29/29) & \cellcolor[HTML]{F7C7AC}GNB KDE* (28/29) \\
accuracy & \cellcolor[HTML]{FFFFFF}0.8756 & 0.9570 & 0.9593 & \cellcolor[HTML]{FFFFCC}0.9488 & 0.9570 & \cellcolor[HTML]{FFFFFF}0.9593 & 0.9477 \\
precision & \cellcolor[HTML]{FFFFFF}1.0000 & 0.9408 & 0.9439 & \cellcolor[HTML]{FFFFCC}0.9423 & 0.9408 & \cellcolor[HTML]{FFFFFF}0.9439 & 0.9421 \\
recall & \cellcolor[HTML]{FFFFFF}0.665625 & 0.9438 & 0.9469 & \cellcolor[HTML]{FFFFCC}0.9188 & 0.9438 & \cellcolor[HTML]{FFFFFF}0.9469 & 0.9156 \\
f1-score & \cellcolor[HTML]{FFFFFF}0.7993 & 0.9423 & 0.9454 & \cellcolor[HTML]{FFFFCC}0.9304 & 0.9423 & \cellcolor[HTML]{FFFFFF}0.9454 & 0.9287 \\
\cellcolor[HTML]{DAF2D0}\textbf{Connect} & \cellcolor[HTML]{F7C7AC}GaussNB & \cellcolor[HTML]{F7C7AC}K.NB & \cellcolor[HTML]{F7C7AC}LR(C=1000) & \cellcolor[HTML]{F7C7AC}LR(C=1.0) & \cellcolor[HTML]{F7C7AC}LR(C=0,01) & \cellcolor[HTML]{F7C7AC}SVM(C=1000) & \cellcolor[HTML]{F7C7AC}SVM(C=1) \\
accuracy & \cellcolor[HTML]{FFFFCC}0.6813 & 0.7000 & 0.7719 & \cellcolor[HTML]{FFFFFF}0.7969 & 0.6906 & \cellcolor[HTML]{FFFFCC}0.7656 & 0.7938 \\
precision & \cellcolor[HTML]{FFFFCC}0.7656 & 0.8304 & 0.7680 & \cellcolor[HTML]{FFFFFF}0.8000 & 0.6320 & \cellcolor[HTML]{FFFFCC}0.7654 & 0.7889 \\
recall & \cellcolor[HTML]{FFFFCC}0.5765 & 0.5471 & 0.8176 & \cellcolor[HTML]{FFFFFF}0.8235 & 1.0000 & \cellcolor[HTML]{FFFFCC}0.8059 & 0.8353 \\
f1-score & \cellcolor[HTML]{FFFFCC}0.6577 & 0.6596 & 0.7920 & \cellcolor[HTML]{FFFFFF}0.8116 & 0.7745 & \cellcolor[HTML]{FFFFCC}0.7851 & 0.8114 \\
 & \cellcolor[HTML]{F7C7AC}SVM(C=0.01) & \cellcolor[HTML]{F7C7AC}GNBGauss & \cellcolor[HTML]{F7C7AC}GNB G-cop & \cellcolor[HTML]{F7C7AC}GNB KDE & \cellcolor[HTML]{F7C7AC}GNBGauss* (44/59) & \cellcolor[HTML]{F7C7AC}GNB G-cop* (57/59) & \cellcolor[HTML]{F7C7AC}GNB KDE* (46/59) \\
accuracy & \cellcolor[HTML]{FFFFFF}0.5313 & 0.7563 & 0.8375 & \cellcolor[HTML]{FFFFCC}0.8563 & 0.8469 & \cellcolor[HTML]{FFFFFF}0.8469 & 0.8656 \\
precision & \cellcolor[HTML]{FFFFFF}0.5313 & 0.8151 & 0.8172 & \cellcolor[HTML]{FFFFCC}0.8407 & 0.8805 & \cellcolor[HTML]{FFFFFF}0.8235 & 0.8470 \\
recall & \cellcolor[HTML]{FFFFFF}1.0000 & 0.7000 & 0.8941 & \cellcolor[HTML]{FFFFCC}0.9000 & 0.8235 & \cellcolor[HTML]{FFFFFF}0.9059 & 0.9118 \\
f1-score & \cellcolor[HTML]{FFFFFF}0.6939 & 0.7532 & 0.8539 & \cellcolor[HTML]{FFFFCC}0.8693 & 0.8511 & \cellcolor[HTML]{FFFFFF}\textbf{0.8627} & \textbf{0.8782}
\end{tabular}
\end{table}
\end{landscape}

\begin{landscape}
 % Please add the following required packages to your document preamble:
% \usepackage[table,xcdraw]{xcolor}
% Beamer presentation requires \usepackage{colortbl} instead of \usepackage[table,xcdraw]{xcolor}
\begin{table}[]
\caption{The table contains the mean accuracy scores (rows) for different methods (columns) over the same 10 random splits on the Iris and Parkinson data. We highlighted in yellow when all features were used, and by star our reduced models.}
\label{tab:Iris_Parkinson}
\begin{tabular}{cc
>{\columncolor[HTML]{FFFFCC}}c 
>{\columncolor[HTML]{FFFFCC}}c c
>{\columncolor[HTML]{FFFFFF}}c c
>{\columncolor[HTML]{FFFFFF}}c }
\cellcolor[HTML]{DAF2D0}\textbf{Iris} & \cellcolor[HTML]{F7C7AC}\textbf{GaussNB} & \cellcolor[HTML]{F7C7AC}\textbf{K.NB} & \cellcolor[HTML]{F7C7AC}\textbf{LR(C=1000)} & \cellcolor[HTML]{F7C7AC}\textbf{LR(C=1.0)} & \cellcolor[HTML]{F7C7AC}\textbf{LR(C=0,01)} & \cellcolor[HTML]{F7C7AC}\textbf{SVM(C=1000)} & \cellcolor[HTML]{F7C7AC}\textbf{SVM(C=1)} \\
accuracy & \cellcolor[HTML]{FFFFCC}0.9565 & 0.9087 & 0.9870 & \cellcolor[HTML]{FFFFFF}0.9261 & 0.7391 & \cellcolor[HTML]{FFFFCC}0.9783 & 0.9565 \\
precision & \cellcolor[HTML]{FFFFCC}0.9567 & 0.9137 & 0.9870 & \cellcolor[HTML]{FFFFFF}0.9272 & 0.8217 & \cellcolor[HTML]{FFFFCC}0.9783 & 0.9567 \\
recall & \cellcolor[HTML]{FFFFCC}0.9565 & 0.9087 & 0.9870 & \cellcolor[HTML]{FFFFFF}0.9261 & 0.7391 & \cellcolor[HTML]{FFFFCC}0.9783 & 0.9565 \\
f1-score & \cellcolor[HTML]{FFFFCC}0.9565 & 0.9082 & \textbf{0.9870} & \cellcolor[HTML]{FFFFFF}0.9260 & 0.6825 & \cellcolor[HTML]{FFFFCC}0.9783 & \textbf{0.9565} \\
 & \cellcolor[HTML]{F7C7AC}\textbf{SVM(C=0.01)} & \cellcolor[HTML]{F7C7AC}\textbf{GNBGauss} & \cellcolor[HTML]{F7C7AC}\textbf{GNB G-cop} & \cellcolor[HTML]{F7C7AC}\textbf{GNB KDE} & \cellcolor[HTML]{F7C7AC}\textbf{GNBGauss* (2/4)} & \cellcolor[HTML]{F7C7AC}\textbf{GNB G-cop* (3/4)} & \cellcolor[HTML]{F7C7AC}\textbf{GNB KDE* (2/4)} \\
accuracy & \cellcolor[HTML]{FFFFFF}0.3043 & 0.9826 & 0.9913 & \cellcolor[HTML]{FFFFCC}0.9826 & 0.9870 & \cellcolor[HTML]{FFFFFF}0.9913 & 0.9826 \\
precision & \cellcolor[HTML]{FFFFFF}0.3043 & 0.9828 & 0.9913 & \cellcolor[HTML]{FFFFCC}0.9828 & 0.9870 & \cellcolor[HTML]{FFFFFF}0.9913 & 0.9828 \\
recall & \cellcolor[HTML]{FFFFFF}0.3043 & 0.9826 & 0.9913 & \cellcolor[HTML]{FFFFCC}0.9826 & 0.9870 & \cellcolor[HTML]{FFFFFF}0.9913 & 0.9826 \\
f1-score & \cellcolor[HTML]{FFFFFF}0.2806 & 0.9826 & \textbf{0.9913} & \cellcolor[HTML]{FFFFCC}0.9826 & 0.9870 & \cellcolor[HTML]{FFFFFF}\textbf{0.9913} & 0.9826 \\
\cellcolor[HTML]{DAF2D0}\textbf{Parkinson} & \cellcolor[HTML]{F7C7AC}\textbf{GaussNB} & \cellcolor[HTML]{F7C7AC}\textbf{K.NB} & \cellcolor[HTML]{F7C7AC}\textbf{LR(C=1000)} & \cellcolor[HTML]{F7C7AC}\textbf{LR(C=1.0)} & \cellcolor[HTML]{F7C7AC}\textbf{LR(C=0,01)} & \cellcolor[HTML]{F7C7AC}\textbf{SVM(C=1000)} & \cellcolor[HTML]{F7C7AC}\textbf{SVM(C=1)} \\
accuracy & \cellcolor[HTML]{FFFFCC}0.7267 & 0.7833 & 0.8500 & \cellcolor[HTML]{FFFFFF}0.8600 & 0.7667 & \cellcolor[HTML]{FFFFCC}0.8267 & 0.8700 \\
precision & \cellcolor[HTML]{FFFFCC}0.8230 & 0.8702 & 0.8455 & \cellcolor[HTML]{FFFFFF}0.8610 & 0.5878 & \cellcolor[HTML]{FFFFCC}0.8193 & 0.8838 \\
recall & \cellcolor[HTML]{FFFFCC}0.7267 & 0.7833 & 0.8500 & \cellcolor[HTML]{FFFFFF}0.8600 & 0.7667 & \cellcolor[HTML]{FFFFCC}0.8267 & 0.8700 \\
f1-score & \cellcolor[HTML]{FFFFCC}0.7472 & 0.7994 & 0.8472 & \cellcolor[HTML]{FFFFFF}0.8438 & 0.6654 & \cellcolor[HTML]{FFFFCC}0.8218 & \textbf{0.8515} \\
 & \cellcolor[HTML]{F7C7AC}\textbf{SVM(C=0.01)} & \cellcolor[HTML]{F7C7AC}\textbf{GNBGauss} & \cellcolor[HTML]{F7C7AC}\textbf{GNB G-cop} & \cellcolor[HTML]{F7C7AC}\textbf{GNB KDE} & \cellcolor[HTML]{F7C7AC}\textbf{GNBGauss* (14/15)} & \cellcolor[HTML]{F7C7AC}\textbf{GNB G-cop*} & \cellcolor[HTML]{F7C7AC}\textbf{GNB KDE*(15/15)} \\
accuracy & \cellcolor[HTML]{FFFFFF}0.7667 & 0.7067 & 0.8067 & \cellcolor[HTML]{FFFFCC}0.8067 & 0.7000 & \cellcolor[HTML]{FFFFFF}0.8067 & 0.8067 \\
precision & \cellcolor[HTML]{FFFFFF}0.5878 & 0.7886 & 0.8195 & \cellcolor[HTML]{FFFFCC}0.8350 & 0.7861 & \cellcolor[HTML]{FFFFFF}0.8048 & 0.8350 \\
recall & \cellcolor[HTML]{FFFFFF}0.7667 & 0.7067 & 0.8067 & \cellcolor[HTML]{FFFFCC}0.8067 & 0.7000 & \cellcolor[HTML]{FFFFFF}0.8067 & 0.8067 \\
f1-score & \cellcolor[HTML]{FFFFFF}0.6654 & 0.7277 & 0.8117 & \cellcolor[HTML]{FFFFCC}0.8155 & 0.7218 & \cellcolor[HTML]{FFFFFF}\textbf{0.8057} & \textbf{0.8155}
\end{tabular}
\end{table}
\end{landscape}
\newpage
\begin{landscape}
% Please add the following required packages to your document preamble:
% \usepackage[table,xcdraw]{xcolor}
% Beamer presentation requires \usepackage{colortbl} instead of % Please add the following required packages to your document preamble:
% \usepackage[table,xcdraw]{xcolor}
% Beamer presentation requires \usepackage{colortbl} instead of \usepackage[table,xcdraw]{xcolor}
% Please add the following required packages to your document preamble:
% \usepackage[table,xcdraw]{xcolor}
% Beamer presentation requires \usepackage{colortbl} instead of \usepackage[table,xcdraw]{xcolor}
\begin{table}[]
\caption{The table contains the mean accuracy scores (rows) for different methods (columns) over the same 10 random splits on the Wine and Wine quality data. We highlighted in yellow when all features were used, and by star our reduced models.}
\label{tab:Wine}
\begin{tabular}{cc
>{\columncolor[HTML]{FFFFCC}}c 
>{\columncolor[HTML]{FFFFCC}}c ccc
>{\columncolor[HTML]{FFFFFF}}c }
\cellcolor[HTML]{DAF2D0}\textbf{Wine} & \cellcolor[HTML]{F7C7AC}GaussNB & \cellcolor[HTML]{F7C7AC}K.NB & \cellcolor[HTML]{F7C7AC}LR(C=1000) & \cellcolor[HTML]{F7C7AC}LR(C=1.0) & \cellcolor[HTML]{F7C7AC}LR(C=0,01) & \cellcolor[HTML]{F7C7AC}SVM(C=1000) & \cellcolor[HTML]{F7C7AC}SVM(C=1) \\
accuracy & \cellcolor[HTML]{FFFFCC}0.9926 & 0.9222 & 0.9741 & \cellcolor[HTML]{FFFFFF}0.9778 & \cellcolor[HTML]{FFFFFF}0.5630 & \cellcolor[HTML]{FFFFCC}0.9667 & 0.9778 \\
precision & \cellcolor[HTML]{FFFFCC}0.9928 & 0.9318 & 0.9740 & \cellcolor[HTML]{FFFFFF}0.9786 & \cellcolor[HTML]{FFFFFF}0.5299 & \cellcolor[HTML]{FFFFCC}0.9672 & 0.9795 \\
recall & \cellcolor[HTML]{FFFFCC}0.9926 & 0.9222 & 0.9741 & \cellcolor[HTML]{FFFFFF}0.9778 & \cellcolor[HTML]{FFFFFF}0.5630 & \cellcolor[HTML]{FFFFCC}0.9667 & 0.9778 \\
f1-score & \cellcolor[HTML]{FFFFCC}0.9926 & 0.9207 & 0.9740 & \cellcolor[HTML]{FFFFFF}0.9777 & \cellcolor[HTML]{FFFFFF}0.4773 & \cellcolor[HTML]{FFFFCC}0.9666 & 0.9779 \\
 & \cellcolor[HTML]{F7C7AC}SVM(C=0.01) & \cellcolor[HTML]{F7C7AC}GNBGauss & \cellcolor[HTML]{F7C7AC}GNB G-cop & \cellcolor[HTML]{F7C7AC}GNB KDE & \cellcolor[HTML]{F7C7AC}GNBGauss* (12/12) & \cellcolor[HTML]{F7C7AC}GNB G-cop* (12/12) & \cellcolor[HTML]{F7C7AC}GNB KDE* (12/12) \\
accuracy & \cellcolor[HTML]{FFFFFF}0.4074 & 1.0000 & 1.0000 & \cellcolor[HTML]{FFFFCC}0.9889 & 1.0000 & 1.0000 & 0.9889 \\
precision & \cellcolor[HTML]{FFFFFF}0.1660 & 1.0000 & 1.0000 & \cellcolor[HTML]{FFFFCC}0.9893 & 1.0000 & 1.0000 & 0.9893 \\
recall & \cellcolor[HTML]{FFFFFF}0.4074 & 1.0000 & 1.0000 & \cellcolor[HTML]{FFFFCC}0.9889 & 1.0000 & 1.0000 & 0.9889 \\
f1-score & \cellcolor[HTML]{FFFFFF}0.2359 & \textbf{1.0000} & \textbf{1.0000} & \cellcolor[HTML]{FFFFCC}0.9889 & \textbf{1.0000} & \textbf{1.0000} & 0.9889 \\
\cellcolor[HTML]{DAF2D0}\textbf{Wine quality} & \cellcolor[HTML]{F7C7AC}GaussNB & \cellcolor[HTML]{F7C7AC}K.NB & \cellcolor[HTML]{F7C7AC}LR(C=1000) & \cellcolor[HTML]{F7C7AC}LR(C=1.0) & \cellcolor[HTML]{F7C7AC}LR(C=0,01) & \cellcolor[HTML]{F7C7AC}SVM(C=1000) & \cellcolor[HTML]{F7C7AC}SVM(C=1) \\
accuracy & \cellcolor[HTML]{FFFFCC}0.7443 & 0.6788 & 0.8185 & \cellcolor[HTML]{FFFFFF}0.8204 & \cellcolor[HTML]{FFFFFF}0.8031 & \cellcolor[HTML]{FFFFCC}0.8031 & 0.8031 \\
precision & \cellcolor[HTML]{FFFFCC}0.4040 & 0.3583 & 0.5884 & \cellcolor[HTML]{FFFFFF}0.6234 & \cellcolor[HTML]{FFFFFF}0.0000 & \cellcolor[HTML]{FFFFCC}0.0000 & 0.0000 \\
recall & \cellcolor[HTML]{FFFFCC}0.6281 & 0.7984 & 0.2599 & \cellcolor[HTML]{FFFFFF}0.2224 & \cellcolor[HTML]{FFFFFF}0.0000 & \cellcolor[HTML]{FFFFCC}0.0000 & 0.0000 \\
f1-score & \cellcolor[HTML]{FFFFCC}0.4917 & 0.4947 & 0.3605 & \cellcolor[HTML]{FFFFFF}0.3278 & \cellcolor[HTML]{FFFFFF}0.0000 & \cellcolor[HTML]{FFFFCC}0.0000 & 0.0000 \\
 & \cellcolor[HTML]{F7C7AC}SVM(C=0.01) & \cellcolor[HTML]{F7C7AC}GNBGauss & \cellcolor[HTML]{F7C7AC}GNB G-cop & \cellcolor[HTML]{F7C7AC}GNB KDE & \cellcolor[HTML]{F7C7AC}GNBGauss* (12/12) & \cellcolor[HTML]{F7C7AC}GNB G-cop* (10/10) & \cellcolor[HTML]{F7C7AC}GNB KDE* (12/12) \\
accuracy & \cellcolor[HTML]{FFFFFF}0.8031 & 0.7753 & 0.8009 & \cellcolor[HTML]{FFFFCC}0.8181 & \cellcolor[HTML]{FFFFFF}0.7753 & \cellcolor[HTML]{FFFFFF}0.8009 & 0.8181 \\
precision & \cellcolor[HTML]{FFFFFF}0.0000 & 0.4502 & 0.4942 & \cellcolor[HTML]{FFFFCC}0.5369 & \cellcolor[HTML]{FFFFFF}0.4502 & \cellcolor[HTML]{FFFFFF}0.4942 & 0.5369 \\
recall & \cellcolor[HTML]{FFFFFF}0.0000 & 0.6375 & 0.4688 & \cellcolor[HTML]{FFFFCC}0.5526 & \cellcolor[HTML]{FFFFFF}0.6375 & \cellcolor[HTML]{FFFFFF}0.4688 & 0.5526 \\
f1-score & \cellcolor[HTML]{FFFFFF}0.0000 & 0.5277 & 0.4812 & \cellcolor[HTML]{FFFFCC}\textbf{0.5447} & \cellcolor[HTML]{FFFFFF}0.5277 & \cellcolor[HTML]{FFFFFF}0.4812 & \textbf{0.5447}
\end{tabular}
\end{table}
\end{landscape}
\section{Conclusion}
\label{sec:Conclusions}
This paper discusses the classification based on fitting GNB structures to data when the explanatory variables are continuous. We introduce the theoretical background to solve this problem under different assumption on the joint probability distribution between the explanatory variables: joint a Gaussian distribution; the more general case when the uni-variate marginals may have various distributions, and the dependence is characterized by a Gaussian copula; and an even more general case when we have no restriction on the joint density (copula or marginals). 
We proved, that the KL divergence depends on the GNB structure only through the pair copulas associated to the bi-variate marginals.
We give under each of the assumptions a classification method. Moreover we give also a feature/cluster reduction methodology together with a classification based on the concept of GNB forest.
The GNB structure learning based on data was also discussed from combinatorial point of view. We proved that the GNB structures may be assigned to the basis of a matroid. This ensures the existence of a greedy algorithm with an optimality guarantee, in the sense of minimizing the KL divergence. For finding such greedy algorithm we defined a weighted complete graph on the set of indices, and proved that applying to it the Kruskal's algorithm we obtain the optimal GNB structure.
We introduced a more general structure allowing more complex dependence structure between the attributes ( $k\geq3$ interconnected features) called $k$-GNB models, and discussed from mathematical point of view the Gaussian case. For $k\geq3$ there are no guarantees for finding the optimal structure even for the Gaussian case.

%We proposed greedy algorithms for structure discovery. 

\textit{Advantages}: A significant strength of the new algorithms, in contrast to black-box algorithms, is their explainability. Therefore the GNB structures can be considered an interpretable machine learning tool. 
A huge advantage of using the simple GNB approach is that all marginal probability distributions used for classification are of order 3. Even in the more general cases, when $k>3$ it is an advantage to chose small values for $k$; Low dimensional marginals can be more trustfully approximated from data, even if the data is not very large. Therefore, our algorithms also work well in cases with many features and relatively small datasets.
We observed that via model reduction the accuracy of the classification  is typically improved. The output graphs are usable for experts in deciding on how many clusters should be considered. This way, they may intervene in the classification loop.
All the proposed algorithms for GNB were tested and compared under the same conditions. It turned out that our algorithms are competitive, in many cases even better than the popular Glass-box models.
Our algorithms were implemented in Python, a demo repository can be found at  
\newline  \url{https://github.com/AbrahamPapp/generalized-naive-bayes-demo}.
We will publish also the package developed by the first author.

\textit{Limitations}: The Gaussian classification method should be used with precaution. The copula based methods need in the classification task the kernel densities fitted to the training data. Using the default parameters may be suboptimal. We recommend fine-tuning the parameters involved with respect to the given problem.
Another drawback may occur even in the cases when we have nice guarantee on the optimality of structure on the training data, this gives no guarantee on its efficiency on the test data. However the method of reduction may overcome this issue.

\textit{Future work:}
We plan to accelerate the algorithm and discuss it for higher values of $k \geq 3$. Additionally, we intend to release a Python package containing these algorithms.

We anticipate that this work will be advantageous for classification problems in fields where acquiring large sample data is very expensive or time-consuming. 

\vspace{-3mm}
\section*{Acknowledgements}
\noindent The research reported in this paper was partially supported by project no. BME-NVA-02, implemented with the support provided by the Ministry of Innovation and Technology of Hungary from the National Research, Development and Innovation Fund, financed under the TKP2021 funding scheme.
%\section*{Declaration of competing interest}
%\noindent The authors declare that they have no known competing financial interests or personal relationships that could have appeared to influence the work reported in this paper.

%\bibliography{cas-refs.bib}
\bibliographystyle{elsarticle-num} 

\newpage

\appendix

\section{cherry junction tree of order $k$}
\label{Appendix:cherry tree}
We call the structure assigned to a cherry tree graph of order $k$ a \textbf{cherry-junction tree of order \textbf{k}}, and define it as follows:
\begin{enumerate}
\item  The set of vertices in each $k$th order maximum clique is called a \textbf{cluster}, and contains its $k$ vertices. 
\item Two $k$-element clusters are \textbf{connected} if the following two conditions are fulfilled:
\begin{itemize}
\item the clusters share $k-1$ elements
\item if a set of $k-1$ elements is contained by $m$ clusters, these clusters will be connected tree-like by $m-1$ edges.
\end{itemize}
\item The $k-1$-element set given by the intersection of two connected clusters is called a \textbf{separator}. The number of clusters containing it is the \textbf{multiplicity of the separator}.
\end{enumerate}
A cherry-junction tree is characterized by the set of indices $V$, the set of clusters $\mathcal{C}$, the set of separators  $\mathcal{S}$ and a dictionary of the multiplicities of the separators $\mathcal{M}$.

The associated probability distribution has the formula:
\begin{equation*}
\label{eq:Cherry_tree_original}
p_{GNB}( \mathbf{x}) =\frac{\prod\limits_{C\in \mathcal{C}}p\left( \mathbf{x}_{C}\right) }{\prod\limits_{S\in \mathcal{S}}p\left( \mathbf{x}_{S}\right)
^{v_{S}-1}}
\end{equation*}

\section{Information content of continuous and discrete random vector}
\label{Appendix:Information content}
\begin{proof}
The entropy of the random vector $(\mathbf{X},Y)=(X_1,...,X_d,Y)$ is the following:
\[
H(\mathbf{X},Y)=-\sum\limits_{y_{k}\in \mathcal{Y}}\int_{\mathbf{\chi}} p(\mathbf{x}%
,y_{k})\log p(\mathbf{x},y_{k})dx
\]

We express $p(\mathbf{x},y)$ from the conditional probability distribution
formula:%
\[
p(\mathbf{x}|Y)=\frac{p(\mathbf{x},y)}{p(Y)}\implies p(\mathbf{x},y)=p(Y)p(\mathbf{x}|Y)
\]%
and substitute it in the formula of the joint entropy:

\begin{eqnarray*}
H(\mathbf{X},Y) &=&-\sum\limits_{y_{k}\in \mathcal{Y}}\int_{\mathbf{\chi}} p(y_{k})p(\mathbf{%
x}|y_{k})\log [p(y_{k})p(\mathbf{x}|y_{k})]dx= \\
&=&-\sum\limits_{y_{k}\in \mathcal{Y}}p(y_{k})\int_{\mathbf{\chi}}  p(\mathbf{x}|y_{k})\log
[p(y_{k})p(\mathbf{x}|y_{k})]dx= \\
&=&-\sum\limits_{y_{k}\in \mathcal{Y}}\left[ p(y_{k})\int_{\mathbf{\chi}}  p(\mathbf{x}%
|y_{k})\log p(y_{k})+p(\mathbf{x}|y_{k})\log p(\mathbf{x}|y_{k})\right]dx=
\\
&=&-\sum\limits_{y_{k}\in \mathcal{Y}}\left( p(y_{k})\log p(y_{k})\underset{1%
}{\underbrace{\int_{\mathbf{\chi}}  p(\mathbf{x}|y_{k})dx}}+p(y_{k})\int_{\mathbf{\chi}}  p(\mathbf{x}%
|y_{k})\log p(\mathbf{x}|y_{k})dx\right) = \\
&=&-\sum\limits_{y_{k}\in \mathcal{Y}}p(y_{k})\log p(y_{k}) 
+\sum\limits_{y_{k}\in \mathcal{Y}}p(y_{k})H(\mathbf{X}%
|y_{k})=\\
&=&H(Y)+\sum\limits_{y_{k}\in \mathcal{Y}}p(y_{k})H(\mathbf{X}|y_{k})
\end{eqnarray*}
By using the formula of joint entropy, we can express the formula of information content as follows:
\begin{eqnarray*}
I(\mathbf{X},Y) &=&\sum\limits_{i=1}^{d}H(X_{i})+H(Y)-H(\mathbf{X},Y)= \notag\\
&=&\sum\limits_{i=1}^{d}H(X_{i})+H(Y)-\left( H(Y)+\sum\limits_{y_{k}\in 
\mathcal{Y}}p(y_{k})H(\mathbf{X}|y_{k})\right)  \notag\\
&=&\sum\limits_{i=1}^{d}H(X_{i})-\sum\limits_{y_{k}\in \mathcal{Y}}p(y_{k})H(\mathbf{X}|y_{k})
\end{eqnarray*}
\end{proof}

\section{Mutual information Gauss and discrete random variables}
\label{Appendix: I(X,Y)_Gauss_proof}
\begin{eqnarray}
I(X,Y) &=&H(X)-\sum\limits_{y_{k}\in \mathcal{Y}}p(y_{k})H(X|y_{k})= \notag \\
&=&\frac{1}{2}\log (2\pi e\sigma_{X} ^{2})-\sum\limits_{y_{k}\in \mathcal{Y}%
}p(y_{k})\left[ \frac{1}{2}\log (2\pi e\sigma _{X|y_{k}}^{2})\right] = \notag \\
&=&\frac{1}{2}\left( \log 2\pi +1+\log \sigma_{X} ^{2}\right) +\frac{-1}{2}%
\sum\limits_{y_{k}\in \mathcal{Y}}p(y_{k})\left[ \log (2\pi )+1+\log \sigma
_{X|y_{k}}^{2}\right] = \notag \\
&=&\frac{1}{2}\left[ \left( \log 2\pi +1+\log\sigma_{X} ^{2}\right) -\log (2\pi
)-1-\sum\limits_{y_{k}\in \mathcal{Y}}p(y_{k})\log \sigma _{X|y_{k}}^{2}\right]
= \notag\\ 
&=&\frac{1}{2}\left(\log \sigma^2_{X} -\sum\limits_{y_{k}\in \mathcal{Y}%
}p(y_{k})\log \sigma ^2
_{X|y_{k}}\right)
\end{eqnarray}
\section{Evaluation metrics}
\label{Appendix:Accuracy}
In binary classification, beyond accuracy, precision and recall are often used metrics to evaluate the models. Let the labels we want to predict be `positive' and `negative', then we can define the \emph{confusion matrix}, in which the columns show the actual values and the rows show the predicted values:
\begin{table}[h!]
\begin{tabular}{cc|c|c|}
\cline{3-4}
\multicolumn{2}{c}{\multirow{2}{*}{}} & \multicolumn{2}{|c|}{Actual} \\ \cline{3-4} 
\multicolumn{2}{c|}{} & Positive & Negative \\ \hline
\multicolumn{1}{|c|}{\multirow{2}{*}{Predicted}} & Positive & \#\{\textbf{True Positive}\} & \#\{\textbf{False Positive}\} \\ \cline{2-4} 
\multicolumn{1}{|c|}{} & Negative & \#\{\textbf{False Negative}\} & \#\{\textbf{True Negative}\} \\ \hline
\end{tabular}
\end{table}

There are four possible outcomes according to the actual and the predicted label. The elements of the matrix show the occurrence of each outcome. Based on these accuracy, precision and recall can be defined as follows:

\begin{equation*}
    \textrm{Accuracy} = \frac{\#\{\textrm{True Positive}\}+\#\{\textrm{True Negative}\}}{\#\{Total\}}
\end{equation*}

\begin{equation*}
    \textrm{Precision} = \frac{\#\{\textrm{True Positive}\}}{\#\{\textrm{True Positive}\}+\#\{\textrm{False Positive}\}}
\end{equation*}

\begin{equation*}
    \textrm{Recall} = \frac{\#\{\textrm{True Positive}\}}{\#\{\textrm{True Positive}\}+\#\{\textrm{False Negative}\}}
\end{equation*}

\begin{equation*}
    \textrm{f1 score} = 2\times \frac{\textrm{Precision}\times \textrm{Recall}}{\textrm{Precision}+\textrm{Recall}}
\end{equation*}

\section{Model reduction based on graphs}
\label{Appendix:model_reduction_graphs} 

\begin{figure}[H]
    \centering
    \includegraphics[width=1\textwidth]{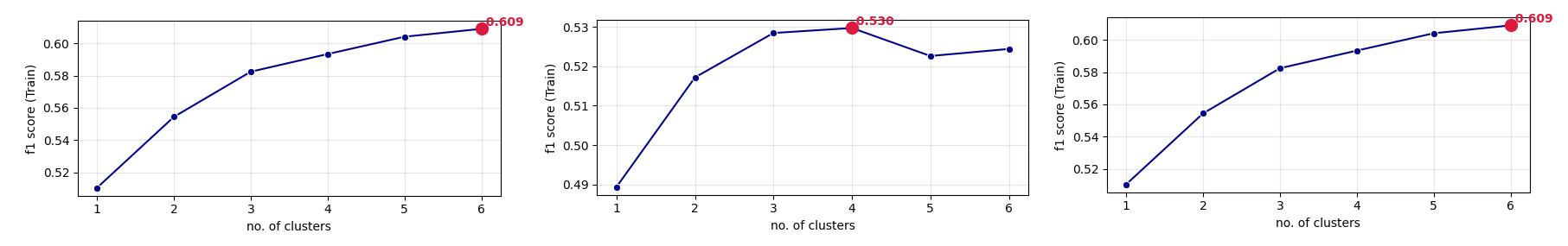}
    \caption{f1 score on Abalone training data as a function of the number of clusters (right: Gaussian, middle Gauss copula, left: general)} 
    \label{Abalone_red}
\end{figure}

\begin{figure}[H]
    \centering
    \includegraphics[width=1\textwidth]{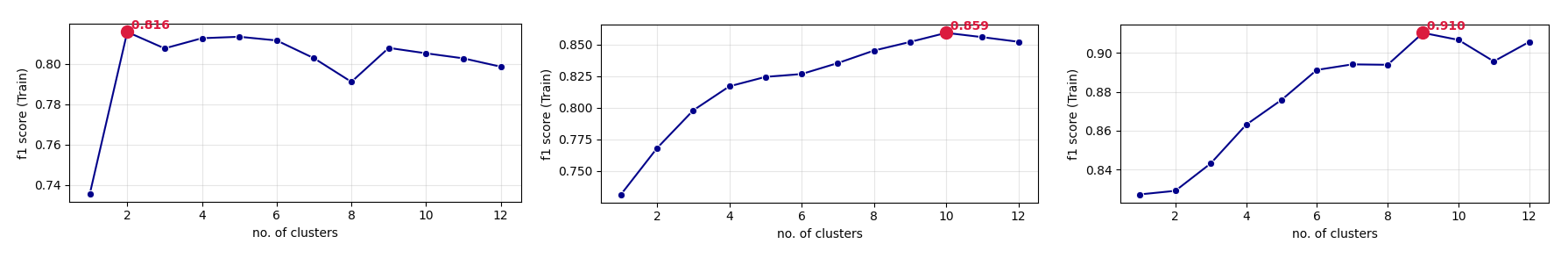}
    \caption{f1 score on Body fat training data as a function of the number of clusters (right: Gaussian, middle Gauss copula, left: general)} 
    \label{Body fat_red}
\end{figure}

\begin{figure}[H]
    \centering
    \includegraphics[width=1\textwidth]{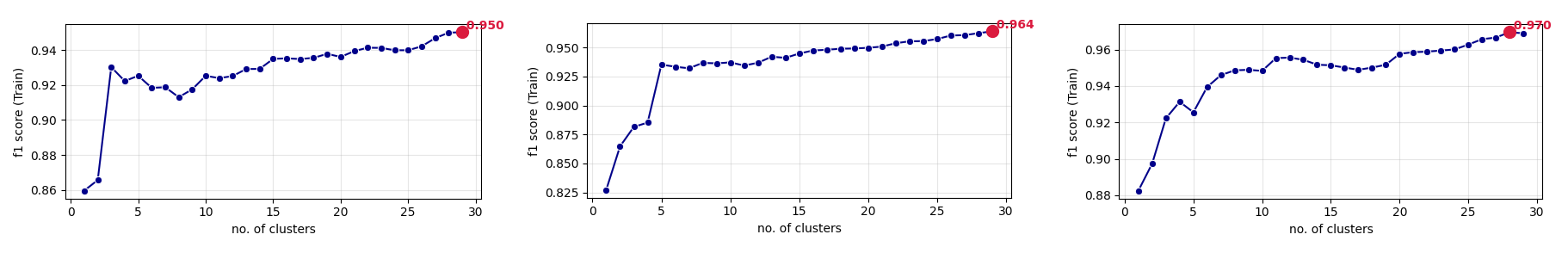}
    \caption{f1 score on WBCD training data as a function of the number of clusters (right: Gaussian, middle Gauss copula, left: general)} 
    \label{WBCD_red}
\end{figure}

\begin{figure}[H]
    \centering
    \includegraphics[width=1\textwidth]{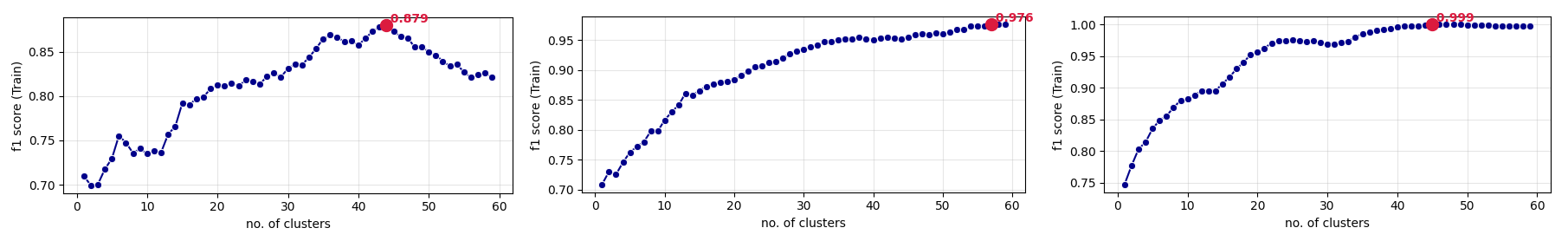}
    \caption{f1 score on Connect training data as a function of the number of clusters (right: Gaussian, middle Gauss copula, left: general)} 
    \label{Connect_red}
\end{figure}

\begin{figure}[H]
    \centering
    \includegraphics[width=1\textwidth]{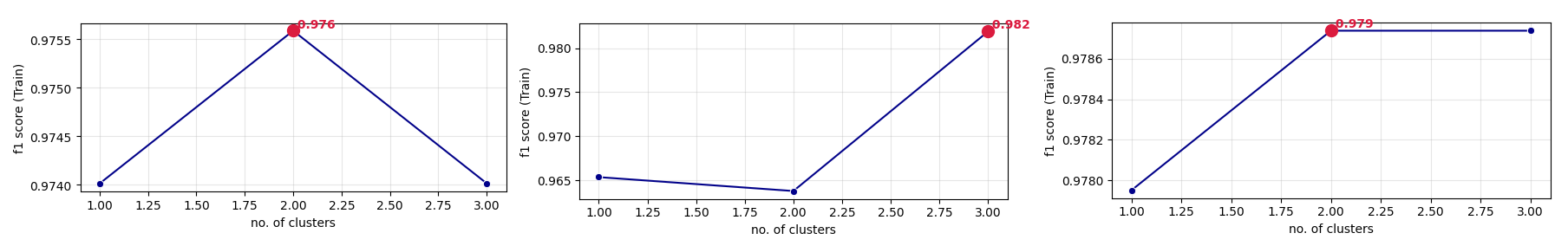}
    \caption{f1 score on Iris training data as a function of the number of clusters (right: Gaussian, middle Gauss copula, left: general)} 
    \label{Iris_red}
\end{figure}

\begin{figure}[H]
    \centering
    \includegraphics[width=1\textwidth]{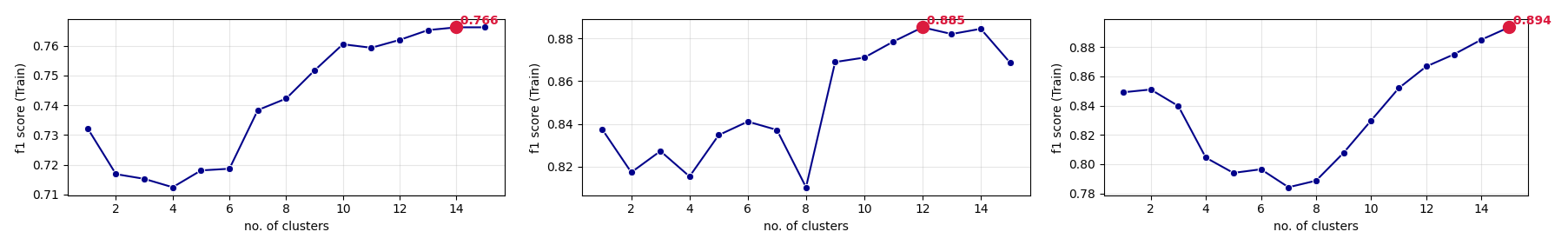}
    \caption{f1 score on Parkinson training data as a function of the number of clusters (right: Gaussian, middle Gauss copula, left: general)} 
    \label{Parkinson_red}
\end{figure}

\begin{figure}[H]
    \centering
    \includegraphics[width=1\textwidth]{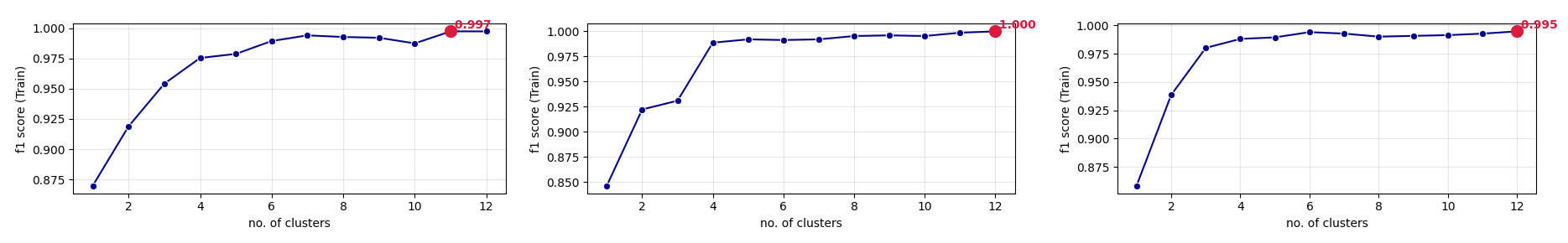}
    \caption{f1 score on Wine training data as a function of the number of clusters (right: Gaussian, middle Gauss copula, left: general)} 
    \label{Wine_red}
\end{figure}
\begin{figure}[H]
    \centering
    \includegraphics[width=1\textwidth]{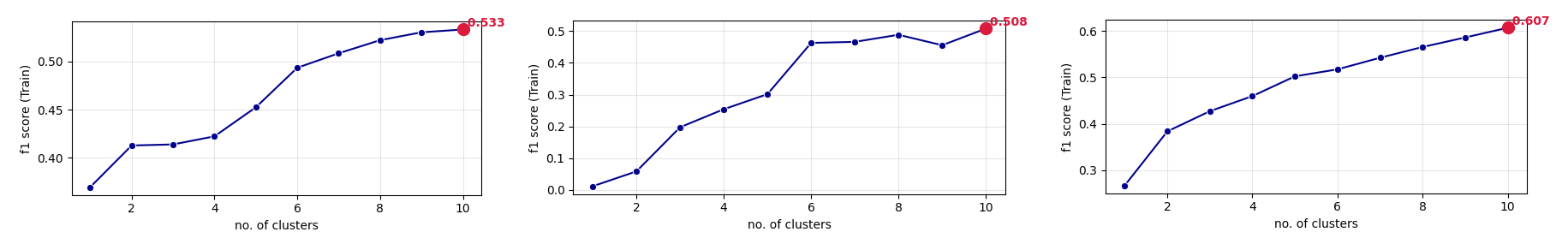}
    \caption{f1 score on Wine quality training data as a function of the number of clusters (right: Gaussian, middle Gauss copula, left: general)} 
    \label{Red wine quality}
\end{figure}

\section{Pair diagnostic graphs}
\label{Apendix_diagnostics based on pairs}
Illustration of the diagnostics for the Iris Data for the feature pairs used in the GNB structure under different assumptions/methods applied.
\begin{figure}[H]
    \centering
    \includegraphics[width=1\textwidth]{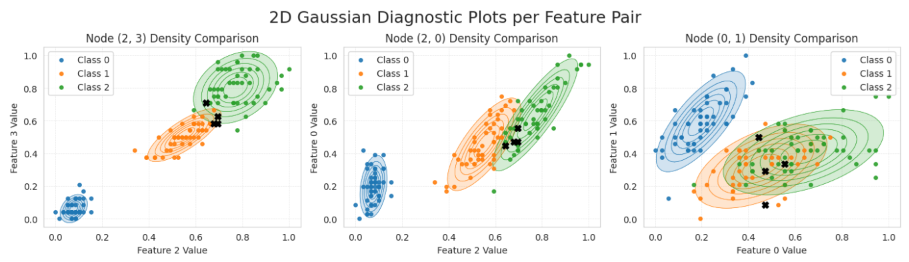}
    \caption{The good (colored) and the wrongly  classified points (black) in the chosen Gaussian marginal pairs from the Gauss GNB structure on the Iris \cite{iris_53} test set.}
    \label{Gauss_Iris}
\end{figure}

\begin{figure}[H]
    \centering
    \includegraphics[width=1\textwidth]{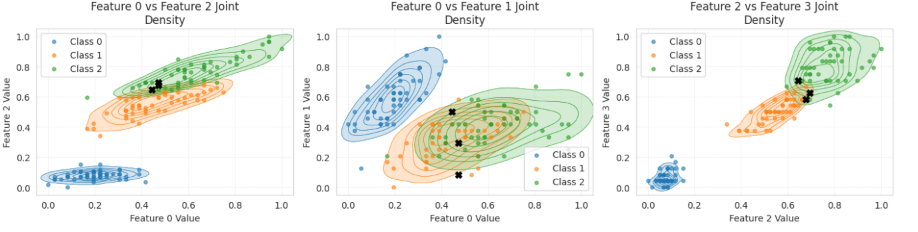}
    \caption{In the chosen pairs belonging to the GNB Gauss copula structure with kde uni-variate marginals, the good classified points are colored with respect to their class; the wrongly classified points are represented by black points, on the Iris \cite{iris_53} test set.}
    \label{Gauss copula Iris}
\end{figure}

\begin{figure}[H]
    \centering
    \includegraphics[width=1\textwidth]{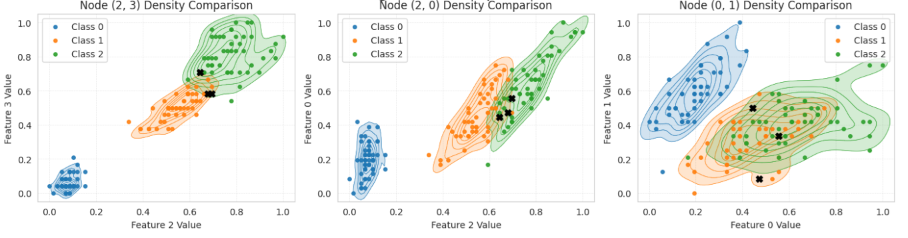}
    \caption{In the chosen pairs belonging to the KDE GNB structure (with kde bi-variate distribution) the good classified points are colored with respect to their class; the wrongly classified points are represented by black points, on the Iris \cite{iris_53} test set.}
    \label{KDE Iris}
\end{figure}

\begin{figure}[H]
    \centering
    \includegraphics[width=1\textwidth]{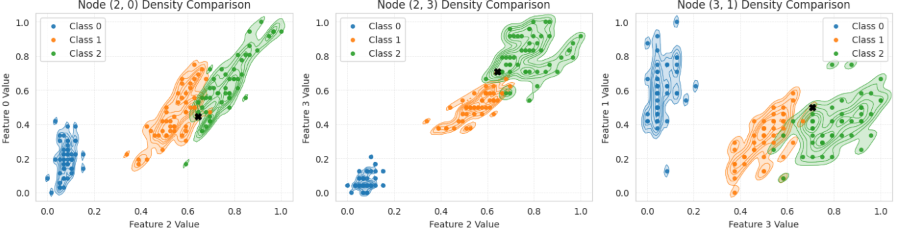}
    \caption{In the chosen pairs belonging to the KDE GNB structure with manually tuned parameters. The classified points are colored with respect to their class; the wrongly classified points are represented by black points, on the Iris \cite{iris_53} test set.}
    \label{KDE tuned Iris}
\end{figure}

\end{document}